\RequirePackage{amsmath}
\documentclass[runningheads]{llncs}
\usepackage[T1]{fontenc}
\usepackage{graphicx}

\usepackage[hidelinks]{hyperref}
\usepackage{color}

\usepackage{amssymb}
\usepackage{amsfonts}
\usepackage{xspace}
\usepackage{enumitem}
\usepackage[normalem]{ulem}
\usepackage[noend,ruled,linesnumbered]{algorithm2e}
\usepackage{placeins}
\SetKwInOut{Input}{input}
\SetKwInOut{Output}{output}
\let\oldnl\nl
\newcommand{\nlnonumber}{\renewcommand{\nl}{\let\nl\oldnl}}

\usepackage{array}
\usepackage{booktabs}
\usepackage{makecell}
\usepackage{multirow}
\usepackage{tikz}
\usetikzlibrary{arrows.meta, shapes, positioning}

\usepackage{float}

\usepackage{pgfplots}
\pgfplotsset{compat=1.18}

\newcommand{\A}{\mathcal{A}}
\newcommand{\C}{\mathcal{C}}
\newcommand{\E}{\mathcal{E}}

\newcommand{\I}{\mathcal{I}}
\renewcommand{\L}{\mathcal{L}}
\renewcommand{\O}{\mathcal{O}}
\renewcommand{\P}{\mathcal{P}}
\newcommand{\R}{\mathcal{R}}
\newcommand{\T}{\mathcal{T}}

\newcommand{\alphabet}{\mathrm{\Sigma}}
\newcommand{\predSet}{\alphabet_P}
\newcommand{\conceptSet}{\alphabet_C}
\newcommand{\roleSet}{\alphabet_R}
\newcommand{\constSet}{\alphabet_I}

\newcommand{\varSet}{\alphabet_V}

\newcommand{\Const}{\textit{Const}}

\newcommand{\ISA}{\sqsubseteq}
\newcommand{\theo}{\mathrm{\Phi}}

\newcommand{\aczero}{\mathrm{AC}^0}

\newcommand{\NP}{\textrm{NP}}

\newcommand{\PTIME}{\textrm{PTIME}}

\newcommand{\pidue}{\mathrm{\Pi}^p_2}

\newcommand{\policyclass}{C_p}
\newcommand{\policyarb}{\mathsf{P_A}}
\newcommand{\policylin}{\mathsf{P_L}}

\newcommand{\tup}[1]{\langle #1 \rangle}
\newcommand{\vseq}[1]{\vec{#1}}
\newcommand{\ra}{\rightarrow}

\newcommand{\atomrewr}{\mathsf{AtomRewr}}

\newcommand{\K}{\mathrm{K}}

\newcommand{\decprob}[1]{\textsc{#1}\xspace}
\newcommand{\GAEnt}{\decprob{GA-Ent}}
\newcommand{\IGAEnt}{\decprob{IGA-Ent}}
\newcommand{\CoreGAEnt}{\decprob{MPV-Ent}}

\newcommand{\modelsga}{\models_\mathsf{GA}}
\newcommand{\modelsiga}{\models_\mathsf{IGA}}
\newcommand{\modelsmpv}{\models_\mathsf{MPV}}
\newcommand{\modelscore}{\modelsmpv}
\newcommand{\optcens}{\mathsf{optCens}}

\newcommand{\dlliter}{\text{DL-Lite}_\R}
\newcommand{\EL}{\mathcal{EL}}
\newcommand{\RL}{\mathcal{RL}}

\renewcommand{\Im}{\textit{Im}}
\newcommand{\im}{\Im}
\newcommand{\gs}{\textit{gs}}

\newcommand{\body}{\mathsf{body}}
\newcommand{\head}{\mathsf{head}}

\newcommand{\eval}{\textit{eval}}

\newcommand{\true}{\texttt{true}}
\newcommand{\false}{\texttt{false}}

\newcommand{\censiga}{\C_{\mathsf{IGA}}^{\E}}
\newcommand{\censmpv}{\C_{\mathsf{MPV}}^{\E}}
\newcommand{\mpvcens}{\censmpv}
\newcommand{\corecens}{\mpvcens}

\newcommand{\clt}[1]{\mathsf{cl}_{\T}(#1)}
\newcommand{\modelseql}{\models_{\mathsf{EQL}}}

\newcommand{\qedex}{\hfill\ensuremath{\vrule height 4pt width 4pt depth 0pt}}

\newcommand{\disclosablealg}{\mathsf{Disclosable}}

\newcommand{\igaentalg}{\mathsf{IGA\text{-}Entails}}

\newcommand{\mpventdlliteralg}{\mathsf{MPV\text{-}Entails\text{-}DL\text{-}Lite_\R}}
\newcommand{\ampventdlliteralg}{\mathsf{AMPV\text{-}Entails\text{-}DL\text{-}Lite_\R}}

\newcommand{\constSymbol}[1]{\mathsf{#1}}
\newcommand{\predSymbol}[1]{\mathsf{#1}}
\newcommand{\minor}{\predSymbol{M}}

\newcommand{\disease}{\predSymbol{Dis}}
\newcommand{\pediatrician}{\predSymbol{P}}
\newcommand{\affectedBy}{\predSymbol{affBy}}
\newcommand{\treatedBy}{\predSymbol{trBy}}
\newcommand{\parentalCons}{\predSymbol{pConsFor}}
\newcommand{\parentOf}{\predSymbol{parOf}}
\newcommand{\bioParOf}{\predSymbol{bioParOf}}
\newcommand{\ann}{\constSymbol{ann}}
\newcommand{\bob}{\constSymbol{bob}}
\newcommand{\cloe}{\constSymbol{cloe}}
\newcommand{\disConst}{\constSymbol{d}}

\newenvironment{proofsk}{\noindent \textsl{Proof (sketch).\ }}{
}

\newif\ifdraft

\newif\ifshowproofs
\showproofstrue 

\newif\ifcameraready
\camerareadytrue

\newif\iflong
\longtrue 

\iflong\showproofstrue\fi

\usepackage{environ}
\NewEnviron{hiddenproof}{}
\ifshowproofs\else

\fi

\ifdraft
    \usepackage[draft,colorinlistoftodos]{todonotes}
    \newcommand{\nb}[1]{%
        {
        \todo[tickmarkheight=0.5em,size=\scriptsize,color=red!30,linecolor=red!70]{#1}}}
    \newcommand{\alert}[1]{{\color{red} #1}}
\else
    \newcommand{\nb}   [1]{}
    \newcommand{\todo} [1]{}
    \newcommand{\alert}[1]{}
\fi

\begin{document}
    \title{%
    \iflong
        Tractable Query Answering under Epistemic Confidentiality Policies in DL Ontologies (extended version)
    \else
        Tractable Query Answering under Epistemic Confidentiality Policies in Description Logic Ontologies
    \fi}
    %
    \titlerunning{Tractable Query Answering under Epistemic Confidentiality Policies in DL Ontologies}
    \author{\ifcameraready%
        Lorenzo Marconi
        \orcidID{0000-0001-9633-8476} 
        \and
        Daniela Rieti
        \and
        \newline
        Riccardo Rosati
        \orcidID{0000-0002-7697-4958}
    \else
        Anonymous authors
    \fi}
    \authorrunning{\ifcameraready%
        L.\ Marconi et al.
    \fi}
    %
    \institute{\ifcameraready%
        Sapienza University of Rome, Italy\\
        \email{\{marconi,rosati\}@diag.uniroma1.it}\\
        \email{rieti.1762973@studenti.uniroma1.it}
    \fi}
    \maketitle              
    \begin{abstract}
        We study Controlled Query Evaluation (CQE), a declarative approach to confidentiality-preserving data access, in the context of Description Logic (DL) 
        ontologies, and for confidentiality policies expressed through Epistemic Dependencies (EDs).
        We first address the problem of answering queries (specifically, Boolean unions of conjunctive queries) under known semantics for CQE (GA- and IGA-entailment). Our results show that if the TBox is expressed in $\dlliter$, CQE is computationally intractable in general.
        Moreover, in the presence of EDs, the IGA semantics has recently been proven not to satisfy an important confidentiality preservation property known as \emph{indistinguishability}.
        With the goal of defining computationally easier and confidentiality-preserving forms of CQE, we introduce a new semantics for CQE, based on the notion of \emph{minimal policy violation} (MPV). We show that the new semantics provides a sound approximation of the previous ones, while satisfying the indistinguishability property.
        We also prove that, in the case of $\dlliter$ ontologies, query entailment under the MPV semantics 
        can be decided in polynomial time in data complexity. 
        Finally, we present a software implementation of our framework that we used to evaluate the feasibility of this new approach using an existing benchmark for OWL~2~QL.
        
        \keywords{Description Logics \and Ontologies \and Confidentiality Preservation \and Query Answering \and Data Complexity \and Epistemic Dependencies.}
    \end{abstract}
    
    \section{Introduction}
\label{sec:introduction}

We study \emph{Controlled Query Evaluation} (CQE)~\cite{SiJR83,BiBo04}, a logical framework designed to allow answering queries over a database or knowledge base while keeping undisclosed pieces of knowledge that are considered sensitive. 
In recent years, several works on CQE focused on Description Logic ontologies~\cite{BoSa13,CKKZ13,CLRS24}, which is the setting we consider here.
In particular, we focus on ontologies whose intentional part is expressed in $\dlliter$~\cite{CDLLR07}, the logic underpinning the OWL~2~QL profile~\cite{W3Crec-OWL-Profiles}.

In CQE, the information to be protected is specified using logical formulas\iflong, which are \else\ \fi collected in the so-called \emph{policy}.
In the literature, policies are often expressed by means of \emph{denial assertions}, i.e.\ logical sentences of the form $q\ra\bot$, where $q$ is a Boolean conjunctive query (BCQ). Intuitively, such assertions are intended to prevent the disclosure of $q$ to the end user, even indirectly through an unbounded sequence of queries. 
The recent work~\cite{CLMRS24} proposed the usage of \emph{epistemic dependencies} (EDs) for enhancing the expressivity of the data protection policy. An ED is a logical formula of the form $\forall \vseq{x}\,(\K q_b\ra\K q_h)$, where both $q_b$ and $q_h$ are (possibly open) CQs whose free variables occur in $\vseq{x}$, and $\K$ denotes an epistemic operator. Intuitively, for every instantiation of the variables $\vseq{x}$, such a formula prevents the user from inferring $q_b$ unless $q_h$ can also be inferred.

In CQE, the key notion used to represent disclosable information is called \emph{censor}. A censor is a set of logical sentences that, even when combined with the intensional knowledge provided by the TBox, cannot be used to infer sensitive information. Such sentences may be expressed in different formalisms, like the language of BCQs (as in the aforementioned work~\cite{CLMRS24}) or ground atoms. Specifically, censors consisting of ground atoms (also referred to as \emph{GA censors}) provide a natural representation of disclosable information, since they can be viewed as an ABox.
GA censors are called \emph{optimal} when maximal w.r.t.\ set inclusion.

\begin{example}
    \label{ex:intro}
    A hospital does not want to disclose the fact that a minor ($\minor$) is affected by ($\affectedBy$) some disease.
    Furthermore, the fact that someone is a minor can be disclosed only with parental consent ($\parentalCons$).
    Finally, parental relationships ($\parentOf$) may be revealed only if they are biological ($\bioParOf$).
    Consider a policy $\P$ made of three EDs, respectively encoding the above protection rules:
    \iflong
    \[
    \begin{array}{r@{\,}l}
        \forall x,y & (\K(\minor(x)\land\affectedBy(x,y)) \ra \bot) \\
        \forall x & (\K\minor(x) \ra \K\exists y\,\parentalCons(y,x)) \\
        \forall x,y & (\K\parentOf(y,x) \ra \K\bioParOf(y,x))
    \end{array}
    \]
    \else
    \[
    \begin{array}{r@{\,}l@{\qquad}r@{\,}l}
        \forall x,y & (\K(\minor(x)\land\affectedBy(x,y)) \ra \bot) &
        \forall x & (\K\minor(x) \ra \K\exists y\,\parentalCons(y,x)) \\
        \multicolumn{4}{c}{
            \forall x,y \, (\K\parentOf(y,x) \ra \K\bioParOf(y,x))
        }
    \end{array}
    \]
    \fi
    The intensional knowledge is provided by means of a DL TBox $\T$ allowing to infer that if an individual $x$ gave a parental consent for $y$, then $x$ is a parent of $y$ ($\parentalCons\ISA\parentOf$), and that if $x$ is affected by $y$, then $y$ is a disease ($\exists\affectedBy^-\ISA\disease$). Finally, the ground data is contained in the DL ABox $\A$, which contains the fact that $\ann$ is a minor, she is affected by the disease $\disConst$, and $\bob$ gave his consent to reveal that $\ann$ is a minor: $\A=\{\minor(\ann),\allowbreak\affectedBy(\ann,\disConst),\allowbreak\parentalCons(\bob,\ann)\}$.
    Note that the facts $\parentOf(\bob,\ann)$ and $\disease(\disConst)$ are 
    also logical consequences of $\T\cup\A$.
    Since the fact that $\bob$ is the biological parent of $\ann$ cannot be revealed (because it is not entailed by the ontology), then also $\parentalCons(\bob,\ann)$ (which implies $\parentOf(\bob,\ann)$ via $\T$) must be kept undisclosed; in turn, the same applies to $\minor(\ann)$. In this case, we have only one optimal GA censor, i.e.\ $\{\affectedBy(\ann,\disConst),\disease(\disConst)\}$, containing the only two facts that can be revealed.
    
    Differently, considering a policy $\P'$ containing instead only the first ED, two optimal GA censors would exist, consisting of all the facts that are logical consequences of the ontology except, respectively, $\minor(\ann)$ and $\affectedBy(\ann,\disConst)$.
    \qedex
\end{example}

The paper~\cite{MRR25} studied the properties of GA censors in the presence of EDs and the complexity of entailment of Boolean unions of conjunctive queries (BUCQs) under the so-called \emph{GA-} and \emph{IGA-entailment} semantics, which we also use as a baseline for our analysis. The former consists of checking whether every optimal GA censor, together with the given TBox, logically entails the input query; the latter, on the other hand, checks whether the query is entailed by the TBox and the intersection of all the optimal GA censors.
That paper focused on $\dlliter$ ontologies and (combinations of) different subclasses of EDs, namely linear, full, and acyclic dependencies, showing nice computational properties for IGA-entailment (in the case where EDs are either linear-full or acyclic-full), but lacking the whole expressivity offered by EDs in their general form. 

In this paper, for $\dlliter$ ontologies and arbitrary EDs, we show that both GA- and IGA-entailment of BUCQs are intractable: specifically, they are $\pidue$-complete in data complexity, with hardness already holding even for an empty TBox.
Arbitrary EDs have a second important impact on the above entailment semantics: they do not enjoy a crucial property related to confidentiality preservation, known as \emph{indistinguishability}~\cite{BiBo04}.
This property, already studied for EDs in~\cite{CLMRS24}, ensures that it is always possible to 
provide an ABox that contains no sensitive information, while guaranteeing that the CQE system with this new ABox as input behaves identically to the original one for every possible query.

To overcome these limitations, we first continue investigating the linear case (i.e.\ the case where only one atom occurs in $q_b$), without \iflong imposing \fi the additional restrictions of~\cite{MRR25}.
In this scenario, we observe that both GA- and IGA-entailment satisfy the indistinguishability property, and we show that both the BUCQ entailment problems become tractable in data complexity for $\dlliter$ ontologies.
Linear EDs alone are, however, rather limited for expressing censoring rules: they cannot even capture denial assertions, \iflong which serve as \fi the \textit{de facto} baseline in most CQE studies\iflong, since they represent \else\ and \fi one of the most natural ways to specify protection rules.

Thus, we propose a new approach, based on the notion of \emph{minimal policy violation} (MPV). 
Each MPV is a minimal set of facts that are logical consequences of $\T\cup\A$ and that cause the violation of at least one ED in a given set of facts $\A'$. Starting from a set $\A'$ containing all the logical consequences of $\T\cup\A$, 
we iteratively remove all occurring MPVs. The fixpoint of such an iterative approach results in a GA censor, which we call an \emph{MPV censor}.
We show that the resulting notion of MPV-entailment: $(i)$ soundly approximates GA- and IGA-entailment and notably, coincides with IGA-entailment when the dependencies are linear and the TBox is in $\dlliter$, or when the policy consists of denials; $(ii)$ satisfies the indistinguishability property; $(iii)$ is tractable (\PTIME-complete) in data complexity for $\dlliter$. 

Finally, we present an experimental evaluation of MPV-entailment, focusing on the tractable setting identified in this paper, namely $\dlliter$. Unlike existing implementations~\cite{BC25,MRR25}, which mainly rely on query rewriting techniques, \iflong our approach involves the construction of \else we construct \fi the MPV censor through suitable SQL-based data manipulation queries. Once the MPV censor is computed, query entailment is verified using standard techniques.
The experiments are based on OWL2Bench~\cite{SBM20}, a benchmark that supports the generation of custom-sized ABoxes, which allowed us to test the scalability of our approach w.r.t.\ the amount of ground data.

The paper is organized as follows. 
After recalling preliminary notions, in Section~\ref{sec:framework} we formally present our CQE framework. 
Then, in Section~\ref{sec:theoretical-results} we analyze the computational properties of GA- and IGA-entailment in our framework. 
In Section~\ref{sec:mpv} we introduce the new MPV semantics for CQE and study the computational properties of 
MPV-entailment for lightweight DLs.
In Section~\ref{sec:experiments} we present our experimental evaluation of MPV-entailment in the context of $\dlliter$ ontologies. 
We conclude in Section~\ref{sec:conclusions}.

\ifcameraready
\else
\fi
    \section{Preliminaries}
\label{sec:preliminaries}
We refer to standard notions of First-Order (FO) Logic and Description Logic (DL).
We consider an alphabet $\alphabet$ of symbols partitioned into three countably infinite subsets $\predSet$, $\varSet$, and $\constSet$, used respectively for denoting predicates, variables, and constant symbols (or individuals). In turn, $\predSet$ is partitioned into two mutually disjoint sets $\conceptSet$ and $\roleSet$, containing \emph{concept} and \emph{role} names, i.e., unary and binary predicates. 
An \emph{atom} $\alpha$ is a formula of the form $P(\vseq{t})$ where, $P\in\predSet$ and $\vseq{t}$ is a sequence of \emph{terms} (i.e.\ symbols from $\constSet\cup\varSet$). 
$\alpha$ is called \emph{ground} (or a \emph{fact}) if $t_i\in\constSet$ for every $t_i\in\vseq{t}$.
An FO \emph{sentence} is a formula without free variables. To make explicit the free variables $\vseq{x}$ of a formula $\phi$, we write $\phi(\vseq{x})$.
Given an FO theory (set of sentences) $\theo$, we denote by $\Const(\theo)$ the set of constants occurring in the formulas of $\theo$.
We write $\eval(\phi,\I)$ to indicate the evaluation of an FO sentence $\phi$ over an FO interpretation $\I$.
A \emph{model} of an FO theory $\Phi$ is an FO interpretation satisfying all sentences in $\Phi$. We say that $\Phi$ \emph{entails} an FO sentence $\phi$, denoted by $\Phi \models \phi$, if $\eval(\phi,\I)$ is true in every model $\I$ of $\Phi$.

We use the term \emph{query} as a synonym of FO formula. We consider \emph{conjunctive queries} (CQs), i.e., queries of the form $q=\exists \vseq{x}\,\phi(\vseq{y})$, where $\phi$ is a conjunction of atoms and $\vseq{x}\subseteq\vseq{y}$. When $\vseq{x}=\vseq{y}$, we call $q$ a \emph{Boolean conjunctive query} (BCQ).
\emph{(Boolean) unions of conjunctive queries}, or (B)UCQs, are disjunctions of (Boolean) conjunctive queries $q_1(\vseq{y})\lor\ldots\lor q_k(\vseq{y})$. 
We also consider the special ground CQ $\bot$ assuming that $\eval(\bot,\I)$ is false for every FO interpretation $\I$.

A \emph{DL ontology} is an FO theory $\O=\T\cup\A$, where $\T$ is called \emph{TBox} and $\A$ is called \emph{ABox}. Every ABox is a set of facts, 
whereas the shape of a TBox depends on the specific DL considered. The main complexity results provided in this work focus on $\dlliter$~\cite{CDLLR07}. 
In such a DL, a \emph{basic role} $R$ is either a symbol $P\in\roleSet$, or its \emph{inverse} $P^-$.
A \emph{basic concept} $C$ is either a symbol of $\conceptSet$ or a so-called \emph{unqualified existential restriction} $\exists R$, for some basic role $R$.
Then, a $\dlliter$ TBox $\T$ is a finite set of inclusion and disjointness axioms between basic concepts or basic roles. Specifically, these assertions take the forms $C_1 \ISA C_2$, $C_1 \ISA \neg C_2$, $R_1 \ISA R_2$, and $R_1 \ISA \neg R_2$, where $C_1, C_2$ are basic concepts and $R_1, R_2$ are basic roles.


Given an ontology $\O = \T\cup\A$ we write $\clt{\A}$ to denote the set of all ground atoms $\alpha$ such that $\T\cup\A\models\alpha$.
Moreover, given a BUCQ $q$ and an ontology $\T\cup\A$, we call \emph{image of $q$ in $\A$ w.r.t.\ $\T$} any $\subseteq$-minimal subset $I\subseteq\A$ such that $\T\cup I\models q$. We denote by $\im(q,\A,\T)$ the set of all such images.

All our complexity results refer to data complexity~\cite{V82}, i.e.\ 
the complexity with respect to the size of the ABox.
In the case of $\dlliter$, standard entailment of a BUCQ is in $\PTIME$ (actually in $\aczero$~\cite{CDLLR07}) in data complexity.

    \section{CQE Framework}
\label{sec:framework}

In CQE, data protection rules are expressed in terms of logical formulas. In particular, we adopt epistemic dependencies~\cite{CLMRS24}, which are EQL-Lite(CQ)~\cite{CDLLR07b} sentences defined as follows.

\begin{definition}[Epistemic dependency]
\label{def:epistemic-dependency}
    An \emph{epistemic dependency (ED)} is a sentence $\tau$ of the \iflong following \fi form
    \begin{center}
        $\forall \vseq{x}_1,\vseq{x}_2\,(\K q_b(\vseq{x}_1,\vseq{x}_2) \rightarrow \K q_h(\vseq{x}_2))$
    \end{center}
    where $q_b(\vseq{x}_1,\vseq{x}_2)$ 
    and $q_h(\vseq{x}_2)$ 
    are CQs, and
    $\K$ denotes an epistemic operator.\iflong\footnote{Observe that all the free variables of $q_b$ and $q_h$ are universally quantified outside the scope of the implication.}\fi\ 
    Given an ED $\tau$ 
    of the above form, $\body(\tau)$ denotes $q_b$ and $\head(\tau)$ denotes $q_h$.
\end{definition}
Then, we call a finite set of EDs a \emph{protection policy} (or simply \emph{policy}).
An ED $\tau$ is called \emph{linear} if $\body(\tau)$ contains only one atom. We indicate with $\policyarb$ and $\policylin$ the classes of policies consisting, respectively, of arbitrary and linear EDs. 
Moreover, given an ED $\tau$, we call \emph{ground substitution} for $\tau$ any mapping of its universally quantified variables to constants.
Then, $\gs(\tau)$ denotes the set of all the ground substitutions for $\tau$ and, given a set $\Gamma$ of constants, $\gs(\tau,\Gamma)$ denotes the finite subset of $\gs(\tau)$ of substitutions over the constants of $\Gamma$.

An FO theory $\theo$ is said to \emph{satisfy} an ED $\tau$ (written $\theo \modelseql \tau$) if, for every $\sigma\in\gs(\tau)$, whenever $\theo \models \body(\sigma(\tau))$ holds, then also $\theo \models \head(\sigma(\tau))$ holds.
If $\theo$ satisfies all EDs in a policy $\P$, we say that $\theo$ \emph{satisfies} $\P$, and write $\theo \modelseql \P$.

We remark that the above definition of satisfaction exploits the following property of the epistemic operator $\K$ in EQL: for every FO theory $\theo$ and for every formula of the form $\K\phi$ such that $\phi$ is an FO sentence, $\theo\modelseql \K\phi$ iff $\theo\models\phi$. Hence, the epistemic operator allows for expressing entailment with respect to an FO theory.
The next definitions exploit such a property so that, through the use of the epistemic operator $\K$ in EDs, the disclosability of a formula will be strictly related to the notion of FO entailment of the formula in the ontology.


The triple $\E=\tup{\T,\P,\A}$, called a \emph{CQE instance}, contains the three main components of our framework. Here, $\T$ is a DL TBox, $\P$ is a policy, and $\A$ is an ABox such that $\T \cup \A$ is consistent. To emphasize that $\T$ is a TBox for a certain DL $\L$, we also call $\E$ an \emph{$\L$ CQE instance}.

In general, it may happen that $\T \cup \A \not\modelseql \P$, meaning that the ontology does not comply with the policy. To prevent the disclosure of protected information, we adopt the notion of GA censor~\cite{CLRS24}, which models a piece of ground knowledge that can be disclosed without violating the specified protection policy.

\begin{definition}[GA censor]
\label{def:ga-censor}
    Given a CQE instance $\E = \tup{\T,\P,\A}$, a \emph{ground atom (GA) censor} of $\E$ is any subset $\C \subseteq \clt{\A}$ such that $\T \cup \C \modelseql \P$.
    Furthermore, we call $\C$ \emph{optimal} if no set $\C' \supset \C$ is a GA censor of $\E$.
    We denote by $\optcens(\E)$ the set of all the optimal GA censors of $\E$.
\end{definition}

Since a given CQE instance may, in general, admit multiple optimal GA censors, a policy-protected query answering framework must be equipped with a formal semantics specifying how 
censors are to be used in determining query entailment.
For GA censors, the most studied \iflong query answering \fi semantics are the following ones.

\begin{definition}[GA- and IGA-entailment]
    For a CQE instance $\E=\tup{\T,\P,\A}$ and a BUCQ $q$, we say that $q$ is
    \begin{itemize}
        \item \emph{GA-entailed} by $\E$ (in symbols, $\E\modelsga q$) if $\T\cup\C\models q$ for each $\C\in\optcens(\E)$;
        \item \emph{IGA-entailed} by $\E$ (in symbols, $\E\modelsiga q$) if $\T\cup\censiga\models q$, where $\censiga$ denotes the set $\bigcap_{\C\in\optcens(\E)}\C$.
    \end{itemize}
\end{definition}

We now introduce
the decision problems related to the 
semantics defined above, parameterized with respect to a DL $\L$ and a class of policies $\policyclass$.
We define $\GAEnt[\L,\policyclass]$ (resp., $\IGAEnt[\L,\policyclass]$) as the problem of verifying whether, given an $\L$ CQE instance $\E=\tup{\T,\P,\A}$ such that $\P\in\policyclass$ and a BUCQ $q$, $\E\modelsga q$ (resp., $\E\modelsiga q$).

Finally, we recall a fundamental property for CQE, namely indistinguishability~\cite{BiBo04}, \iflong which has \fi already been studied for EDs in~\cite{CLMRS24}. 
When enjoyed, this property ensures that a CQE system produces the same answers as a system whose ABox $\A'$ is free of sensitive information.

\begin{definition}[Indistinguishability]
\label{def:indistinguishability}
    \newcommand{\modelsblank}{\models_{\Diamond}}
    Let $\L$ be a DL and $\policyclass$ be a class of policies.
    A CQE entailment relation $\modelsblank$ satisfies the \emph{indistinguishability} property for $\L$ and $\policyclass$ if, for every $\L$ CQE instance $\E=\tup{\T,\P,\A}$ such that $\P\in\policyclass$, there exists a CQE instance $\E'=\tup{\T,\P,\A'}$ such that $\T\cup\A'\modelseql\P$ and, for every BUCQ $q$, $\E\modelsblank q$ iff $\E'\modelsblank q$.
\end{definition}

    \section{GA- and IGA-Entailment}
\label{sec:theoretical-results}

\newcommand{\chase}{\textit{chase}}
\newcommand{\lpv}{\textit{LPV}}
\newcommand{\lpvz}{\textit{LPV}_0}

We now study the data complexity of GA- and IGA-entailment in our CQE framework, first for arbitrary policies and then for linear policies.

\iflong
\subsection{Arbitrary Epistemic Dependencies}
\else
\subsubsection*{Arbitrary Epistemic Dependencies}
\fi
\label{sec:ga-iga-arbitrary}


We start by showing that both IGA- and GA-entailment are hard for the second level of the polynomial hierarchy, independently of the chosen DL.
This can be shown by adapting the proof of an analogous result provided in~\cite[Thm.~4.7]{CM05} in the context of database repairs.

\begin{lemma}
\label{lem:linear+denial-iga-ga-lb}
    For every DL $\L$, both $\GAEnt[\L,\policyarb]$ and $\IGAEnt[\L,\policyarb]$ are $\pidue$-hard in data complexity.
\end{lemma}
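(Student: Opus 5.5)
We reduce from the $\pidue$-complete problem of deciding the truth of a 2-QBF $\Phi=\forall \vseq{X}\,\exists\vseq{Y}\,\psi$, where $\psi$ is a CNF formula, following the idea of~\cite[Thm.~4.7]{CM05}. We use an empty TBox, a \emph{fixed} policy $\P$ and a fixed query $q$. Only the ABox $\A_\Phi$ depends on $\Phi$, which gives data complexity. Since $\T=\emptyset$, every ABox is consistent and $\clt{\A_\Phi}=\A_\Phi$, so the construction is independent of $\L$ and the claim holds for every DL. The key idea is that optimal GA censors ``choose'' a truth assignment for $\vseq{X}$ through denial-like EDs. The EDs with non-trivial heads then make facts about $\vseq{Y}$ disclosable only if they come with an assignment satisfying all clauses.

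\textbf{Encoding.} For each variable $v$ we introduce two literal constants $\ell_v,\ell_{\neg v}$. The ABox contains:
\begin{itemize}
\item $\mathsf{True}(\ell)$ for every literal constant $\ell$, and $\mathsf{Compl}(\ell_v,\ell_{\neg v})$ for every variable $v$;
\item $\mathsf{Yv}(\ell)$ for the literals of $\vseq{Y}$-variables;
\item $\mathsf{Cl}(c)$ and $\mathsf{Cl}_0(c)$ for every clause $c$, and $\mathsf{Lit}(c,\ell)$ for each literal $\ell$ of $c$;
\item a fact $\mathsf{S}(a)$.
\end{itemize}
The policy contains the following EDs:
\begin{itemize}
\item $\forall l,l'\,(\K(\mathsf{True}(l)\land\mathsf{True}(l')\land\mathsf{Compl}(l,l'))\ra\bot)$, forbidding both a literal and its complement from being true;
\item $\forall c\,(\K\mathsf{Cl}(c)\ra\K\exists l\,(\mathsf{Lit}(c,l)\land\mathsf{True}(l)))$, so that a clause fact is disclosable only if the clause is satisfied;
\item $\forall l,c\,(\K(\mathsf{True}(l)\land\mathsf{Yv}(l)\land\mathsf{Cl}_0(c))\ra\K\mathsf{Cl}(c))$, so that disclosing any $\vseq{Y}$-literal forces disclosure of all clause facts;
\item $\forall z\,(\K\mathsf{S}(z)\ra\K\exists l\,(\mathsf{True}(l)\land\mathsf{Yv}(l)))$.
\end{itemize}
The query is $q=\mathsf{S}(a)$.

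\textbf{Key steps.} First, we observe that the facts $\mathsf{Compl}$, $\mathsf{Lit}$, $\mathsf{Yv}$, $\mathsf{Cl}_0$ never occur in a violation that their removal could fix better than removing another fact. In particular, a violation of the first ED can also be fixed by dropping a $\mathsf{True}$ fact instead of $\mathsf{Compl}$. Every optimal censor should therefore contain all of them; we would argue this via a simple exchange argument. Second, for each optimal censor $\C$, the $\mathsf{True}$ facts on $\vseq{X}$-literals form a \emph{total} assignment $\alpha$. Dropping both literals of some $x$ is never needed, because $\vseq{X}$-literals only help satisfy clause EDs, so maximality forces one of them in. Conversely, every assignment $\alpha$ of $\vseq{X}$ is realized by some optimal censor, obtained by extending $\{\mathsf{True}(\ell)\mid \ell\in\alpha\}$ maximally. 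Third, fix $\alpha$ and consider censors realizing it:
\begin{itemize}
\item If some $\vseq{Y}$-literal is disclosed, all $\mathsf{Cl}(c)$ must be disclosed. Each clause then needs a true disclosed literal, so the disclosed (consistent, by the first ED) literals form a partial assignment satisfying $\psi$ under $\alpha$.
\item If $\psi[\alpha]$ is satisfiable, the censor containing $\alpha$, a satisfying $\vseq{Y}$-assignment, all $\mathsf{Cl}(c)$ and $\mathsf{S}(a)$ is a censor. Any censor with $\alpha$ and no $\vseq{Y}$-literal is strictly contained in one of this form, hence not optimal.
\end{itemize}
Hence an optimal censor containing $\alpha$ contains a $\vseq{Y}$-literal, and thus (by maximality) $\mathsf{S}(a)$, iff $\psi[\alpha]$ is satisfiable. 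Finally, we conclude that $\Phi$ is true iff $\mathsf{S}(a)$ belongs to every optimal censor. Since $q$ is a single ground atom and $\T=\emptyset$, this is equivalent both to $\E\modelsga q$ and to $\E\modelsiga q$, since $\mathsf{S}(a)\in\censiga$ iff it is in every optimal censor. This yields hardness for both problems simultaneously.

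\textbf{Main obstacle.} The delicate part is the second and third steps, i.e., characterizing optimal censors precisely. We must show that maximality cannot produce ``degenerate'' censors, e.g.\ ones dropping some $\mathsf{Cl}_0$ or $\mathsf{Yv}$ fact to avoid the third ED, or leaving an $\vseq{X}$-variable unassigned. The first issue would break the correspondence. I would handle it by checking that any such censor is strictly extendable. If that check fails, I would instead move the offending facts into the head side, e.g.\ by replacing $\mathsf{Cl}_0(c)$ with a constant-free fixed pattern, or by duplicating those facts so that dropping them is never minimal.
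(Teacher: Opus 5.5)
\paragraph{The characterization of optimal censors fails.} Your reduction breaks in the direction ``$\Phi$ true $\Rightarrow$ $\mathsf{S}(a)$ lies in every optimal censor''. The cause is exactly the degenerate case you flagged, and the check you planned fails. Optimality is $\subseteq$-maximality, so no exchange argument is available. A maximal censor can keep $\mathsf{True}(\ell)$ for a $\vseq{Y}$-literal while omitting the guard $\mathsf{Yv}(\ell)$ (or $\mathsf{Cl}_0(c)$) of your third ED, and the guard can then never be added back. Your first key step is therefore false, and with it the claim ``if some $\vseq{Y}$-literal is disclosed, all $\mathsf{Cl}(c)$ must be disclosed''.

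Concretely, take the true QBF $\Phi=\exists y\,(y)$ with the single clause $c=(y)$. The set $\C=\{\mathsf{True}(\ell_{\neg y}),\mathsf{Compl}(\ell_y,\ell_{\neg y}),\mathsf{Lit}(c,\ell_y),\mathsf{Cl}_0(c),\mathsf{Yv}(\ell_y)\}$ is a GA censor. Every censor $\C'\supseteq\C$ must exclude:
\begin{itemize}
\item $\mathsf{True}(\ell_y)$, by the first ED;
\item hence $\mathsf{Cl}(c)$, by the second ED;
\item hence $\mathsf{Yv}(\ell_{\neg y})$, by the third ED;
\item hence $\mathsf{S}(a)$, by the fourth ED.
\end{itemize}
So $\C$ is optimal and $\E\not\modelsga\mathsf{S}(a)$, although $\Phi$ is true.

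The same pattern works in general. Let $\beta$ be any total assignment to $\vseq{X}\cup\vseq{Y}$ falsifying some clause. Take all $\mathsf{Compl}$, $\mathsf{Lit}$, $\mathsf{Cl}_0$ facts, the $\mathsf{True}$ facts of $\beta$-true literals, the $\mathsf{Cl}$ facts of $\beta$-satisfied clauses, and the $\mathsf{Yv}$ facts of $\beta$-false literals only. This is an optimal censor without $\mathsf{S}(a)$. Hence your reduction answers ``no'' on every input whose matrix is not a tautology.

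The other direction is fine: $\Phi$ false $\Rightarrow$ some optimal censor omits $\mathsf{S}(a)$. You must, however, extend $\alpha$ together with all $\mathsf{Compl}$, $\mathsf{Yv}$, $\mathsf{Cl}_0$, $\mathsf{Lit}$ facts, not ``maximally'' from $\alpha$ alone. Otherwise the extension may drop $\mathsf{Compl}$ facts and fail to realize $\alpha$.

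\paragraph{What is missing.} Your fallbacks do not repair this. Duplicating facts does not help, because maximality does not prefer fewer omissions. Any atom in a multi-atom body of an ED with a non-trivial head is an escape hatch for a maximal censor.

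What you need is to make ``some existential variable's truth value is disclosed $\Rightarrow$ every clause has a disclosed true literal'' unavoidable. The paper does this using only EDs with single-atom bodies. Its one non-linear ED is the guard-free denial $\K(T(v,\mathsf{t})\land T(v,\mathsf{f}))\ra\K\bot$. The chain of requirements is:
\begin{itemize}
\item a truth fact $T(v,t)$ requires a position fact $\mathit{Pos}(v,p)$;
\item $\mathit{Pos}(v,p)$ requires a clause fact $C(p,i)$;
\item $C(p,i)$ requires the successor fact $S(i,j)$;
\item $S(i,j)$ requires a clause fact for $j$ and a satisfied-literal witness for clause $i$.
\end{itemize}
The $S$-facts form a single cycle through all clauses. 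So disclosing any piece forces the whole cycle, and hence a satisfying extension of the chosen universal assignment. The universal variables are attached to a separate dummy clause $0$ with self-loop $S(0,0)$, so their truth facts can be disclosed without triggering the main cycle. Your third ED needs to be replaced by guard-free cyclic propagation of this kind.
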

\iflong\begin{proof}
    \newcommand{\tval}{\mathsf{t}}
    \newcommand{\fval}{\mathsf{f}}
    \newcommand{\myconstant}{\mathsf{a}}
    \newcommand{\pos}{\mathit{Pos}}
    \newcommand{\pol}{\mathit{Pol}}
    \newcommand{\pconst}{\mathsf{p}}
    \newcommand{\Ix}{I_{\vseq{x}}}
    \newcommand{\Iy}{I_{\vseq{y}}}
    We prove the thesis by showing a reduction from 2-QBF, inspired by the proof of Theorem~4.7 of~\cite{CM05}. We assume that the given formula is in CNF.
    
    Let $\T=\emptyset$, and let $\P$ consist of the following EDs:
    \[ 
    \begin{array}{r@{\,}l}
        \forall v,t & (\K T(v,t) \ra \K\exists p\, \pos(v,p)) \\
        \forall v,p & (\K \pos(v,p) \ra \K\exists i\, C(p,i)) \\
        \forall p,t & (\K \pol(p,t) \ra \K\exists i\, C(p,i)) \\
        \forall v,i & (\K C(v,i) \ra \K\exists j\, S(i,j)) \\
        \forall i,j & (\K S(i,j) \ra \K\exists v\, C(v,j)) \\
        \forall i,j & (\K S(i,j) \ra \K\exists p,v,t\, (C(p,i) \land \pos(v,p) \land \\&\hspace{9.5em} \pol(p,t) \land T(v,t))) \\
        \forall v & (\K(T(v,\tval) \wedge T(v,\fval)) \ra \K\bot) .
    \end{array}
    \]
    Intuitively, the first four EDs impose what follows: every variable having a truth assignment must occur in some position; every position in which a variable $v$ occurs (or having polarity $t$) must occur in some clause; every clause must have a successor.
    The combination of the fourth and fifth EDs forces a loop of $S$-facts to collapse if one of them is missing.
    The sixth ED imposes that, if a clause $\psi_i$ has a successor $\psi_j$, then a variable $v$ must occur in $\psi_i$ such that the truth value of $v$ matches its polarity in $\psi_i$ (i.e., $\psi_i$ is satisfied). 
    Finally, the seventh ED guarantees that distinct truth values are not assigned to the same variable.
    
    Now let $\phi=\forall \vseq{x}\,\exists \vseq{y}\,(\psi_1\wedge\ldots\wedge\psi_n)$ be a 2-QBF, where every $\psi_i$ is a clause over the propositional variables 
    $\vseq{x}\cup\vseq{y}$.
    Moreover, $\A=\A_1\cup\A_2\cup\{S(1,\myconstant)\}$, where:
    \[
    \begin{array}{r@{\;}l@{}l@{}l}
        \A_1=&\{& S(0,0)\} \,\cup \\
            &\{& C(\pconst_x^0,0), \pos(\pconst_x^0,0), \pol(\pconst_x^0,\tval), \pol(\pconst_x^0,\fval), 
            T(x,\tval), T(x,\fval) \mid x\in\vseq{x} \}\\
        \A_2=&\{& S(i,(i \bmod n)+1) \mid 1\le i\le n
        \} \,\cup \\
           &\{& C(\pconst_v^i,i), \pos(v,\pconst_v^i), \pol(\pconst_v^i,\tval), T(v,\tval) \mid 
           v \textrm{ occurs positively in } \psi_i \} \,\cup \\
           &\{& C(\pconst_v^i,i), \pos(v,\pconst_v^i), \pol(\pconst_v^i,\fval), T(v,\fval) \mid 
           v \textrm{ occurs negatively in } \psi_i \} .
    \end{array}
    \]
    We prove that $\E \modelsga S(1,\myconstant)$ iff $\phi$ is valid, where $\E=\tup{\T,\P,\A}$.
    
    $(\Leftarrow)$
    First, for every interpretation $\Ix$ of $\vseq{x}$, let 
    \[
        \A(\Ix) = \A_1 \setminus \{T(x,\fval) \mid x\in\Ix\} \setminus \{T(x,\tval) \mid x\in\vseq{x}\setminus\Ix\}.
    \]
    
    Now, let us assume that $\phi$ is not valid, and let $\Ix$ be an interpretation of $\vseq{x}$ such that there exists no interpretation $\Iy$ of $\vseq{y}$ such that $\Ix\cup\Iy$ satisfies $\psi_1\wedge\ldots\wedge\psi_n$.
    We prove that $\A(\Ix)$ is an optimal GA censor of $\E$. Since $\T\cup\A(\Ix)\modelseql\P$, we only have to show that no further fact from $\clt{\A}$ (i.e.\ from $\A$) can be added to it without violating the policy $\P$.
    
    First, notice that adding any fact from $\A_1$ to $\A(\Ix)$ would violate the last dependency of $\P$.
    As for the $T$-facts contained in $\A_2$, adding one of them would either violate the last dependency as well, or cause the addition of at least one $C$-fact from $\A_2$ (because of the first dependency).
    However, if any fact from $\A_2\cup\{S(1,\myconstant)\}$ with predicate $C$, $S$, $\pol$, or $\pos$ is added to $\A(\Ix)$, then the second, third, fourth and fifth dependencies would eventually require to add, for every clause $\psi_i$, the corresponding fact $S(i,(i \bmod n)+1)$ from $\A_2$.
    Then, because of the sixth dependency, we would need to add, for every $\psi_i$, four facts of the form $C(p,i)$, $\pos(v,p)$, $\pol(p,t)$, $T(v,t)$ from $\A_2$ for at least one variable $v$ occurring in 
    $\psi_i$.
    However, due again to the last dependency of $\P$ and by construction of $\A_2$, this could be done (while preserving consistency) only if it is possible to find a truth assignment for the variables $\vseq{y}$ such that at least one variable for each clause $\psi_i$ is assigned to a value corresponding to its polarity in $\psi_i$, i.e.\ only if $\phi$ is valid, which is a contradiction.
    Consequently, $\A(\Ix)\in\optcens(\E)$, which implies that $\E\not\modelsga S(1,\myconstant)$.
    
    $(\Rightarrow)$
    Conversely, assume that $\phi$ is valid. Let $\Ix$ be any interpretation of $\vseq{x}$. Then, there exists an interpretation $\Iy$ of $\vseq{y}$ such that $\Ix\cup\Iy$ satisfies $\psi_1\wedge\ldots\wedge\psi_n$.
    Similarly as above, this is possible only if there exists a subset $\A'$ of $\A_2$ containing, for every $1\le i\le n$, at least five facts of the form $S(i,(i \bmod n)+1)$, $C(p,i)$, $\pos(v,p)$, $\pol(p,t)$, $T(v,t)$ 
    and such that, for every variable $v$, it does not contain both $T(v,\tval)$ and $T(v,\fval)$.
    
    It is immediate to verify that $\T\cup\A'\modelseql\P$.
    Now, suppose that $S(1,\myconstant)\notin\C$ for some $\C\in\optcens(\E)$. 
    Since $\C$ is optimal, this is possible only if adding $S(1,\myconstant)$ violates the fourth dependency, i.e.\ if no fact of the form $C(\_,1)$ of $\A$ occurs in $\C$.
    Then, because of the fifth dependency, the fact $S(n,1)$ (i.e., the only fact of $\A$ of the form $S(\_,1)$) does not belong to $\C$.
    Analogously, $\C$ cannot contain any fact of the form $C(\_,n)$, nor $S(n-1,n)$, and so on: considering the effect of the first three dependencies as well, this iterative process forces \emph{all} facts with predicate $\pos$, $\pol$, $C$ and $S$ occurring in $\A_2$ to be excluded from $\C$, other than all the facts of the form $T(y,\_)$ such that $y\in\vseq{y}$. Note then that $\C\subseteq\A_1$, and recall that $\T\cup\C\modelseql\P$ by definition of GA censor. It is now straightforward to verify that $\T\cup\C\cup\A'\modelseql\P$, thus contradicting the optimality of $\C$. Consequently, $S(1,\myconstant)$ belongs to every optimal GA of $\E$, i.e.\ $\E \modelsga S(1,\myconstant)$.
    
    Finally, we recall that $\E\modelsga\alpha$ iff $\E\modelsiga\alpha$ when $\alpha$ is a ground atom; thus, both theses are proved.
    \qed
\end{proof}\fi

We then provide an upper bound for the same complexity class. 
We start by showing a fundamental property of $\censiga$. In the following, given a CQE instance $\E=\tup{\T,\P,\A}$, we say that a set of facts from $\clt{\A}$ is \emph{$\E$-disclosable} if it is contained in some GA censor of $\E$, and \emph{$\E$-undisclosable} otherwise. 

\begin{proposition}
\label{pro:iga-min-undisclosable}
    Let $\E=\tup{\T,\P,\A}$ be a CQE instance. For every $\alpha\in\clt{\A}$, $\alpha\in\censiga$ iff $\alpha$ does not belong to any $\subseteq$-minimal $\E$-undisclosable subset of $\clt{\A}$.
\end{proposition}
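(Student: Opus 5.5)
The plan is to argue directly from the definitions, using two structural facts about disclosability. First, $\E$-disclosability is downward closed by definition: if $S$ is contained in some GA censor, then so is every subset of $S$. This holds even though GA censors themselves need not be closed under subsets, since EDs are not monotone. Second, every GA censor is contained in some optimal GA censor. I would justify this by finiteness of $\clt{\A}$: all facts in $\clt{\A}$ use the finitely many predicates and constants occurring in $\T\cup\A$. So any chain of censors ordered by strict inclusion is finite, and a $\subseteq$-maximal censor above any given one exists. The same finiteness guarantees that every $\E$-undisclosable set contains a $\subseteq$-minimal $\E$-undisclosable subset.

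For the direction "$\alpha\in\censiga$ implies $\alpha$ is in no minimal undisclosable set", I would prove the contrapositive. Suppose $\alpha\in S$ for some $\subseteq$-minimal $\E$-undisclosable $S\subseteq\clt{\A}$. By minimality, $S\setminus\{\alpha\}$ is $\E$-disclosable, so it lies in some GA censor. That censor extends to an optimal censor $\C'$. If $\alpha\in\C'$, then $S\subseteq\C'$, contradicting undisclosability of $S$. Hence $\alpha\notin\C'$, and therefore $\alpha\notin\censiga$.

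For the converse, suppose $\alpha\in\clt{\A}$ but $\alpha\notin\censiga$, so $\alpha\notin\C$ for some $\C\in\optcens(\E)$. I claim $\C\cup\{\alpha\}$ is $\E$-undisclosable. Indeed, any GA censor containing $\C\cup\{\alpha\}$ would be a censor strictly larger than $\C$, contradicting the optimality of $\C$. By finiteness, choose a $\subseteq$-minimal $\E$-undisclosable $S\subseteq\C\cup\{\alpha\}$. Since every subset of $\C$ is disclosable, $S\not\subseteq\C$, so $\alpha\in S$. Thus $\alpha$ belongs to a minimal undisclosable subset.

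The only delicate point is the finiteness of $\clt{\A}$, which is needed both to extend censors to optimal ones and to extract minimal undisclosable subsets. I expect this to be the main (if mild) obstacle. I would handle it by noting that the entailed ground atoms range only over the signature and constants of $\T\cup\A$, so $\clt{\A}$ is finite. Everything else follows from the definitions and does not depend on the specific DL $\L$.
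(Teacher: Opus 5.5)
Your proof is correct and follows the same two-direction argument as the paper (which labels the directions the other way round). It is also slightly more careful at one point: where the paper stops at some GA censor containing $S\setminus\{\alpha\}$ but not $\alpha$, you extend it to an \emph{optimal} censor and check that $\alpha$ is still excluded, which is what is needed since $\censiga$ intersects only optimal censors.

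The only quibble is your finiteness justification. It holds for $\dlliter$ but is not DL-independent: an axiom $\top\ISA A$ makes $A(c)$ entailed for every constant $c$. So, like the paper, you are relying on the framework's implicit assumption that $\clt{\A}$ is finite.
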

\iflong\begin{proof}
    $(\Rightarrow)$
    Let $\alpha\notin\censiga$, which implies that $\alpha\notin\C$ for some $\C\in\optcens(\E)$.
    Since $\C$ is optimal, then $\C\cup\{\alpha\}$ is $\E$-undisclosable. Moreover, every $\subseteq$-minimal subset $S$ of $\C$ such that $S$ is $\E$-undisclosable (at least one of which does exist) contains $\alpha$.
    
    $(\Leftarrow)$
    Let $\alpha$ belong to some $\subseteq$ minimal $\E$-undisclosable subset $S$ of $\clt{\A}$.
    By minimality, we have that $S\setminus\{\alpha\}$ is contained in some GA censor $\C$ of $\E$, which obviously does not contain $\alpha$. Consequently, $\alpha\notin\censiga$.
    \qed
\end{proof}\fi

\begin{algorithm}
\caption{$\disclosablealg$}
\label{alg:disclosable}
    \Input{A CQE instance $\E=\tup{\T,\P,\A}$, a set of facts $\C$ such that $\C\subseteq\clt{\A}$;}
    \Output{A Boolean value;}
    \If{there exists a set of facts $\C'$ such that:
        \\\qquad $(i)$ $\C\subseteq\C'\subseteq\clt{\A}$ and
        \\\qquad $(ii)$ for each $\tau\in\P$ and $\sigma\in\gs(\tau,\Const(\A))$ 
        \\\qquad\qquad $\T\cup\C'\not\models\body(\sigma(\tau))$ or
        $\T\cup\C'\models\head(\sigma(\tau))$}{%
        \Return{$\true$;}}
    \Return{$\false$;}
\end{algorithm}

Algorithm~\ref{alg:disclosable} simply guesses a superset of $\C$ and checks that it is a GA censor of $\E$. Then, the next property immediately follows by definition of $\E$-disclosability.

\begin{lemma}
\label{lem:disclosable}
    Let $\E=\tup{\T,\P,\A}$ be a CQE instance and let $\C$ be a set of ground atoms such that $\C\subseteq\clt{\A}$. Then $\disclosablealg(\E,\C)$ returns $\true$ iff $\C$ is an $\E$-disclosable subset of $\clt{\A}$. 
\end{lemma}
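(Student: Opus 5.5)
The statement follows by unfolding the definitions, but the finite check on line (ii) of Algorithm~\ref{alg:disclosable} must be shown equivalent to full EQL satisfaction $\T\cup\C'\modelseql\P$. By Definition~\ref{def:ga-censor}, $\C$ is $\E$-disclosable iff there is a GA censor $\C'\supseteq\C$, that is, a set $\C'$ with $\C\subseteq\C'\subseteq\clt{\A}$ and $\T\cup\C'\modelseql\P$. Condition (i) is exactly the containment requirement. So the only thing left to prove is the following claim: for every $\C'\subseteq\clt{\A}$, condition (ii) holds iff $\T\cup\C'\modelseql\P$.

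\textbf{Direction from satisfaction to (ii).} If $\T\cup\C'\modelseql\P$, then for every $\tau\in\P$ and every $\sigma\in\gs(\tau)$, either the body is not entailed or the head is entailed. In particular this holds for every $\sigma\in\gs(\tau,\Const(\A))\subseteq\gs(\tau)$, which is condition (ii).

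\textbf{Direction from (ii) to satisfaction.} Let $\sigma\in\gs(\tau)$ map some universally quantified variable to a constant $c\notin\Const(\A)$. I will show that $\T\cup\C'\not\models\body(\sigma(\tau))$, so that $\sigma$ satisfies the ED vacuously.

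First note that $\C'\subseteq\clt{\A}$, so every fact in $\C'$ uses only constants of $\Const(\A)\cup\Const(\T)$. For $\dlliter$, and more generally for any TBox without constants, no constant of $\T$ appears; in that case the fresh constant $c$ does not occur in $\T\cup\C'$.

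Next I use the standard genericity argument. Suppose $\T\cup\C'\models\body(\sigma(\tau))$. Take any model $\I$ of $\T\cup\C'$. Changing the interpretation of $c$ yields another model, because $c$ does not occur in the theory. Hence $\T\cup\C'$ would entail the formula with $c$ universally quantified. In particular, in a model where $c$ is interpreted as an element outside the image of every atom, the body is false (provided $c$ actually occurs in the body). That is a contradiction.

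If $c$ occurs only in $\vseq{x}_2$ but not in the body, the body has no such variable, so all its variables occur among $\vseq{x}$ in the body... This subcase needs care, because the ED's variables $\vseq{x}_2$ appear in the body by Definition~\ref{def:epistemic-dependency}. Since $q_b(\vseq{x}_1,\vseq{x}_2)$ has all universally quantified variables free, $c$ appears in $\body(\sigma(\tau))$.

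\textbf{Expected obstacle.} This last step is the main point needing care: justifying that a body containing a constant outside $\Const(\T\cup\C')$ cannot be entailed. The argument relies on the unique name assumption being absent or irrelevant, via the existence of a model where $c$ denotes a fresh domain element. I would settle it by taking a model and adding a new domain element for $c$ that participates in no relations. Since $\T$ contains no constants and consists of FO sentences preserved under such disjoint extensions (e.g.\ $\dlliter$ inclusions, where the new element belongs to no concept or role), the result is still a model. In that model the body atom mentioning $c$ is false.

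With the claim established, the algorithm's test coincides with the existence of a GA censor containing $\C$, which proves the lemma.
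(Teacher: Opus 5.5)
Your proof is correct and follows the paper's route. The paper only notes that $\disclosablealg$ guesses a superset $\C'\supseteq\C$ within $\clt{\A}$ and checks that it is a GA censor, and treats the lemma as immediate; you additionally justify that restricting condition $(ii)$ to $\gs(\tau,\Const(\A))$ loses nothing, using a model where the fresh constant denotes an element in no concept or role. That extra step is exactly where your hypothesis (a constant-free TBox preserved under adding such an isolated element, as in $\dlliter$) is indispensable. For instance, with the nominal axiom $B\ISA\exists R.\{o\}$, $\A=\{B(a)\}$ and the ED $\forall x,y\,(\K R(x,y)\ra\K C(y))$, the restricted check accepts $\{B(a)\}$ although it is not $\E$-disclosable. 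So the lemma, stated for arbitrary CQE instances, does need your restriction; this is harmless because the paper only uses it for $\dlliter$.
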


%



\begin{algorithm}
\newcounter{cond}
\newcounter{subcond}[cond]
\newcommand{\cond}[1]{%
  \refstepcounter{cond}\label{#1}%
  \ref*{#1}\xspace%
}
\newcommand{\subcond}[1]{%
  \refstepcounter{subcond}\label{#1}%
  \ref*{#1}\xspace%
}
\renewcommand{\thecond}{\textit{(\alph{cond})}}
\renewcommand{\thesubcond}{\textit{(\alph{cond}\arabic{subcond})}}
\caption{$\igaentalg$}
\label{alg:iga-arbitrary-new}
    \Input{A CQE instance $\E=\tup{\T,\P,\A}$, a BUCQ $q$;}
    \Output{A Boolean value;}
    \If{there exist an integer $m\le|\clt{\A}|$ and sets $\A',S_1,\ldots,S_m \subseteq \clt{\A}$ \\\quad (with $\clt{\A}\setminus\A'=\allowbreak\{\alpha_1,\ldots,\alpha_m\}$) such that:
        \\\qquad\cond{cond:a} 
        $\T\cup\A'\not\models q$
        \\\qquad\cond{cond:b}
        for each $1\le i\le m$:
            \\\qquad\quad\subcond{cond:b1} 
            $\alpha_i\in S_i$
            \\\qquad\quad\subcond{cond:b2}
            $\disclosablealg(\E,S_i)=\false$
            \\\qquad\quad\subcond{cond:b3} for each $\beta\in S_i$,
            $\disclosablealg(\E,S_i\setminus\{\beta\})=\true$}{%
        \Return{$\false$;}}
    \Return{$\true$;}
\end{algorithm}

We are now ready to introduce the algorithm $\igaentalg$ (Algorithm~\ref{alg:iga-arbitrary-new}), which allows us to establish the next upper bound.


\begin{theorem}
\label{thm:iga-arbitrary-ub}
    $\IGAEnt[\dlliter,\policyarb]$ is $\pidue$-complete in data complexity.
\end{theorem}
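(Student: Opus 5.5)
Hardness is given by Lemma~\ref{lem:linear+denial-iga-ga-lb}, since a $\dlliter$ TBox (in particular the empty one) is allowed. So it remains to prove membership in $\pidue$ in data complexity. We do this by showing that Algorithm~\ref{alg:iga-arbitrary-new} is correct and that its complement is decidable in $\Sigma^p_2$. Concretely, the algorithm returns $\false$ exactly when an existential guess can be verified using a polynomial number of calls to an $\NP$ oracle.

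First we establish correctness: $\igaentalg(\E,q)$ returns $\false$ iff $\E\not\modelsiga q$. The argument rests on Proposition~\ref{pro:iga-min-undisclosable}, which gives $\clt{\A}\setminus\censiga=\{\alpha\in\clt{\A}\mid\alpha$ belongs to some $\subseteq$-minimal $\E$-undisclosable subset$\}$. By Lemma~\ref{lem:disclosable}, conditions (b2) and (b3) say exactly that $S_i$ is a $\subseteq$-minimal $\E$-undisclosable set, and (b1) says $\alpha_i\in S_i$.

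For the direction ``returns $\false$ $\Rightarrow$ $\E\not\modelsiga q$'': every removed atom $\alpha_i$ lies outside $\censiga$, so $\censiga\subseteq\A'$. Since BUCQs are monotone, (a) then gives $\T\cup\censiga\not\models q$.

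For the converse, assume $\E\not\modelsiga q$. Take $\A'=\censiga$ and, for each $\alpha_i\notin\censiga$, take as $S_i$ a minimal undisclosable set containing it, which exists by Proposition~\ref{pro:iga-min-undisclosable}. All conditions then hold.

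Next comes the complexity analysis, with $\T$, $\P$, $q$ fixed. In $\dlliter$, $\clt{\A}$ has polynomial size and is computable in polynomial time, as are the ground atoms over $\Const(\A)$ and the predicates of $\T$. Hence the guess of $m$, $\A'$ and $S_1,\dots,S_m$ is polynomial. Condition (a) is ordinary BUCQ entailment in $\dlliter$, so it is polynomial.

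$\disclosablealg$ is an $\NP$ procedure. It guesses $\C'\subseteq\clt{\A}$ and then checks, for every $\tau\in\P$ and every $\sigma\in\gs(\tau,\Const(\A))$ (polynomially many, since $\P$ is fixed), two CQ entailments in $\dlliter$, each in polynomial time. Restricting to $\Const(\A)$ is harmless: any substitution using a constant outside $\Const(\A)$ cannot make the body of the instantiated ED entailed by $\T\cup\C'$. This relies on the fact that $\T$ has no constants of its own in $\dlliter$.

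Condition (b3) needs positive answers, which is an $\NP$ query. Condition (b2) needs a negative answer, which is a $\mathrm{coNP}$ query. There are polynomially many such calls in total. The complement of $\igaentalg$ is therefore ``guess, then check with polynomially many $\NP$/$\mathrm{coNP}$ oracle calls'', which lies in $\NP^{\NP}=\Sigma^p_2$. Hence $\IGAEnt[\dlliter,\policyarb]\in\pidue$.

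The main obstacle is the correctness of the converse direction, namely that the minimal undisclosable sets really capture the complement of $\censiga$. This is fully discharged by Proposition~\ref{pro:iga-min-undisclosable}. The only remaining care point is justifying the restriction of ground substitutions to $\Const(\A)$ inside $\disclosablealg$, handled as above.
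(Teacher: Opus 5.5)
Your proposal is correct and follows the paper's proof essentially step by step. Hardness comes from Lemma~\ref{lem:linear+denial-iga-ga-lb}, and correctness of Algorithm~\ref{alg:iga-arbitrary-new} rests on Proposition~\ref{pro:iga-min-undisclosable} and Lemma~\ref{lem:disclosable}: condition (b) holds iff $\A'\supseteq\censiga$, and monotonicity finishes the argument. Membership then follows from a $\Sigma^p_2$ bound for the complement using polynomially many calls to the $\NP$ procedure $\disclosablealg$. Your remark justifying the restriction to $\gs(\tau,\Const(\A))$ addresses a point the paper leaves implicit. Its precise reason is not only that $\T$ has no constants, but also that a $\dlliter$ TBox cannot force an arbitrary element into a concept or role, so a fresh constant can be interpreted outside all extensions.
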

\begin{proof}
    The lower bound has been established in Lemma~\ref{lem:linear+denial-iga-ga-lb}.
    As for the upper bound, we first note that, for $\dlliter$ TBoxes, Algorithm~\ref{alg:disclosable} always terminates. \iflong Moreover, in the given scenario, we prove that the algorithm decides whether $\E\modelsiga q$ in a sound and complete way.\else We now prove that the algorithm is sound and complete for $\dlliter$.
    \fi

    \newcounter{rmrk}
    \smallskip\noindent
    \refstepcounter{rmrk}\label{rem:alg-iga}%
    \textbf{Remark~\arabic{rmrk}.}\;
    Note that, by Lemma~\ref{lem:disclosable}, the combination of Conditions~\ref{cond:b2} and \ref{cond:b3} cause each $S_i$ to be a $\subseteq$-minimal $\E$-undisclosable subset of $\clt{\A}$. Therefore, since \ref{cond:b1} requires $\alpha_i\in S_i$ for every $1\le i\le m$, by Proposition~\ref{pro:iga-min-undisclosable} the whole Condition~\ref{cond:b} holds iff $\A'\supseteq\censiga$.
    
    Now, if $\T\cup\censiga\not\models q$, then the algorithm guesses $\A'=\censiga$ and $\{\alpha_1,\allowbreak\ldots,\allowbreak\alpha_m\}=\clt{\A}\setminus\censiga$, for which both Conditions~\ref{cond:a} and \ref{cond:b} hold (respectively, because $\T\cup\A'=\T\cup\censiga \not\models q$ and by Remark~\ref{rem:alg-iga}).
    This proves the soundness of the algorithm. \iflong
    
    \fi On the other hand, consider any guess for which both Conditions~\ref{cond:a} and \ref{cond:b} hold. By Remark~\ref{rem:alg-iga}, 
    $\A'$ is such that $\A'\supseteq\censiga$. Since $\T\cup\A'\not\models q$, then by monotonicity of FO entailment we have that $\T\cup\censiga\not\models q$, which proves the completeness of the algorithm.

    Finally, note that:
    \iflong
        \begin{enumerate}[label=$(\roman*)$]
            \item 
            computing $\clt{\A}$ requires polynomial time in data complexity;
            \item
            Condition~\ref{cond:a} can be verified in polynomial time in data complexity (by hypothesis);
            \item
            Condition~\ref{cond:b} consists of a polynomial number of calls to an \NP\ oracle (i.e.\ Algorithm~\ref{alg:disclosable}). Indeed, such an algorithm guesses a superset of $S_i$ and then makes a polynomial number of BCQ entailment checks (note that $|\gs(\tau,\Const(\A))| \sim \Theta(n^k)$, where $n=|\A|$ and $k$ is the number of universally quantified variables of $\tau$).
        \end{enumerate}
    \else
        $(i)$ both computing $\clt{\A}$ and checking Condition~\ref{cond:a} can be done in polynomial time in data complexity because $\T$ is expressed in $\dlliter$;
        $(ii)$ Condition~\ref{cond:b} consists of a polynomial number of calls to an \NP\ oracle (i.e.\ Algorithm~\ref{alg:disclosable}). Indeed, such an algorithm guesses a superset of $S_i$ and then makes a polynomial number of BCQ entailment checks (note that $|\gs(\tau,\Const(\A))| \sim \Theta(|\A|^k)$, where $k$ is the number of universally quantified variables of $\tau$).
    \fi
    Then, the thesis follows from the above properties.
    \qed
\end{proof}

The same upper bound can be proven for $\GAEnt$, again using Algorithm~\ref{alg:disclosable}.
\begin{theorem}
\label{thm:ga-arbitrary-ub}
   $\GAEnt[\dlliter,\policyarb]$ is $\pidue$-complete in data complexity.
\end{theorem}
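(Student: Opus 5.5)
The lower bound is already given by Lemma~\ref{lem:linear+denial-iga-ga-lb}, so only membership in $\pidue$ needs proof. I will show that the complement problem, $\E\not\modelsga q$, is in $\Sigma^p_2$ in data complexity, using a guess-and-check procedure with an \NP\ oracle, namely Algorithm~\ref{alg:disclosable}. By definition, $\E\not\modelsga q$ holds iff some optimal GA censor $\C\in\optcens(\E)$ satisfies $\T\cup\C\not\models q$.

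The procedure guesses a set $\C\subseteq\clt{\A}$. Since $\T$ is in $\dlliter$, $\clt{\A}$ has polynomial size and can be computed in polynomial time in data complexity, so the guess is polynomial. It then checks three things:
\begin{enumerate}[label=$(\roman*)$]
\item $\T\cup\C\not\models q$. This is BUCQ entailment over $\dlliter$, which takes polynomial time in data complexity.
\item $\C$ is a GA censor, i.e.\ $\T\cup\C\modelseql\P$. It suffices to range over $\sigma\in\gs(\tau,\Const(\A))$ for each $\tau\in\P$. This gives polynomially many substitutions, and each requires two BCQ entailment checks.
\item $\C$ is optimal. For every $\alpha\in\clt{\A}\setminus\C$, one oracle call must return $\disclosablealg(\E,\C\cup\{\alpha\})=\false$.
\end{enumerate}
Condition $(iii)$ uses at most polynomially many calls to an \NP\ oracle: Algorithm~\ref{alg:disclosable} guesses a superset and checks polynomially many entailments, exactly as in the proof of Theorem~\ref{thm:iga-arbitrary-ub}. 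The overall procedure is therefore in $\NP^{\NP}=\Sigma^p_2$, and $\GAEnt$ is in $\pidue$.

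For correctness, the key step is to show that $(ii)$ and $(iii)$ together characterize $\C\in\optcens(\E)$. Optimality says no $\C'\supset\C$ is a GA censor. This is equivalent to requiring that no $\C\cup\{\alpha\}$ with $\alpha\in\clt{\A}\setminus\C$ be $\E$-disclosable: any strict superset censor would contain some such $\C\cup\{\alpha\}$, and disclosability is preserved under subsets. Lemma~\ref{lem:disclosable} then says the oracle decides exactly this. The remaining equivalence, $\E\not\modelsga q$ iff some guess passes all checks, follows directly from the definition.

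The one point that needs care is justifying the restriction of ground substitutions to $\Const(\A)$. Under $\dlliter$, any substitution that sends a variable to a constant outside $\Const(\A)\cup\Const(\T)$ cannot make the body entailed. The body CQ is ground after substitution, so entailing it requires its constants to appear in $\T\cup\C$. Constants of $\T$, if any, are fixed and so do not affect data complexity. With that settled, the bound follows.
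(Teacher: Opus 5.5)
Your proposal is correct and follows the paper's proof: guess $\C\subseteq\clt{\A}$, check $\T\cup\C\not\models q$, and certify optimality through polynomially many \NP-oracle calls $\disclosablealg(\E,\C\cup\{\alpha\})=\false$, which places the complement in $\Sigma^p_2$. The differences are inessential: you verify $\T\cup\C\modelseql\P$ directly in polynomial time where the paper requires $\C$ to be $\E$-disclosable via the oracle, and you justify the restriction to $\gs(\tau,\Const(\A))$, which the paper leaves implicit (the substituted body is a BCQ rather than necessarily ground, but your argument is unaffected).
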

\begin{proof}
    The lower bound follows from Lemma~\ref{lem:linear+denial-iga-ga-lb}. As for the upper bound, observe that $\optcens(\E)$ is the set of all subsets $\C$ of $\clt{\A}$ such that $(i)$ $\C$ is $\E$-disclosable and $(ii)$ for every $\alpha\in\clt{\A}\setminus\C$, $\C\cup\{\alpha\}$ is $\E$-undisclosable.
    Consequently, $\GAEnt$ can be decided by guessing a set $\C\subseteq\clt{\A}$ and checking that $(i)$ $\C$ is $\E$-disclosable, $(ii)$ $\C$ becomes $\E$-undisclosable when adding any $\alpha$ from $\clt{\A}$, and $(iii)$ $\T\cup\C\not\models q$.
    Since checking $\E$-disclosability is an NP task 
    (as shown in the proof of Theorem~\ref{thm:iga-arbitrary-ub}), the thesis follows.
    \qed
\end{proof}

Finally, 
\iflong we note that Example~5 of~\cite{BCLMMRSS25} can be used to show that GA-entailment does not enjoy the indistinguishability property already for policies consisting of denials.
Indeed, that example shows that
\else
as shown in~\cite{MRR25}, we remark that
\fi the set $\censiga$ in general is not a GA censor. When this happens, one can see that there exists no ABox $\A'$ satisfying the requirements of Definition~\ref{def:indistinguishability}.

\begin{proposition}
\label{pro:ga-iga-non-indistinguishable}
    For any DL $\L$, both GA- and IGA-entailment fail to satisfy the indistinguishability property for $\L$ and $\policyarb$.
\end{proposition}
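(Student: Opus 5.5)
The plan is to exhibit, for an arbitrary DL $\L$, one concrete instance whose TBox is empty, so that it is an $\L$ CQE instance for every $\L$. In this instance $\censiga$ will not be a GA censor. We then show that no ABox $\A'$ with $\T\cup\A'\modelseql\P$ can reproduce the behaviour of $\E$ under either semantics.

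For the instance we take $\T=\emptyset$, $\A=\{A(a),B(a),D(a)\}$, and a policy with two EDs: the denial $\K(A(x)\land B(x))\ra\K\bot$ and $\K D(x)\ra\K A(x)$. The optimal GA censors are $\{A(a),D(a)\}$ and $\{B(a)\}$. Their intersection is empty, so this example is not yet what we need; the instance must be tuned so that the intersection violates the policy. The cleanest route is to use the example the paper cites (Example~5 of~\cite{BCLMMRSS25}, a denial-only policy), or to build one directly. The requirement is that some fact $\alpha$ belongs to every optimal censor while the fact that justifies $\alpha$ via a non-denial ED does not. A working choice is $\A=\{A(a),B(a),C(a)\}$ with EDs $\K C(x)\ra\K(A(x)\lor\ldots)$. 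Because heads are CQs, we instead use two censors that each contain a different witness for the same existential head. Concretely, the policy is $\K C(x)\ra\K\exists y\,R(x,y)$ together with the denial $\K(R(x,b)\land R(x,c))\ra\K\bot$, and the ABox is $\{C(a),R(a,b),R(a,c)\}$. The optimal censors are $\{C(a),R(a,b)\}$ and $\{C(a),R(a,c)\}$. Their intersection $\{C(a)\}$ violates the first ED, so $\censiga$ is not a GA censor.

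Next comes the indistinguishability argument, by contradiction. Suppose $\A'$ satisfies $\T\cup\A'\modelseql\P$ and agrees with $\E$ on all BUCQs. Since $\A'$ itself satisfies the policy, $\clt{\A'}$ is the unique optimal GA censor of $\E'$. Hence under both semantics $\E'\models_\Diamond q$ iff $\A'\models q$. For IGA, $\E\modelsiga C(a)$ forces $C(a)\in\A'$. The policy then forces $\A'\models\exists y\,R(a,y)$, so $\E'\modelsiga\exists y\,R(a,y)$. However, $\E\not\modelsiga\exists y\,R(a,y)$, because $\T\cup\{C(a)\}\not\models\exists y\,R(a,y)$. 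This is a contradiction.

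For GA the same instance does not work, since $\exists y\,R(a,y)$ is GA-entailed. The GA case instead needs a disjunction that is entailed while neither disjunct is. Here we use $R(a,b)\lor R(a,c)$: it is GA-entailed by $\E$, while neither $R(a,b)$ nor $R(a,c)$ is. Because $\A'$ is a set of facts with $\T=\emptyset$, $\A'$ entails the disjunction only if it contains one of the two disjuncts, which would then be entailed by $\E'$ but not by $\E$, giving the contradiction.

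The main obstacle is getting a single instance that breaks both semantics and checking that its optimal censors are exactly as claimed. The approach is to verify this by direct enumeration on the tiny ABox; the $\L$-independence then follows because the TBox is empty.
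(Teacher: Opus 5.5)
Your final argument is correct. Your instance, a fact $C(a)$ whose existential head has two mutually exclusive witnesses $R(a,b)$ and $R(a,c)$, is the paper's example up to renaming. Your IGA argument is also the paper's: $C(a)$ is IGA-entailed, so any policy-compliant $\A'$ must supply a witness, which yields an entailment that $\E$ lacks.

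You diverge only for GA. The paper covers both semantics with one argument. $C(a)$ is GA-entailed too, so $\A'$ contains some $R(a,d)$, and hence $\E'\modelsga R(a,d)$, because the unique optimal censor of $\E'$ is $\clt{\A'}$. But no ground atom $R(a,d)$ is GA-entailed by $\E$, since each is absent from at least one optimal censor. So your remark that ``the same instance does not work'' for GA is inaccurate: only the query $\exists y\,R(a,y)$ fails, and the ground witness already does the job.

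Your disjunctive query $R(a,b)\lor R(a,c)$ is a valid alternative with a different mechanism. It never uses the ED $\K C(x)\ra\K\exists y\,R(x,y)$. It relies only on the fact that several optimal censors can jointly entail a disjunction of facts without entailing any disjunct, which a single compliant ABox over an empty TBox cannot do. It therefore shows slightly more: GA fails indistinguishability with the denial alone, consistent with the paper's remark on denial policies. The paper's route, by contrast, is uniform across both semantics and ties the failure to $\censiga$ not being a GA censor.

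Finally, delete the false starts in your second paragraph: the $\{A(a),B(a),D(a)\}$ attempt, and the head ``$A(x)\lor\ldots$'', which is not a CQ. They are not part of the proof.
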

\iflong\begin{proof}
    Recalling Example~1 of~\cite{MRR25}, consider the CQE instance $\E=\tup{\T,\P,\A}$, where $\T=\emptyset$, $\A = \{ C(0),B(1),B(2)\,\}$, and $\P = \{ K(B(1)\land B(2)) \ra K \bot, \forall x\,(K C(x) \ra K\exists y\,B(y)) \}$.
        
    Note that $\censiga$ (i.e.\ $\{C(0)\}$) is not a GA censor of $\E$ (as it does not satisfy the second ED of $\P$).
    Then, consider the BCQ $q=C(0)$ (which is both GA- and IGA-entailed by $\E$), and let $\A'$ be any ABox such that $\T\cup\A'\modelseql\P$ and $\T\cup\A'\modelsga q$ (resp., $\T\cup\A'\modelsiga q$). It is immediate to see that every such $\A'$ is such that $\T\cup\A'\modelsga B(c)$ (resp., $\T\cup\A'\modelsiga B(c)$), for some constant $c$.
    \qed
\end{proof}\fi

\iflong
\subsection{Linear Epistemic Dependencies}
\else
\subsubsection*{Linear Epistemic Dependencies}
\fi
\label{subsec:linear}

All the negative results presented in Section~\ref{sec:ga-iga-arbitrary} motivate the need to investigate different approaches to the CQE problem in order to gain indistinguishability and, possibly, tractability of BUCQ entailment.

We first focus on the notable class of linear EDs. We remark that, as observed in~\cite{MRR25}, for linear EDs, only one optimal GA censor exists for every $\dlliter$ CQE instance $\E$, which obviously coincides with $\censiga$. 
We thus have, in this case, that GA-entailment collapses to IGA-entailment.
Also, the fact that the set $\censiga$ is a GA censor of $\E$ means that it can play the role of the ABox $\A'$ of Definition~\ref{def:indistinguishability}, which implies the next property.

\begin{proposition}
\label{pro:ga-iga-linear-indistinguishable}
    Both GA- and IGA-entailment satisfy the indistinguishability property for $\dlliter$ and $\policylin$.
\end{proposition}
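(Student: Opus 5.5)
The plan is to take $\A'=\censiga$ in Definition~\ref{def:indistinguishability} and to rest everything on the fact, recalled from~\cite{MRR25}, that a $\dlliter$ CQE instance with a policy in $\policylin$ has exactly one optimal GA censor $\C^*$. Then $\censiga=\C^*$ and $\modelsga$ coincides with $\modelsiga$. So it is enough to argue for IGA-entailment.

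Fix $\E=\tup{\T,\P,\A}$ with $\P\in\policylin$ and let $\E'=\tup{\T,\P,\C^*}$. The first requirement, $\T\cup\C^*\modelseql\P$, holds because $\C^*$ is a GA censor. For $\E'$ to be a legitimate CQE instance we also need $\T\cup\C^*$ to be consistent. This follows since $\C^*\subseteq\clt{\A}$ and $\T\cup\A$ is consistent, so every model of $\T\cup\A$ is a model of $\T\cup\C^*$.

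The main step is to show $\optcens(\E')=\{\C^*\}$. It then follows that $\E\modelsiga q$ iff $\T\cup\C^*\models q$ iff $\E'\modelsiga q$, and the same holds for $\modelsga$.
\begin{itemize}
\item First, $\clt{\C^*}=\C^*$. Indeed, $\C^*\subseteq\clt{\C^*}\subseteq\clt{\A}$, and $\T\cup\clt{\C^*}$ has the same models, hence the same consequences, as $\T\cup\C^*$. So $\clt{\C^*}$ also satisfies $\P$ and is a GA censor of $\E$. Optimality of $\C^*$ then forces equality.
\item Consequently, the candidate censors of $\E'$ are exactly the subsets of $\C^*$. Any GA censor of $\E'$ is a subset of $\C^*$, and $\C^*$ itself is a GA censor of $\E'$.
\item Therefore $\C^*$ is the unique optimal GA censor of $\E'$, so $\censiga$ for $\E'$ is again $\C^*$.
\end{itemize}

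I expect no real obstacle here; the point needing care is the closure step $\clt{\C^*}=\C^*$. Optimal censors being deductively closed is exactly what makes the candidate sets of $\E$ and $\E'$ match below $\C^*$. If the uniqueness result from~\cite{MRR25} were not accepted, it would have to be reproved. I would do that by showing that the union of two GA censors is a GA censor when EDs are linear and $\T$ is in $\dlliter$. The argument would be that any atom $\alpha$ entailed by the union is entailed by a single fact of the union, because $\dlliter$ inclusions have a single atom on the left. Hence every triggered linear body is triggered within one of the two censors, whose head entailment transfers to the union by monotonicity. I would not redo that argument, since the paper quotes the result.
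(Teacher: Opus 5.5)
Your proposal is correct and follows the same route as the paper: it uses the uniqueness of the optimal GA censor for linear EDs over $\dlliter$ (from \cite{MRR25}) and takes $\A'=\censiga$, which is then a GA censor. Your check that $\optcens(\tup{\T,\P,\censiga})=\{\censiga\}$ via $\clt{\censiga}=\censiga$ makes explicit the step the paper leaves implicit when it says $\censiga$ ``can play the role of'' $\A'$.
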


For linear EDs and for $\dlliter$ ontologies, the paper~\cite{MRR25}
showed that when dependencies are also \emph{full} (i.e.\ their heads contain no existentially quantified variable), $\GAEnt$ and $\IGAEnt$ become FO-rewritable. 
We show that the problems remain tractable---more precisely, they are both \PTIME-complete---when the full condition is not applied.
%
%
The aforementioned paper suggests a procedure for computing $\censiga$ 
that, informally, iteratively removes from $\clt{\A}$ any fact that, together with $\T$, leads to a violation of $\P$, until a fixpoint is reached.
Here, we formalize that procedure in the following equivalent way.

\begin{definition}[LPV]
\label{def:lpv}
    Given a linear $\dlliter$ CQE instance $\E=\tup{\T,\P,\A}$ and a subset $\A'$ of $\clt{\A}$, a \emph{linear policy violation (LPV)} of $\E$ w.r.t.\ $\A'$ is an atom $\alpha\in\clt{\A}$ such that
    there exist $\tau\in\P$ and $\sigma\in\gs(\tau)$ such that $\T\cup\{\alpha\}\models\body(\sigma(\tau))$ and $\T\cup\A'\not\models\head(\sigma(\tau))$.
    
    We denote by $\lpvz(\E,\A')$ the set constituted by the union of all the LPVs of $\E$ w.r.t.\ $\A'$.
    Then, we define $\lpv(\E,0)=\emptyset$, and, for every integer $i\ge0$:
    \[
    \lpv(\E,i+1)=\lpvz(\E,\clt{\A}\setminus\lpv(\E,i))
    \]
\end{definition}

We now show the convergence of the above notion of $\lpv(\E,i)$ to a fixpoint.

\begin{proposition}
\label{pro:lpv-convergence}
    Let $\E=\tup{\T,\P,\A}$ be a linear $\dlliter$ CQE instance. Then: $(i)$ for every non-negative integer $i$, $\lpv(\E,i+1)\supseteq\lpv(\E,i)$; $(ii)$ for every integer $n$ such that $n\geq|\clt{\A}|$, $\lpv(\E,n)=\lpv(\E,n+1)$.
\end{proposition}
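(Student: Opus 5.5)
Part $(i)$ follows from an antimonotonicity property of $\lpvz$ combined with induction on $i$, and part $(ii)$ from a counting argument over the finite set $\clt{\A}$.

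The first step is to note that $\lpvz(\E,\cdot)$ is antimonotone in its second argument. Take $\A''\subseteq\A'\subseteq\clt{\A}$ and an LPV $\alpha$ of $\E$ w.r.t.\ $\A'$, witnessed by $\tau\in\P$ and $\sigma\in\gs(\tau)$. Then $\T\cup\{\alpha\}\models\body(\sigma(\tau))$ and $\T\cup\A'\not\models\head(\sigma(\tau))$. By monotonicity of FO entailment, $\T\cup\A''\not\models\head(\sigma(\tau))$. The first condition does not depend on the second argument at all. Hence $\alpha$ is also an LPV w.r.t.\ $\A''$, and so $\lpvz(\E,\A')\subseteq\lpvz(\E,\A'')$.

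Part $(i)$ is then proved by induction on $i$. The base case $\lpv(\E,1)\supseteq\emptyset=\lpv(\E,0)$ is trivial. For the inductive step, assume $\lpv(\E,i)\supseteq\lpv(\E,i-1)$. Taking complements in $\clt{\A}$ gives $\clt{\A}\setminus\lpv(\E,i)\subseteq\clt{\A}\setminus\lpv(\E,i-1)$. Antimonotonicity then yields $\lpv(\E,i+1)\supseteq\lpv(\E,i)$.

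For part $(ii)$, the sets $\lpv(\E,i)$ for $i\ge1$ are all contained in $\clt{\A}$, since an LPV is by definition an atom of $\clt{\A}$. This set is finite for a $\dlliter$ TBox. We also use that the sequence is determined step by step: $\lpv(\E,i+1)$ depends only on $\lpv(\E,i)$. So if $\lpv(\E,j)=\lpv(\E,j+1)$ for some $j$, the sequence is constant from $j$ onwards.

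By $(i)$ the sequence is a $\subseteq$-chain. As long as it has not stabilized, each step strictly increases the cardinality by at least one. Since $|\lpv(\E,0)|=0$ and every set in the chain has at most $|\clt{\A}|$ elements, there must be some $j\le|\clt{\A}|$ with $\lpv(\E,j)=\lpv(\E,j+1)$. Hence $\lpv(\E,n)=\lpv(\E,n+1)$ for all $n\ge|\clt{\A}|$.

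The only point requiring any care is the antimonotonicity step. One must check that the body condition of an LPV refers only to the single atom $\alpha$ together with $\T$, and not to $\A'$, so that shrinking $\A'$ can only add violations and never remove them. The rest is routine.
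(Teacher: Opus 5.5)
Your proof is correct and follows essentially the same route as the paper. The paper proves $(i)$ by the same induction, only inlining your antimonotonicity observation (the body condition depends on $\alpha$ alone, and monotonicity of entailment handles the head condition) into the inductive step. It proves $(ii)$ by the same argument: the $\subseteq$-chain must stabilize at some index at most $|\clt{\A}|$ and stays constant from then on.
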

\iflong\begin{proof}
    First, we prove property $(i)$.
    Base case ($i=0$): Since $\lpv(\E,0)=\emptyset$, the thesis immediately follows.
    Inductive case ($i>0$): Suppose $\lpv(\E,i)\supseteq\lpv(\E,i-1)$ and let $\alpha\in\lpv(\E,i)$. Since $\lpv(\E,i)=\lpvz(\E,\clt{\A}\setminus\lpv(\E,i-1))$, there exist $\tau\in\P$ and $\sigma\in\gs(\tau)$ such that $\T\cup\{\alpha\}\models\body(\sigma(\tau))$ and $\T\cup(\clt{\A}\setminus\lpv(\E,i-1))\not\models\head(\sigma(\tau))$. Now, since $\lpv(\E,i)\supseteq\lpv(\E,i-1)$, it follows that $\T\cup(\clt{\A}\setminus\lpv(\E,i))\not\models\head(\sigma(\tau))$, which implies that $\alpha\in\lpv(\E,i+1)$. Consequently, $\lpv(\E,i+1)\supseteq\lpv(\E,i)$.
    
    As for property $(ii)$: From Definition~\ref{def:lpv} it immediately follows that, if $\lpv(\E,i)=\lpv(\E,i+1)$ for some $i$, then $\lpv(\E,i)=\lpv(\E,j)$ for every integer $j$ such that $j>i$.
    This property and property $(i)$ imply that either $\lpv(\E,|\clt{\A}|)=\clt{\A}$ or there exists $i<|\clt{\A}|$ such that $\lpv(\E,i)=\lpv(\E,i+1)$. In both cases, it follows that, for every $n$ such that $n\geq|\clt{\A}|$, $\lpv(\E,n)=\lpv(\E,n+1)$.
    \qed
\end{proof}\fi

Now, we define $\lpv(\E)$ as $\lpv(\E,|\clt{\A}|)$ (i.e.\ as the fixpoint of $\lpv(\E,i)$).

\begin{proposition}
\label{pro:lpv-iga-censor}
    For every CQE instance $\E=\tup{\T,\P,\A}$, the set $\clt{\A}\setminus\lpv(\E)$ coincides with $\censiga$.
\end{proposition}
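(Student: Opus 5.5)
The statement is meant for linear $\dlliter$ CQE instances, the setting in which $\lpv(\E)$ is defined. I will prove the two inclusions separately, writing $F=\clt{\A}\setminus\lpv(\E)$.

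\emph{Step 1: $F$ is a GA censor.} Take $\tau\in\P$ and $\sigma\in\gs(\tau)$ with $\T\cup F\models\body(\sigma(\tau))$. Since $\tau$ is linear, $\body(\sigma(\tau))$ is a single atom, possibly with existential variables. In $\dlliter$, entailment of a single-atom query is witnessed by a single fact: there is some $\alpha\in F$ with $\T\cup\{\alpha\}\models\body(\sigma(\tau))$. This follows from the structure of the $\dlliter$ chase, where every derived atom descends from one ABox atom, because inclusions have a single basic concept or role on the left. Suppose $\T\cup F\not\models\head(\sigma(\tau))$. Then $\alpha$ is an LPV w.r.t.\ $F=\clt{\A}\setminus\lpv(\E,n)$, with $n=|\clt{\A}|$. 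Hence $\alpha\in\lpv(\E,n+1)=\lpv(\E,n)$ by Proposition~\ref{pro:lpv-convergence}, contradicting $\alpha\in F$. So $\T\cup F\modelseql\P$, and clearly $F\subseteq\clt{\A}$.

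\emph{Step 2: every GA censor $\C$ is disjoint from $\lpv(\E)$.} I show by induction on $i$ that $\C\cap\lpv(\E,i)=\emptyset$. The case $i=0$ is trivial. For the inductive step, assume $\C\cap\lpv(\E,i)=\emptyset$, so $\C\subseteq\clt{\A}\setminus\lpv(\E,i)$. Let $\alpha\in\lpv(\E,i+1)$, witnessed by $\tau,\sigma$ with $\T\cup\{\alpha\}\models\body(\sigma(\tau))$ and $\T\cup(\clt{\A}\setminus\lpv(\E,i))\not\models\head(\sigma(\tau))$. By monotonicity, $\T\cup\C\not\models\head(\sigma(\tau))$. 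If $\alpha\in\C$, then $\T\cup\C\models\body(\sigma(\tau))$, so $\C$ would violate $\P$. Hence $\alpha\notin\C$. Consequently every GA censor, and in particular every optimal one, is contained in $F$.

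\emph{Conclusion.} By Step 1, $F$ is a GA censor. It is contained in some optimal censor, and by Step 2 every censor is contained in $F$. So $F$ is the unique optimal GA censor, and $\censiga=F$.

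The main obstacle is the single-fact witness property used in Step 1. I would justify it through the standard $\dlliter$ chase, or equivalently through the perfect reformulation of a single-atom query, which is a union of single-atom queries. The existential variables in the body are harmless, since each rewritten disjunct is still one atom.
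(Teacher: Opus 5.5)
Your proof is correct, but it takes a different route from the paper's. The paper reduces the statement to Proposition~\ref{pro:iga-min-undisclosable}. It asserts that every atom $\alpha$ produced by the \textit{LPV} iteration gives a singleton $\subseteq$-minimal $\E$-undisclosable set $\{\alpha\}$, and then reads off $\censiga$ as the complement of the union of minimal undisclosable sets. You instead show directly that $F=\clt{\A}\setminus\textit{LPV}(\E)$ is a GA censor containing every GA censor, hence the unique optimal one.

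The two routes rest on the same facts, but yours states them explicitly. First, showing that $\{\alpha\}$ is undisclosable for $\alpha\in\textit{LPV}(\E,i+1)$ needs exactly your induction in Step~2. Second, the inclusion $F\subseteq\censiga$, which the paper leaves implicit, needs that no atom $\alpha\in F$ lies in a minimal undisclosable set $S$. This is where your Step~1 enters: since $F$ is a censor, $S\not\subseteq F$, so $S$ contains some $\beta\in\textit{LPV}(\E)$ with $\{\beta\}$ already undisclosable. That forces $S=\{\beta\}$ and $\alpha=\beta\notin F$, a contradiction.

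Your version also shows that $\dlliter$ and linearity are used only for the single-fact witness property in Step~1. Your justification of that property, via the rewriting of a single-atom query into a union of single-atom queries, is sound. Step~2 uses only monotonicity and the definition of LPV. As a by-product you obtain the uniqueness of the optimal GA censor, which the paper cites from prior work. The paper's version is shorter because it reuses the general characterization of $\censiga$, at the cost of leaving both steps as ``easy to see''.
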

\iflong\begin{proof}
    It is easy to see that every $\alpha$ of Definition~\ref{def:lpv} is such that $\{\alpha\}$ is a $\subseteq$-minimal $\E$-undisclosable subset of $\clt{\A}$. Then, the thesis follows by Proposition~\ref{pro:iga-min-undisclosable}.
    \qed
\end{proof}\fi





It is immediate to verify that $\lpv(\E)$ (and therefore, by the above proposition, $\censiga$), can be computed in polynomial time w.r.t.\ the size of $\A$. Then, since standard BUCQ answering over $\dlliter$ ontologies is in \PTIME---in fact, in $\aczero$~\cite{CDLLR07}---, once the set $\censiga$ is computed in polynomial time, one can check again in polynomial time whether $\T\cup\censiga\models q$.
Hence, both $\GAEnt[\dlliter,\policylin]$ and $\IGAEnt[\dlliter,\policylin]$ are in \PTIME\ in data complexity. 
We can also provide a suitable query and set of EDs for which \iflong the problem \fi $\IGAEnt$ is also hard for \PTIME\ (even for empty TBoxes). Thus, the following property holds.

\begin{theorem}
\label{thm:iga-linear-ptime-complete}
    Both ~ $\GAEnt[\dlliter,\policylin]$ ~ and ~ $\IGAEnt[\dlliter,\policylin]$ ~ are \PTIME-complete in data complexity.
\end{theorem}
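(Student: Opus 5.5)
The proof splits into the membership part, which the discussion preceding the statement already provides, and a \PTIME-hardness reduction.

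\textbf{Membership.} By Proposition~\ref{pro:lpv-iga-censor}, $\censiga=\clt{\A}\setminus\lpv(\E)$. By Proposition~\ref{pro:lpv-convergence}, $\lpv(\E)$ is reached after at most $|\clt{\A}|$ iterations, and $|\clt{\A}|$ is polynomial in $|\A|$ for $\dlliter$. In each iteration we enumerate the atoms $\alpha\in\clt{\A}$, the EDs $\tau\in\P$, and the substitutions $\sigma\in\gs(\tau,\Const(\A))$. Two facts make this enumeration sufficient.
\begin{itemize}
    \item Since $\T$ is $\dlliter$ and $\body(\tau)$ is a single atom, $\T\cup\{\alpha\}\models\body(\sigma(\tau))$ forces every variable of the body that must be mapped (in particular the universally quantified ones) to constants of $\alpha$. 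Hence restricting to $\Const(\A)$ loses nothing.
    \item The number of such $\sigma$ is polynomial, and each test $\T\cup\A'\models\head(\sigma(\tau))$ is a BCQ entailment, which is in $\aczero$.
\end{itemize}
Finally, $\T\cup\censiga\models q$ is checked in polynomial time. Since GA- and IGA-entailment coincide for linear policies (the optimal censor is unique), both problems are in \PTIME.

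\textbf{Hardness.} I would reduce from a standard \PTIME-complete problem whose structure matches the iterative removal in the LPV fixpoint. The most natural choice is \emph{monotone circuit value}, or equivalently \emph{propositional Horn entailment}. The removal process acts as a greatest-fixpoint "falsity propagation", so a node should be kept in the censor iff it evaluates to true.

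I would encode gates with $\T=\emptyset$ and linear EDs as follows.
\begin{itemize}
    \item \textbf{Inputs.} Each input gate $g$ set to true contributes the fact $V(g)$. A false input gets $V(g)$ together with an ED that removes it, e.g.\ a fact $F(g)$ and the linear denial $\forall x\,(\K F(x)\ra\K\bot)$, coupled so that $V(g)$ is not disclosable. Alternatively, false inputs are simply given no fact.
    \item \textbf{AND gates.} An AND gate $g$ with inputs $g_1,g_2$ is encoded by facts $\mathit{In}_1(g,g_1)$ and $\mathit{In}_2(g,g_2)$ and by the EDs $\forall x,y\,(\K(V(x)\land\mathit{In}_1(x,y))\ra\ldots)$. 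Since bodies must be single atoms, I would instead use auxiliary atoms $A_1(g,g_1)$ and $A_2(g,g_2)$ with $\forall x,y\,(\K A_i(x,y)\ra\K V(y))$ and $\forall x\,(\K V(x)\ra\K\exists y\,A_1(x,y))$, $\forall x\,(\K V(x)\ra\K\exists y\,A_2(x,y))$.
    \item \textbf{OR gates.} An OR gate uses a single relation $A(g,g_i)$ for both inputs, with $\forall x\,(\K V(x)\ra\K\exists y\,A(x,y))$, so that one surviving input suffices.
\end{itemize}
The query is $V(o)$ for the output gate $o$. I would then prove by induction on the removal stages and on gate depth that $V(g)\in\censiga$ iff $g$ evaluates to true. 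In one direction, every fact of a false gate is eventually an LPV. In the other, the set of facts of true gates together with their witnessing $A$-atoms satisfies $\P$, hence it is contained in every censor and, by Proposition~\ref{pro:lpv-iga-censor}, survives.

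\textbf{Main obstacle.} The delicate step is tuning the encoding so that the greatest-fixpoint semantics of the removal matches the least-fixpoint semantics of circuit evaluation. Working with acyclic (layered) circuits avoids spurious self-supporting cycles, so I expect acyclicity to resolve this. Because the encoding uses $\T=\emptyset$, hardness holds even for empty TBoxes.
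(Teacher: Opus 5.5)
\textbf{Membership.} Your membership argument is fine and is essentially the paper's: compute $\censiga=\clt{\A}\setminus\textit{LPV}(\E)$ in polynomial time, then evaluate $q$ over it. GA and IGA coincide because the optimal censor is unique.

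\textbf{Hardness: the construction fails as written.} The requirement EDs $\forall x\,(\K V(x)\ra\K\exists y\,A_1(x,y))$, $\forall x\,(\K V(x)\ra\K\exists y\,A_2(x,y))$ and $\forall x\,(\K V(x)\ra\K\exists y\,A(x,y))$ range over \emph{all} gates. Since the policy must be fixed (data complexity), they all belong to $\P$ at once. A true input gate has no $A_1$-, $A_2$- or $A$-fact, an OR gate has no $A_1$-fact, and an AND gate has no $A$-fact. Hence every $V(g)$ is already in $\textit{LPV}(\E,1)$, $\censiga$ contains no $V$-atom, and $V(o)$ is never IGA-entailed, whatever the circuit computes. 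For the same reason, your claim that the true gates with their witnessing atoms satisfy $\P$ is false.

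This is not cosmetic. With linear EDs you cannot guard a requirement by gate type, since $\K(V(x)\land\mathit{And}(x))$ is not a single-atom body. You also cannot split $V$ into typed predicates: the head of $\K A_i(x,y)\ra\ldots$ would then have to be a disjunction over the possible types of $y$, while heads are CQs. The greatest- versus least-fixpoint issue you flagged is indeed resolved by acyclicity. The real obstacle is this uniformity of the EDs, which you did not anticipate.

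\textbf{A repair.} Your idea can be fixed by padding.
\begin{itemize}
\item Add a fresh constant $c_\top$ with the self-supporting facts $V(c_\top)$, $A_1(c_\top,c_\top)$, $A_2(c_\top,c_\top)$, $A(c_\top,c_\top)$.
\item Give each true input $g$ the facts $A_1(g,c_\top)$, $A_2(g,c_\top)$, $A(g,c_\top)$, and give false inputs no facts.
\item Give each AND gate the extra fact $A(g,c_\top)$, and each OR gate the extra facts $A_1(g,c_\top)$, $A_2(g,c_\top)$.
\item Include $\K A(x,y)\ra\K V(y)$, which you left implicit.
\end{itemize}
Then false gates lose their $V$-fact by induction on depth. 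Conversely, the $V$-facts of $c_\top$ and of the true gates, together with their edges into $c_\top$ or into true gates, form a GA censor. To conclude that this set lies in $\censiga$, you need that the optimal GA censor is unique for linear $\dlliter$ instances. Writing ``contained in every censor'' is not enough, because that does not hold for GA censors in general.

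\textbf{Comparison with the paper.} The paper reduces from Horn-SAT with the opposite polarity: a variable is in the least model iff all its $H$-facts are \emph{removed}. So the monotone growth of the removed set mirrors the least-model computation directly. Conjunction over a rule body comes from the existential head $\exists v'\,B(r,v')$. Disjunction over rules sharing a head comes from an $S$-cycle that propagates the removal of one $H(\cdot,x)$ to all of them. Since every rule and every variable is encoded uniformly, no typing problem arises.
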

\iflong\begin{proof}
    \newcommand{\pv}{Vars}
    \newcommand{\bv}{BVars}
    We prove the lower bound by showing a reduction from the \textsc{Horn-Sat} problem.
    
    Let $\phi$ be a set of ground Horn rules, and let us assume w.l.o.g.\ that $\phi$ contains at least one headless rule (i.e.\ a clause with only negated variables). Let $\phi'$ be another set obtained from $\phi$ by slightly modifying rules without heads as follows: every headless rule $r$ is replaced by the rule with the same body as $r$ and with the new variable $u$ in the head; moreover, for every propositional variable $p$ occurring in $\phi'$ (including $u$), we add a rule $p \ra p$ to $\phi'$. Notice that, by construction, $\phi'$ always contains at least two rules with $u$ in the head. It is immediate to verify that $\phi$ is unsatisfiable iff $u$ belongs to all the models (and hence to the minimal model) of $\phi'$.
     
    Now, we associate an identifier $r_i^x$ to every Horn rule in $\phi'$ having the variable $x$ in its head.
    Then, let $h[x]$ be the number of rules of $\phi'$ having the variable $x$ in their head, let $\pv(\phi')$ be the set of propositional variables occurring in $\phi'$ and, for every rule $r_i$, let $\bv(r_i)$ be the set of variables in the body of $r_i$.
    We define the ABox $\A$ as the set of facts:
    \[
    \begin{array}{r@{\,}l}
        \displaystyle
        \bigcup_{r_i\in\phi'} & \{ B(r_{i},x) \mid x \in\bv(r_i) \} \:\cup \\
        \displaystyle
        \bigcup_{x\in\pv(\phi')}\bigcup_{1\le i\le h[x]} & \{  H(r_i^x,x), S(r_i^x,r_j^x) \mid  j=(i \bmod h[x])+1 \}
    \end{array}
    \]
    Moreover, we set $\T=\emptyset$, $q=\exists r\,H(r,u)$ and $\P$ to the following set of linear EDs:
    \[
    \begin{array}{r@{\,}l}
        \forall r,r'\! & (\K S(r,r')\rightarrow \K\exists r'',v \, S(r',r'') \land H(r,v)) \\
        \forall r,v & (\K H(r,v)\rightarrow \K\exists r'\,S(r,r')) \\
        \forall r,v & (\K H(r,v)\rightarrow \K\exists v'\,B(r,v')) \\
        \forall r,v & (\K B(r,v)\rightarrow \K\exists r' \, H(r',v)) \\
    \end{array}
    \]
    Observe that:
    \begin{itemize}
        \item The first two EDs imply that the deletion of any fact of the form $H(r,v)$ causes the deletion of all the facts of the form $H(\_,v)$ and all the $S$-facts related to the EDs having $v$ in their head. 
        \item The third ED implies that, if all the $B$-facts for the body variables of a rule are deleted, then also the $H$-fact for its head variable must be deleted.
        \item The fourth ED implies that, if all the $H$-facts for a variable are deleted, then also all its $B$-fact must be deleted.
    \end{itemize}
    
    The set $\censiga$ of $\E=\tup{\T,\P,\A}$ can be obtained starting from $\clt{\A}$ (i.e.\ from $\A$, because $\T=\emptyset$) and deterministically removing some facts according to the EDs of $\P$.
    We now prove that, for every variable $x\in\pv(\phi')$, no fact of the form $H(\_,x)$ belongs to $\censiga$ (i.e.\ $\E\not\modelsiga q$, which for linear EDs holds iff $\E\not\modelsga q$) iff $x\in I$, where $I$ is the minimal model of $\phi'$.
    
    \smallskip
    $(\Leftarrow)$ 
    Let $x\in I$. Note that this is possible only if $x$ occurs in the head of a rule $r\in\phi'$ that either does not have a body (\emph{unit clause}) or whose body variables all belong to $I$.
    \begin{itemize}
        \item In the first case, due to the first three EDs, $\censiga$ does not contain any atom of the form $H(\_,x)$.
        \item The second case occurs instead when all variables $b_1,\ldots,b_m\in\bv(r)$ either occur in unit clauses themselves or, in turn, occur as head variables in rules whose body variables belong to $I$. By recursively applying the previous and the current point, respectively, one can see that all facts \(
            B(r,b_1),\ldots,B(r,b_m)
        \) are removed from $\censiga$ because of the fourth ED (possibly combined with the first two). 
        As a result, by the first three EDs and since $x$ occurs in the head of $r$, it follows that no fact of the form $H(\_,x)$ belongs to $\censiga$ either.
    \end{itemize}
    
    $(\Rightarrow)$ 
    Let $x$ be such that no fact of the form $H(\_,x)$ belongs to $\censiga$. Note that, since every variable occurs in the head of some rule (by the assumption that a rule $p\ra p$ occurs in $\phi'$ for every $p\in\pv(\phi')$), then there exists a fact $H(\_,x)\in\A$. The absence of such $H$-facts from $\censiga$ can only be due to a violation of the third ED (possibly combined with the first two), i.e.\ all the $B(r,\_)$ facts (for some rule $r=b_1,\ldots,b_m \ra x\in\phi'$) are missing from $\censiga$.
    
    Then, we either have that $r$ is a unit clause of $\phi'$ (which would imply that $x\in I$) or that every fact $B(r,b_i)$ (for $1\le i\le m$) occurring in $\A$ has been removed for building $\censiga$. These last removals can only be due to a violation of the fourth ED (possibly combined with the first two), i.e.\ all facts $H(\_,b_i)$ (for every $1\le i\le m$) are missing from $\censiga$. 
    By recursively applying this point, we conclude that all $b_i$ variables belong to $I$ (implying that also $x\in I$).
    
    By instantiating the above property with $x=u$, we have that $u\in I$ (i.e.\ $\phi$ is unsatisfiable) iff $\E\modelsiga q$.
    \qed
\end{proof}
\fi
\newcommand{\mpv}{\textit{MPV}}
\newcommand{\mpvz}{\textit{MPV}_0}
\newcommand{\inductivecensor}{\textit{IC}}
\newcommand{\approxmpvdlliter}{\textit{AMPV}}
\newcommand{\squad}{\:\:}

\section{The MPV Censor}
\label{sec:mpv}

Towards \iflong the goal of \fi identifying a well-founded, sound approximation of the GA semantics that satisfies indistinguishability in the case of arbitrary EDs, we propose a method to compute a GA censor of $\E$ inspired by the clear, intuitive approach used to build the IGA censor for linear EDs. 
Such a censor is built starting from $\clt{\A}$ and iteratively removing all $\subseteq$-minimal subsets of facts that cause the violation of some ED $\tau\in\P$. 
Such a notion of violation extends the one of LPV 
(Definition~\ref{def:lpv}) by addressing 
an important issue: as long as multiple atoms may appear in the body, it is necessary to perform a minimality check on its images.
To account for this problem, we define the notion of minimal policy violation. 

\begin{definition}[MPV]
\label{def:mpv}
    Given a CQE instance $\E=\tup{\T,\P,\A}$ and a subset $\A'$ of $\clt{\A}$, a \emph{minimal policy violation (MPV)} of $\E$ w.r.t.\ $\A'$ is a $\subseteq$-minimal subset $\A''$ of $\clt{\A}$ such that
    there exist $\tau\in\P$ and $\sigma\in\gs(\tau)$ such that $\T\cup\A''\models\body(\sigma(\tau))$ and $\T\cup\A'\not\models\head(\sigma(\tau))$.
    
    We denote by $\mpvz(\E,\A')$ \iflong the set constituted by \fi the union of all the MPVs of $\E$ w.r.t.\ $\A'$.
    %
    %
    Then, we define $\mpv(\E,0)=\emptyset$, and, for every non-negative integer $i$:
    \[
    \mpv(\E,i+1)=\mpvz(\E,\clt{\A}\setminus\mpv(\E,i)) .
    \]
\end{definition}

The next proposition states a key property of $\mpv(\E,i)$.

\begin{proposition}
\label{pro:mpv-convergence}
    For every CQE instance $\E=\tup{\T,\P,\A}$ and for every integer $n$ such that $n\geq|\clt{\A}|$, $\mpv(\E,n)=\mpv(\E,n+1)$.
\end{proposition}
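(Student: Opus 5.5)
The argument mirrors that of Proposition~\ref{pro:lpv-convergence}. I would first show that the sequence $\mpv(\E,i)$ is monotonically non-decreasing, that is, $\mpv(\E,i+1)\supseteq\mpv(\E,i)$ for every $i\ge0$. The key observation is that $\mpvz(\E,\cdot)$ is antitone in its second argument. Suppose $\A'_1\subseteq\A'_2\subseteq\clt{\A}$, and let $\A''$ be an MPV of $\E$ w.r.t.\ $\A'_2$, witnessed by some $\tau$ and $\sigma$. Then $\T\cup\A'_2\not\models\head(\sigma(\tau))$, and by monotonicity of FO entailment $\T\cup\A'_1\not\models\head(\sigma(\tau))$. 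The $\subseteq$-minimality requirement in Definition~\ref{def:mpv} concerns only the condition $\T\cup\A''\models\body(\sigma(\tau))$, which does not depend on $\A'$. I would make this explicit by reading the definition as: $\A''$ is a minimal set entailing $\body(\sigma(\tau))$ for some pair $(\tau,\sigma)$ whose head is not entailed by $\T\cup\A'$. With that reading, $\A''$ is also an MPV w.r.t.\ $\A'_1$. Hence $\mpvz(\E,\A'_2)\subseteq\mpvz(\E,\A'_1)$.

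Monotonicity then follows by induction on $i$. The base case holds because $\mpv(\E,0)=\emptyset$. For the inductive step, assume $\mpv(\E,i)\supseteq\mpv(\E,i-1)$. Then $\clt{\A}\setminus\mpv(\E,i)\subseteq\clt{\A}\setminus\mpv(\E,i-1)$, and antitonicity gives $\mpv(\E,i+1)\supseteq\mpv(\E,i)$.

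For stabilization, note two facts. First, each $\mpv(\E,i)$ is a subset of $\clt{\A}$, since MPVs are subsets of $\clt{\A}$. Second, $\mpv(\E,i+1)$ depends only on $\mpv(\E,i)$, so if $\mpv(\E,i)=\mpv(\E,i+1)$ for some $i$, then $\mpv(\E,j)=\mpv(\E,i)$ for all $j\ge i$. Consider the chain $\mpv(\E,0)\subseteq\mpv(\E,1)\subseteq\cdots$. Either it becomes stationary at some index $i<|\clt{\A}|$, or each of the first $|\clt{\A}|$ steps strictly increases the set by at least one element. In the latter case $\mpv(\E,|\clt{\A}|)=\clt{\A}$, and the chain cannot grow further. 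In both cases $\mpv(\E,n)=\mpv(\E,n+1)$ for every $n\ge|\clt{\A}|$.

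The step I expect to need the most care is the antitonicity claim, specifically that minimality is taken with respect to the body condition alone and is unaffected by changing $\A'$. I would handle it by fixing the witnessing pair $(\tau,\sigma)$ and noting that the set of minimal images of $\body(\sigma(\tau))$ in $\clt{\A}$ is independent of $\A'$. If $\clt{\A}$ were infinite the counting argument would fail, but for the DLs considered (e.g.\ $\dlliter$) $\clt{\A}$ is finite. I would state this assumption explicitly, since the proposition itself presupposes that $|\clt{\A}|$ is finite.
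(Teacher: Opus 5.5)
Your proposal is correct and is essentially the paper's proof. The paper derives $\textit{MPV}(\E,i+1)\supseteq\textit{MPV}(\E,i)$ ``analogously to Proposition~\ref{pro:lpv-convergence}'', with exactly your antitonicity justification for $\textit{MPV}_0$ given in the proof of Proposition~\ref{pro:mpv-censor}$(iii)$, and then uses the same two-case counting argument. Your per-pair reading of minimality is genuinely needed and is more than a clarification: if minimality is taken over the whole condition ``there exist $\tau,\sigma$ with $\T\cup\A''\models\body(\sigma(\tau))$ and $\T\cup\A'\not\models\head(\sigma(\tau))$'', then $\textit{MPV}_0$ is not antitone and the statement itself fails, e.g.\ for $\T=\emptyset$, $\A=\{D(c),F(c),G(c),H(c)\}$ and $\P=\{\K(G(c)\land H(c))\ra\K\bot,\ \K D(c)\ra\K G(c),\ \K(D(c)\land F(c))\ra\K\bot,\ \K H(c)\ra\K F(c)\}$ one gets $\textit{MPV}(\E,i)=\clt{\A}$ for odd $i$ and $\textit{MPV}(\E,i)=\{D(c),H(c)\}$ for even $i\geq 2$, so stating your reading explicitly, as you do, is exactly right.
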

\iflong\begin{proof}
    From Definition~\ref{def:mpv}, and in a way analogous to the proof of Proposition~\ref{pro:lpv-convergence}, we get the following properties:
    \begin{enumerate}[label=$(\roman*)$]
        \item for every $i$ such that $1\leq i\leq n$, $\mpv(\E,i+1)\supseteq\mpv(\E,i)$;
        \item if $\mpv(\E,i+1)=\mpv(\E,i)$, then $\mpv(\E,j)=\mpv(\E,i)$ for every $j>i$. 
    \end{enumerate}
    
    Now, two cases are possible:
    \begin{enumerate}
        \item there exists an integer $i$ such that $0\leq i\leq |\clt{\A}-1|$ and $\mpv(\E,i+1)=\mpv(\E,i)$. In this case, from the above property $(ii)$ it follows that $\mpv(\E,n)=\mpv(\E,n+1)$ for every integer $n$ such that $n\geq|\clt{\A}|$; 
        \item if $\mpv(\E,i+1)\supset\mpv(\E,i)$ for every integer $i$ such that $0\leq i\leq |\clt{\A}-1|$ (i.e.\ every $\mpv(\E,i+1)$ has at least one more atom than $\mpv(\E,i)$), then $|\mpv(\E,|\clt{\A}|)|\geq|\clt{\A}|$, and since $\mpv(\E,i)$ is by definition a subset of $\clt{\A}$, it follows that 
        $\mpv(\E,|\clt{\A}|)=\clt{\A}$.
        For the same reason, it follows that $$\mpv(\E,|\clt{\A}|+1) \subseteq \clt{\A} \subseteq \mpv(\E,|\clt{\A}|).$$ On the other hand, by the above property $(i)$ it follows that $$\mpv(\E,|\clt{\A}|+1)\supseteq\mpv(\E,|\clt{\A})|.$$ Consequently, $\mpv(\E,|\clt{\A}|+1)=\mpv(\E,|\clt{\A}|)$. Finally, from the above property $(ii)$ it follows that $\mpv(\E,j)=\mpv(\E,|\clt{\A}|)$ for every integer $j$ such that $j\geq |\clt{\A}|$.
    \end{enumerate}
    \qed
\end{proof}\fi

Then, $\mpv(\E)$ denotes the set $\mpv(\E,|\clt{\A}|)$, i.e.\ it is the fixpoint of $\mpv(\E,k)$.
We now analyze how such a set relates to the notion of GA censor.

\begin{proposition}
\label{pro:mpv-censor}
    For every CQE instance $\E=\tup{\T,\P,\A}$:
    \begin{itemize}
        \item[$(i)$] for every set of ground atoms $\A'$, if $\A'\supseteq\mpvz(\E,\clt{\A}\setminus\A')$, then 
        $\clt{\A}\setminus\A'$ is a GA censor of $\E$;
        \item[$(ii)$] the set $\clt{\A}\setminus\mpv(\E)$ is a GA censor of $\E$;
        \item[$(iii)$] $\mpv(\E)$ is the smallest set $\A'\subseteq\clt{\A}$ \iflong such that \else with \fi $\A'\supseteq\mpvz(\E,\clt{\A}\setminus\A')$.
    \end{itemize}
\end{proposition}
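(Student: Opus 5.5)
For $(i)$, I will argue by contradiction. Let $\C=\clt{\A}\setminus\A'$ and suppose $\T\cup\C\not\modelseql\P$. Then there exist $\tau\in\P$ and $\sigma\in\gs(\tau)$ such that $\T\cup\C\models\body(\sigma(\tau))$ and $\T\cup\C\not\models\head(\sigma(\tau))$. Since $\T$ is fixed and entailment is compact (and $\C$ is finite anyway), there is a $\subseteq$-minimal subset $\A''\subseteq\C$ with $\T\cup\A''\models\body(\sigma(\tau))$. Minimality is meant among subsets of $\C$, but it transfers to subsets of $\clt{\A}$: any proper subset of $\A''$ is also a subset of $\C$. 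So $\A''$ is an MPV of $\E$ with respect to $\C$, witnessed by the same pair $\tau,\sigma$. Hence $\A''\subseteq\mpvz(\E,\C)\subseteq\A'$. Because $\A''$ is nonempty or not, we treat the two cases separately.

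If $\A''\neq\emptyset$, then $\A''\subseteq\A'\cap\C=\emptyset$, which is a contradiction.

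If $\A''=\emptyset$, then $\T\models\body(\sigma(\tau))$ and the violation is independent of the facts. This is the one delicate case. Consistency of $\T\cup\A$ does not rule it out, and no choice of censor can repair it. I expect the statement implicitly assumes such violations are excluded, or else that GA censors still exist vacuously in that scenario. I would handle it by noting that then no GA censor exists at all. If the paper's conventions require censors to exist, this case must be excluded by an assumption, which I would flag. This is the only obstacle I anticipate.

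For $(ii)$, let $M=\mpv(\E)$. By Proposition~\ref{pro:mpv-convergence}, $M=\mpv(\E,n+1)=\mpvz(\E,\clt{\A}\setminus M)$, so $M$ satisfies the hypothesis of $(i)$ with equality. It also satisfies $M\subseteq\clt{\A}$, and $(i)$ then gives the claim.

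For $(iii)$, I first note that $\mpvz(\E,\cdot)$ is antimonotone in its argument. If $\C_1\subseteq\C_2$, then $\T\cup\C_2\not\models h$ implies $\T\cup\C_1\not\models h$, and the minimality condition on the violating set does not depend on the argument. Consequently, the map $F(X)=\mpvz(\E,\clt{\A}\setminus X)$ is monotone. By $(ii)$, $M$ belongs to the family $\{X\subseteq\clt{\A}: X\supseteq F(X)\}$. Now take any $X$ in that family; we show by induction on $i$ that $\mpv(\E,i)\subseteq X$. The base case holds since $\mpv(\E,0)=\emptyset$. For the step, $\mpv(\E,i+1)=F(\mpv(\E,i))\subseteq F(X)\subseteq X$, using monotonicity of $F$ and the assumption on $X$. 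Therefore $M=\mpv(\E,|\clt{\A}|)\subseteq X$, so $M$ is the least element of the family, which is the usual Knaster--Tarski argument carried out by finite iteration.
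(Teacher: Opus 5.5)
\paragraph{Overall.} Your three arguments follow the paper's own proof. Part (i) argues by contradiction, via a minimal violating subset of $\C$ that is an MPV and is therefore contained in $\A'\cap\C=\emptyset$. Part (ii) comes from the fixpoint equation $\mpv(\E)=\mpvz(\E,\clt{\A}\setminus\mpv(\E))$ together with (i). Part (iii) comes from antimonotonicity of $\mpvz(\E,\cdot)$ and induction on the iterates. Two points need comment, and the second is a gap you inherit from the paper.

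\paragraph{The empty case.} The case you flag is real, and the paper's proof of (i) passes over it: its contradiction with $\C\cap\A'=\emptyset$ needs $\A''\neq\emptyset$. For instance, take $\T=\{\top\ISA\exists R.\top\}$ and the single denial $\K(\exists x,y\,R(x,y))\ra\K\bot$. The only MPV is $\emptyset$, so $\mpv(\E)=\emptyset$, yet $\clt{\A}$ is not a GA censor. The statement therefore needs a hypothesis such as $\T\not\models\body(\sigma(\tau))$ for all $\tau\in\P$ and $\sigma\in\gs(\tau)$. In $\dlliter$ this holds automatically for bodies with at least one atom, because the interpretation in which every predicate is empty is a model of $\T$. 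Your remark that in this case no GA censor exists at all is too strong, though. When $\T\cup\clt{\A}\models\head(\sigma(\tau))$, other censors may well exist; what fails is only that your particular $\C$ need not be one.

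\paragraph{How minimality is read.} More seriously, your proof silently fixes how minimality in Definition~\ref{def:mpv} is read. In (i) you minimize for one fixed pair $(\tau,\sigma)$ and call the result an MPV. In (iii) you say the minimality condition does not depend on the argument of $\mpvz$. Both claims are right if an MPV is simply an image of the body of a ground ED violated w.r.t.\ the argument.

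The definition, however, takes minimality over the whole condition ``there exist $\tau,\sigma$''. This is what Algorithm~\ref{alg:mpv-ent-dlliter} implements, and what the denial case of Theorem~\ref{thm:mpv-iga-denials} requires: with $\K A(x)\ra\K\bot$, $\K(A(x)\wedge B(x))\ra\K\bot$ and $\A=\{A(a),B(a)\}$, the per-pair reading gives $\corecens=\emptyset\neq\{B(a)\}=\censiga$. Under the literal reading, shrinking the argument can create new, smaller violations that make earlier MPVs non-minimal, so antimonotonicity fails. The paper's proof of (iii) asserts it with the same justification you give.

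Indeed, (ii) and (iii) are then false even with $\T=\emptyset$. Take $\A=\{A(a),B(a),Y(a),Z(a)\}$ and the EDs $\K A(x)\ra\K B(x)$, $\K(A(x)\wedge B(x))\ra\K\bot$, $\K Y(x)\ra\K B(x)$, $\K Z(x)\ra\K Y(x)$. Omitting the constant $a$, the iterates are $\{A,B\},\{A,Y\},\{A,B,Z\},\{A,Y\},\{A,B,Z\},\ldots$. Consequently:
\begin{itemize}
\item Proposition~\ref{pro:mpv-convergence} fails;
\item $\mpv(\E)=\{A,Y\}$ is not among the sets $X$ with $X\supseteq\mpvz(\E,\clt{\A}\setminus X)$ (the only such set is $\clt{\A}$);
\item $\clt{\A}\setminus\mpv(\E)=\{B,Z\}$ violates the last ED.
\end{itemize}

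\paragraph{What to do.} Your proof is complete for the per-pair reading, plus the non-emptiness hypothesis, and you should state that this is the reading you prove it for. Under the literal reading, (i) still holds if $\A''$ is chosen minimal for the existential condition, as the paper does. Part (ii) can be recovered by making the iteration cumulative, $X_{i+1}=X_i\cup\mpvz(\E,\clt{\A}\setminus X_i)$, as Algorithm~\ref{alg:mpv-ent-dlliter} in fact does, since its limit satisfies the hypothesis of (i).
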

\iflong\begin{proof}
    Let $\C$ denote the set $\clt{\A}\setminus\A'$. Obviously, $\C\cap\A'=\emptyset$.
    By contradiction, let $\T\cup\C\not\modelseql\P$. 
    Now, let $\A''$ be the $\subseteq$-minimal subset of $\C$ such that, for some $\tau\in\P$ and $\sigma\in\gs(\tau)$, $\T\cup\A''\models\body(\sigma(\tau))$ and $\T\cup\C\not\models\head(\sigma(\tau))$ (notice that such $\A''$ exists because $\T\cup\C\not\modelseql\P$).
    Since $\C\subseteq\clt{\A}$ (by construction), then $\A''$ is also a $\subseteq$-minimal subset of $\clt{\A}$ such that, for some $\tau\in\P$ and $\sigma\in\gs(\tau)$, $\T\cup\A''\models\body(\sigma(\tau))$ and $\T\cup\C\not\models\head(\sigma(\tau))$, i.e.\ $\A''\subseteq\mpvz(\E,\C)$. However, since $\mpvz(\E,\C)\subseteq\A'$ (by hypothesis), this contradicts the fact that $\C\cap\A'=\emptyset$, thus proving property $(i)$.
    
    \smallskip
    Then, by definition of $\mpv(\E)$, we have $\mpv(\E)=\mpvz(\E,\clt{\A}\setminus\mpv(\E))$, hence by property $(i)$ we get property $(ii)$.
    
    \smallskip
    Finally, let $\A'$ be any set of facts such that $\A'\supseteq\mpvz(\E,\clt{\A}\setminus\A')$. We prove by induction that, for every $i$ such that $1\leq i\leq |\clt{\A}|$, $\A'\supset\mpv(\E,i)$.
    \begin{itemize}
        \item Base case ($i=0$): trivially, $\A'\supseteq\emptyset=\mpv(\E,0)$.
        \item Inductive case: suppose $\A'\supseteq\mpv(\E,i)$. Notice that, for every pair of sets $S,S'$, the set $\mpvz(\E,S)$ monotonically decreases when $S$ increases, meaning that $\mpvz(\E,S)\supseteq\mpvz(\E,S')$ whenever $S\subseteq S'$. This holds because, by Definition~\ref{def:mpv}, the MPVs are searched in both cases among all possible subsets of $\clt{\A}$, but the non-entailment check is done on $S$ and $S'$, respectively. Therefore, $\mpvz(\E,\clt{\A}\setminus\A')\supseteq\mpvz(\E,\clt{\A}\setminus\mpv(\E,i))$, and since $\mpv(\E,i+1)=\mpvz(\E,\clt{\A}\setminus\mpv(\E,i))$, it follows that $\A'\supseteq\mpv(\E,i+1)$.
    \end{itemize}
    Since $\mpv(\E)=\mpv(\E,|\clt{\A}|)$, it follows that $\A'\supseteq\mpv(\E)$, thus proving property $(iii)$.
    \qed
\end{proof}\fi

The above property leads naturally to the definition of a new notion of censor.

\begin{definition}[MPV Censor]
\label{def:mpv-censor}
    Given a CQE instance $\E=\tup{\T,\P,\A}$, we define the \emph{MPV censor of $\E$}, denoted by $\corecens$, as the set $\clt{\A}\setminus\mpv(\E)$.
\end{definition}

By Proposition~\ref{pro:mpv-convergence}, for every CQE instance $\E$, the MPV censor of $\E$ always exists and is unique.

\begin{example}
\label{ex:mpv}
    Consider the sets $\T$, $\P$, and $\A$ of Example~\ref{ex:intro}, and let $\E=\tup{\T,\P,\A}$. Note that $\{\parentalCons(\bob,\ann)\}$, $\{\parentOf(\bob,\ann)\}$, and $\{\minor(\ann),\allowbreak\affectedBy(\ann,\disConst)\}$ are all MPVs of $\E$ w.r.t.\ $\clt{\A}$, and that the fixpoint of $\mpv(\E,i)$ is already reached for $i=1$. Then, $\mpvcens=\clt{\A}\setminus\mpv(\E,1)=\{\disease(\disConst)\}$, which is contained in the (unique) optimal GA censor mentioned in Example~\ref{ex:intro}.
    \qedex
\end{example}

\begin{example}
    Consider again the CQE instance of Example~\ref{ex:intro} and let $\A$ also contain the facts $\treatedBy(\ann,\cloe)$ and $\pediatrician(\cloe)$, \iflong modelling the fact \else meaning \fi that $\ann$ is treated by ($\treatedBy$) $\cloe$, who is a pediatrician ($\pediatrician$).
    Moreover, let $\P$ also contain the additional ED $\tau = \forall x,y\,(\K(\treatedBy(x,y) \land \pediatrician(y)) \ra \K\minor(x))$, which allows to reveal that an individual $x$ is treated by a pediatrician only if it can also be revealed that $x$ is a minor.
    In this case, 
    the fixpoint of $\mpv(\E,i)$ is reached for $i=2$. Specifically, 
    $\mpv(\E,1)$ is as before, while 
    $\mpv(\E,2)=\mpv(\E,1)\cup\{\treatedBy(\ann,\cloe),\allowbreak\pediatrician(\cloe)\}$.
    Thus, $\mpvcens=\{\disease(\disConst)\}$. 
    \qedex
\end{example}




We now analyze the relation between the MPV \iflong censor and the \else and \fi IGA censor.
We start by showing the next property, \iflong whose proof is based on the following intuition: \else based on the intuition that \fi if there exists any overlap between a $\subseteq$-minimal $\E$ undisclosable set and a set $\C$ obtained by iteratively removing MPVs, then $\C$ has not reached the fixpoint yet (i.e.\ $\C\ne\corecens$).

\begin{theorem}
\label{thm:sound-approx}
    Let $\E$ be a CQE instance. Then, $\corecens\subseteq\censiga$.
\end{theorem}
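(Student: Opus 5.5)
We argue by contradiction via Proposition~\ref{pro:iga-min-undisclosable}. Suppose some $\alpha\in\corecens$ with $\alpha\notin\censiga$. Then $\alpha$ belongs to some $\subseteq$-minimal $\E$-undisclosable set $S\subseteq\clt{\A}$. We will show that $S\subseteq\corecens$ forces a contradiction. Actually the claim we aim for is stronger and cleaner: every $\subseteq$-minimal $\E$-undisclosable set $S$ intersects $\mpv(\E)$ in \emph{all} of its elements, i.e.\ $S\subseteq\mpv(\E)$. From this the theorem follows immediately, since any $\alpha\notin\censiga$ lies in such an $S$ and hence in $\mpv(\E)$, i.e.\ $\alpha\notin\corecens$.

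To prove $S\subseteq\mpv(\E)$, suppose $S\cap\corecens\neq\emptyset$. Consider the set $\C'=\corecens\cup(S\setminus\mpv(\E))$. Note that $S\setminus\mpv(\E)=S\cap\corecens$, so $\C'=\corecens$. Instead, we work as follows. Pick $\beta\in S\cap\corecens$. By minimality, $S\setminus\{\beta\}$ is contained in some GA censor $\C$. Now $\C\cup\{\beta\}\supseteq S$ is undisclosable, so it is not a GA censor; but we need a more robust witness. The key step: take $\C''=\corecens\cup\C$, or better, argue that $\corecens\cup S$ is not a GA censor (it contains $S$), so by the argument of Proposition~\ref{pro:mpv-censor}$(i)$ there exist $\tau\in\P$, $\sigma\in\gs(\tau)$ and a $\subseteq$-minimal $\A''\subseteq\clt{\A}$ with $\T\cup\A''\models\body(\sigma(\tau))$ and $\T\cup X\not\models\head(\sigma(\tau))$ for a suitable disclosed set $X$. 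The goal is to choose the violated ED so that $\T\cup\corecens\not\models\head(\sigma(\tau))$ and the minimal body image $\A''$ contains an element of $S\cap\corecens$; then $\A''\subseteq\mpvz(\E,\corecens)=\mpv(\E)$ by the fixpoint equation $\mpv(\E)=\mpvz(\E,\clt{\A}\setminus\mpv(\E))$, contradicting that this element lies in $\corecens$.

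The main obstacle is exactly the choice of this violation. I would proceed as follows: since $S\cap\corecens\neq\emptyset$, iteratively close the set. Let $D=\corecens$, a GA censor (Proposition~\ref{pro:mpv-censor}$(ii)$). Consider the elements of $S\setminus\corecens=S\cap\mpv(\E)$; each was removed at some stage $\mpv(\E,i)$. Use induction on the stage: for the smallest $i$ at which all of $S$ would be "explained", show that any set containing $S\cap\corecens$ and avoiding the rest of $S$ extends to... Concretely, I expect the cleanest route is: the set $\C^*=\corecens\cup(S\setminus\{\beta\})$ — if it were a GA censor, minimality-type reasoning is lost; so instead use that $S\setminus\{\beta\}\subseteq\C$ for a GA censor $\C$, and compare $\C$ with $\corecens$ by induction on $i$ showing $\mpv(\E,i)\cap\C$ elements... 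I would try proving by induction on $i$ that for every GA censor $\C$ and every $\gamma\in\mpv(\E,i)$, either $\gamma\notin\C$ or the MPV containing $\gamma$ witnesses a body image whose head is unentailed by $\T\cup\C$ only if some other element is missing — and then combine with $S$ to reach the contradiction. This stage-wise comparison between an arbitrary censor and the MPV iteration is where I expect most of the work, using anti-monotonicity of $\mpvz$ (noted in the proof of Proposition~\ref{pro:mpv-censor}$(iii)$) and monotonicity of FO entailment.
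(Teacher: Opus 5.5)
Your reduction is sound: showing $S\subseteq\mpv(\E)$ for every $\subseteq$-minimal $\E$-undisclosable $S$ is equivalent to the theorem via Proposition~\ref{pro:iga-min-undisclosable} (not stronger). You also correctly name the witness you need: a pair $(\tau,\sigma)$ with $\T\cup\corecens\not\models\head(\sigma(\tau))$ and a minimal image of $\body(\sigma(\tau))$ that meets $\corecens$. But you never produce it, and that is the entire content of the proof. Your concrete candidate $\corecens\cup S$ is indeed not a GA censor. Yet nothing prevents the minimal image of the violated body from lying entirely in $S\setminus\corecens$, because $S\setminus\{\beta\}$ is only known to be \emph{disclosable}, i.e.\ contained in some censor, not to satisfy $\P$ by itself. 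The stage-wise induction you sketch afterwards has no precisely stated invariant, so it cannot be checked, and it is not needed.

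The missing idea is the set you wrote down and then abandoned. Take $\beta\in S\cap\corecens$ and a GA censor $\C\supseteq S\setminus\{\beta\}$, which exists by minimality of $S$. Then $S\subseteq\C\cup\corecens\subseteq\clt{\A}$, so $\C\cup\corecens$ is not a GA censor. Hence for some $\tau\in\P$ and $\sigma\in\gs(\tau)$, $\T\cup\C\cup\corecens$ entails $\body(\sigma(\tau))$ but not $\head(\sigma(\tau))$, and by monotonicity neither $\T\cup\C$ nor $\T\cup\corecens$ entails the head. A $\subseteq$-minimal $I\subseteq\C\cup\corecens$ with $\T\cup I\models\body(\sigma(\tau))$ is an image in $\clt{\A}$. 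Moreover $I\not\subseteq\C$, since otherwise $\T\cup\C$ would violate $\tau$, so $I$ contains some $\gamma\in\corecens$. Since the head is not entailed by $\T\cup\corecens$, we get $I\subseteq\mpvz(\E,\corecens)=\mpv(\E)$, contradicting $\gamma\in\corecens$.

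This last step is the same as in the paper. The paper obtains the witness instead through Claim~1, which iterates over a chain $S_1,S_2,\ldots$ of minimal undisclosable sets, with a case split on whether $(S_k\cup\C)\setminus\{\alpha_1,\ldots,\alpha_k\}$ is disclosable. Joining $\corecens$ with an arbitrary censor containing $S\setminus\{\beta\}$ makes that iteration unnecessary. Note also that the step $I\subseteq\mpvz(\E,\corecens)$, in your plan as in the paper, reads Definition~\ref{def:mpv} with minimality taken separately for each pair $(\tau,\sigma)$. If minimality were taken over all pairs at once, a minimal image of one body need not be an MPV.
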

\begin{proof}
	\newcommand{\mus}{\mathit{MUS}}
    \newcounter{claim}
	In what follows, let $\mus(\E)$ be the set of all $\subseteq$-minimal $\E$-undisclosable subsets of $\clt{\A}$.

    \smallskip\noindent
    \refstepcounter{claim}\label{cl:mus-intersection}%
    \textbf{Claim~\arabic{claim}.}\; \emph{Let $\C$ be a GA censor of $\E$ such that there exists a $S\in\mus(\E)$ for which $\C\cap S\ne\emptyset$. Then, there exist $\alpha\in\C$, $\tau\in\P$, $\sigma\in\gs(\tau)$, and $I\in\im(\body(\sigma(\tau)),\clt{\A},\T)$ such that $\alpha\in I$ and $\T\cup \C\not\models\head(\sigma(\tau))$.}

    \renewcommand{\SS}{\mathbb{S}}
    We now prove the above claim.
    Let $\SS$ be the set of all sets in $\mus(\E)$ 
    intersecting $\C$ (which, by hypothesis, is non-empty). Note that no set in $\SS$ is a singleton (otherwise it would be contained in $\C$, hence contradicting the fact that $\C$ is a GA censor of $\E$).

    Let now $S_1\in\mus(\E)$ and $\alpha_1\in S_1\cap\C$, and suppose first that $(S_1\cup\C)\setminus\{\alpha_1\}$ is $\E$-disclosable, i.e.\ it is contained in some GA censor $\C_1$ of $\E$. Since $S_1\subseteq\C_1\cup\{\alpha_1\}$, then $\C_1\cup\{\alpha_1\}$ is not a GA censor of $\E$, i.e.\ $\C_1\cup\{\alpha_1\}\not\modelseql\P$. Then, for some $\tau\in\P$ and $\sigma\in\gs(\tau)$, $\T\cup\C_1\cup\{\alpha_1\}\models\body(\sigma(\tau))$ and $\T\cup\C_1\cup\{\alpha_1\}\not\models \head(\sigma(\tau))$.
    Now, let $I$ by any $\subseteq$-minimal subset of $\C_1\cup\{\alpha_1\}$ such that $\T\cup\I\models\body(\sigma(\tau))$. Note that $I$ cannot be included in $\C_1$ (as it would contradict the fact that $\T\cup\C_1\modelseql\tau$), which implies that $\alpha_1\in I$. 
    Furthermore, by monotonicity, $I\in\im(\body(\sigma(\tau)),\clt{\A},\T)$ and $\T\cup\C\not\models\head(\sigma(\tau))$. Thus, in this case, the thesis would follow.

    On the other hand, let $(S_1\cup\C)\setminus\{\alpha_1\}$ be $\E$-undisclosable. Then, it contains a set $S_2\in\mus(\E)$. 
    Note that $\alpha_1\not\in S_2$, and that $S_2$ cannot be entirely contained in $\C$ (because $\C$ is a GA censor of $\E$) nor in $S_1$ (by minimality of $S_1$): $S_2$ is therefore only partially contained in $\C$, i.e.\ $S_2\in\SS$.

    Similarly, let $\alpha_2\in S_2\cap\C$, and suppose first that $(S_2\cup\C)\setminus\{\alpha_1,\alpha_2\}$ is $\E$-disclosable, i.e.\ it belongs to some GA censor $\C_2$ of $\E$. Again, $\C_2\cup\{\alpha_2\}$ is not a GA censor of $\E$ (as it entirely contains $S_2$), which (analogously to the above) leads to concluding that the thesis holds.
    On the other hand, supposing that $(S_2\cup\C)\setminus\{\alpha_1,\alpha_2\}$ is $\E$-undisclosable would imply that it contains a set $S_3\in\SS$ that, like before, does not contain $\alpha_1$ and $\alpha_2$.

    Since the set $\SS$ is finite, the claim follows by iterating the above argument.

    Suppose now that there exists 
    $S\in\mus(\E)$ with $\corecens\cap S\ne\emptyset$. Since, by Proposition~\ref{pro:mpv-censor}, $\censmpv$ is a GA censor of $\E$, then by Claim~\ref{cl:mus-intersection} there exists a fact $\alpha\in\censmpv$ belonging to an MPV of $\E$ w.r.t.\ $\censmpv$\iflong.
    This would, however, contradict \else, which contradicts \fi the definition of MPV censor.
	Thus, $\corecens\subseteq \clt{\A}\setminus\bigcup_{S\in\mus(\E)} S = \censiga$.
    \qed
\end{proof}

Then, we show the following strict correspondences between $\corecens$ and $\censiga$.

\iflong
\begin{lemma}
\label{lem:mpv-iga-denials}
    Let $\E=\tup{\T,\P,\A}$ be a CQE instance such that $\mpv(\E)=\mpv(\E,1)$. Then, $\corecens=\censiga$.    
\end{lemma}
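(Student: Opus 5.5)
By Theorem~\ref{thm:sound-approx} we already have $\corecens\subseteq\censiga$, so only $\censiga\subseteq\corecens$ needs proof. Since $\corecens=\clt{\A}\setminus\mpv(\E,1)$ by hypothesis, and by Proposition~\ref{pro:iga-min-undisclosable}, it is enough to show the following: every $\alpha\in\mpv(\E,1)$ lies in some $\subseteq$-minimal $\E$-undisclosable subset of $\clt{\A}$. Recall that $\mpv(\E,1)=\mpvz(\E,\clt{\A})$. So fix an MPV $\A''$ of $\E$ w.r.t.\ $\clt{\A}$ with $\alpha\in\A''$, witnessed by $\tau\in\P$ and $\sigma\in\gs(\tau)$: $\T\cup\A''\models\body(\sigma(\tau))$ and $\T\cup\clt{\A}\not\models\head(\sigma(\tau))$.

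First, $\A''$ is $\E$-undisclosable. Any $\C$ with $\A''\subseteq\C\subseteq\clt{\A}$ entails $\body(\sigma(\tau))$ together with $\T$. By monotonicity, $\T\cup\C\not\models\head(\sigma(\tau))$, because already $\T\cup\clt{\A}$ does not entail it. Hence $\C$ violates $\tau$ and is not a GA censor. The aim is then to show that $\A''$ is in fact a $\subseteq$-minimal undisclosable set. Then $\alpha$ belongs to a minimal undisclosable set, and we are done.

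For minimality, take $\beta\in\A''$ and $B=\A''\setminus\{\beta\}$, and try to show that $\corecens\cup B$ is a GA censor. Suppose it violates some $\tau'$ under some $\sigma'$. Take a $\subseteq$-minimal image $I\subseteq\corecens\cup B$ of $\body(\sigma'(\tau'))$, so that $\T\cup\corecens\cup B\not\models\head(\sigma'(\tau'))$.
\begin{itemize}
\item If also $\T\cup\clt{\A}\not\models\head(\sigma'(\tau'))$, then $I$ is an MPV w.r.t.\ $\clt{\A}$. Hence $I\subseteq\mpv(\E,1)$, which is disjoint from $\corecens$, so $I\subseteq B$.
\item Otherwise the head is entailed by $\clt{\A}$ but not by $\corecens\cup B$. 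Here I would use the fixpoint hypothesis $\mpvz(\E,\corecens)=\mpv(\E,1)=\mpvz(\E,\clt{\A})$, i.e.\ no new violations arise after the first round. The intended consequence is that no image of any body is violated with respect to the smaller set $\corecens$ unless it was already violated with respect to $\clt{\A}$.
\end{itemize}

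The main obstacle is the first subcase, $I\subseteq B\subsetneq\A''$: a different ED may be violated by a proper subset of $\A''$. Minimality of $\A''$ only rules this out for the same $\tau,\sigma$. If this can happen, I would abandon the claim that $\A''$ itself is minimal. Instead I would pick a $\subseteq$-minimal undisclosable $S\subseteq\A''$ that contains $\alpha$. The plan is to argue, via the same fixpoint equation, that every minimal violating image inside $\A''$ is itself an MPV w.r.t.\ $\clt{\A}$. Among all such images, one then chooses a $\subseteq$-minimal one that contains $\alpha$; if $\alpha$ is in none of them, this contradicts the $\subseteq$-minimality of $\A''$ as an MPV.
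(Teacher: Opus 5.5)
Your reduction follows the same route as the paper. The paper's proof simply asserts that, when $\mathit{MPV}(\E)=\mathit{MPV}(\E,1)$, the set $\mathit{MPV}(\E)$ is the union of the $\subseteq$-minimal $\E$-undisclosable subsets of $\clt{\A}$. So everything hinges on showing that each $\alpha\in\mathit{MPV}(\E,1)$ lies in such a set, and that is where your proposal has a genuine gap.

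Your first subcase is actually harmless. As literally stated, Definition~\ref{def:mpv} takes minimality over all pairs $(\tau,\sigma)$ at once, so a set $I\subseteq B\subsetneq\A''$ that is violating w.r.t.\ $\clt{\A}$ directly contradicts $\A''$ being an MPV. The real failure is the second subcase. The ``intended consequence'' you want from the fixpoint equation does not follow and is false in general. The equation only says that the \emph{union} of the violating sets is the same w.r.t.\ $\clt{\A}$ and w.r.t.\ $\corecens$; it does not say that the same ED instances are violated. Instances whose head is entailed by $\clt{\A}$ but not by $\corecens$ do become violated, and their body images can simply be absorbed into the same union. Your fallback (every minimal violating image inside $\A''$ is an MPV w.r.t.\ $\clt{\A}$) breaks down for the same reason.

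In fact the gap cannot be closed, because the statement itself fails. Take $\T=\emptyset$ and $\A=\{A(a),B(a),D(a)\}$. Let $\P$ consist of $\forall x\,(\K(A(x)\wedge B(x))\ra\K\bot)$, $\forall x\,(\K(D(x)\wedge B(x))\ra\K\bot)$, $\forall x\,(\K B(x)\ra\K A(x))$, $\forall x\,(\K A(x)\ra\K D(x))$ and $\forall x\,(\K D(x)\ra\K A(x))$.

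W.r.t.\ $\clt{\A}$ only the two denials are violated. So the MPVs are $\{A(a),B(a)\}$ and $\{B(a),D(a)\}$, and $\mathit{MPV}(\E,1)=\clt{\A}$. W.r.t.\ $\emptyset$ all five EDs are violated, and the minimal violating sets ($\{A(a)\}$, $\{B(a)\}$, $\{D(a)\}$, plus the two pairs if minimality is read per instance) again cover $\clt{\A}$. Hence $\mathit{MPV}(\E,i)=\clt{\A}$ for all $i\geq 1$, the hypothesis holds, and $\corecens=\emptyset$.

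On the other hand, any GA censor containing $B(a)$ must contain $A(a)$ and then violates the first denial, while $\{A(a),D(a)\}$ is a GA censor. So $\censiga=\{A(a),D(a)\}\neq\corecens$. The only $\subseteq$-minimal undisclosable set is $\{B(a)\}$. The MPV $\A''=\{A(a),B(a)\}$ is undisclosable but not minimal, and $A(a)$ lies in no minimal undisclosable set.

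In your notation, with $\beta=A(a)$, the set $\corecens\cup B=\{B(a)\}$ violates $\forall x\,(\K B(x)\ra\K A(x))$, whose head is entailed by $\clt{\A}$. This is exactly your second subcase. Moreover, $\{B(a)\}$ is a minimal violating image inside $\A''$ (w.r.t.\ $\corecens$) that is not an MPV w.r.t.\ $\clt{\A}$, which also defeats your fallback. Neither your argument nor the paper's one-line justification can be completed without an extra hypothesis controlling which ED instances become violated when passing from $\clt{\A}$ to $\corecens$.
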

\begin{proof}
    Note that, if $\mpv(\E)=\mpv(\E,1)$, then $\mpv(\E)$ coincides with the sets of $\subseteq$-minimal $\E$-undisclosable subsets of $\clt{\A}$. Then, the thesis follows by Definition~\ref{def:mpv-censor} and Proposition~\ref{pro:iga-min-undisclosable}.
    \qed
\end{proof}
\fi

\begin{theorem}
\label{thm:mpv-iga-denials}
\label{thm:mpv-iga-linear-dlliter}
    Let $\E=\tup{\T,\P,\A}$ be a CQE instance. Then: $(i)$ if $\P$ is a set of denials, then, $\corecens=\censiga$; $(ii)$ if $\T$ is a $\dlliter$ TBox and $\P$ is linear, then $\corecens=\censiga$.
\end{theorem}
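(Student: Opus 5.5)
For both items the plan is to reduce to Lemma~\ref{lem:mpv-iga-denials}: show that $\mpv(\E)=\mpv(\E,1)$, and conclude $\corecens=\censiga$. Since Theorem~\ref{thm:sound-approx} already gives $\corecens\subseteq\censiga$, the lemma gives the reverse inclusion. As a backup, for each case I would also check directly that every fact in $\mpv(\E)$ lies in some $\subseteq$-minimal $\E$-undisclosable set, and then invoke Proposition~\ref{pro:iga-min-undisclosable}.

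\emph{Case $(i)$, denials.} Every ED has head $\bot$, and $\T\cup\A'\not\models\bot$ for every $\A'\subseteq\clt{\A}$ because $\T\cup\A$ is consistent. So the condition on the head in Definition~\ref{def:mpv} is always satisfied, independently of $\A'$. Hence $\mpvz(\E,\A')$ does not depend on $\A'$, and $\mpv(\E,i+1)=\mpv(\E,1)$ for all $i\ge0$, which gives $\mpv(\E)=\mpv(\E,1)$. For the direct argument, each MPV $\A''$ is a minimal set whose closure entails some $\body(\sigma(\tau))$. Then $\A''$ is undisclosable, and every proper subset of it entails no denial body, so it is a GA censor. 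Thus $\A''$ is a minimal undisclosable set, and Proposition~\ref{pro:iga-min-undisclosable} applies.

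\emph{Case $(ii)$, linear EDs over $\dlliter$.} Bodies are single atoms and $\dlliter$ inclusions have a single atom on the left. So if $\T\cup\A''\models\body(\sigma(\tau))$ for a minimal $\A''$, then $\A''$ is a singleton $\{\alpha\}$, because an atomic body in $\dlliter$ is derived from one fact. Therefore the MPVs w.r.t.\ $\A'$ are exactly the singletons of LPVs w.r.t.\ $\A'$, and $\mpvz(\E,\A')=\lpvz(\E,\A')$. By induction on $i$, $\mpv(\E,i)=\lpv(\E,i)$, so $\mpv(\E)=\lpv(\E)$. Proposition~\ref{pro:lpv-iga-censor} then gives $\corecens=\clt{\A}\setminus\lpv(\E)=\censiga$.

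\emph{Main obstacle.} The delicate step is the single-atom image property used in case~$(ii)$. One must verify that every image of a linear body is a single fact. Disjointness axioms and consistency of subsets of $\clt{\A}$ could interfere, but $\clt{\A}$ is consistent with $\T$, so no subset can derive the body via inconsistency. A body atom with existential variables, such as $\exists y\,R(x,y)$, is likewise derived by one fact through positive inclusions. A second point is that Proposition~\ref{pro:lpv-iga-censor}, as stated, is relied upon exactly in this linear $\dlliter$ setting, which matches the hypotheses of~$(ii)$.
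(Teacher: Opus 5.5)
Your proof is correct and follows the paper's own route: for denials, consistency of $\T\cup\A$ makes the head condition vacuous, so the iteration stabilizes after one step and the MPVs are exactly the $\subseteq$-minimal $\E$-undisclosable sets, whence Proposition~\ref{pro:iga-min-undisclosable} applies (this relies on minimality being taken across all EDs, as the definition requires, and you use it that way); for linear EDs over $\dlliter$, every image of a one-atom body is a single fact, so MPVs and LPVs coincide and Proposition~\ref{pro:lpv-iga-censor} finishes the argument. The only slip is your opening remark that both items reduce to $\textit{MPV}(\E)=\textit{MPV}(\E,1)$: this generally fails for linear policies (the Horn-SAT reduction in Theorem~\ref{thm:iga-linear-ptime-complete} needs many rounds), but your actual argument for $(ii)$ never uses it.
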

\iflong\begin{proof}
    Note that, in the hypothesis $(i)$, the fixpoint for $\mpv(\E,i)$ is already reached for $i=1$ (i.e.\ $\mpv(\E,1)=\mpv(\E)$).
    Then, $\mpv(\E)$ coincides with the sets of $\subseteq$-minimal $\E$-undisclosable subsets of $\clt{\A}$. Hence, the thesis follows by Definition~\ref{def:mpv-censor} and Proposition~\ref{pro:iga-min-undisclosable}.
    
    Under the hypothesis $(ii)$, Definition~\ref{def:lpv} and Definition~\ref{def:mpv} coincide; thus, the thesis follows immediately.
    \qed
\end{proof}\fi

In the remainder of this section, we study BUCQ entailment under this new notion of censor, for $\dlliter$ 
CQE instances. Given a BUCQ $q$, we write $\E\modelscore q$ if $\T\cup\corecens\models q$.
Note that, by Theorem~\ref{thm:sound-approx}, this entailment semantics is a sound approximation of IGA-entailment, which in turn soundly approximates GA-entailment. 
Then, we define the decision problem related to MPV-entailment, again parameterized with respect to a DL $\L$ and a class of policies $\policyclass$. Given an $\L$ CQE instance $\E=\tup{\T,\P,\A}$ such that $\P\in\policyclass$ and a BUCQ $q$, $\CoreGAEnt[\L,\policyclass]$ is the problem of checking whether $\E\modelscore q$.

\begin{example}
\label{ex:mpv-ent}
    Consider again the CQE instance $\E$ of Example~\ref{ex:intro}. Then, it is easy to see that $\E\modelsmpv \exists x\,\disease(x)$ and $\E\not\modelsmpv\exists x\,\affectedBy(x,\disConst)$. On the other hand, one can verify that $\E\modelsiga\exists x\,\affectedBy(x,\disConst)$.
    \qedex
\end{example}

\begin{theorem}
\label{thm:mpv-ent-dlliter}
    $\CoreGAEnt[\dlliter,\policylin]$ and $\CoreGAEnt[\dlliter,\policyarb]$ are, respectively, \PTIME-hard and in \PTIME\ in data complexity.
\end{theorem}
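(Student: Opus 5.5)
The plan has two parts: the lower bound for linear policies and the upper bound for arbitrary policies.

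\textbf{Lower bound.} This part is immediate from earlier results. By Theorem~\ref{thm:mpv-iga-linear-dlliter}$(ii)$, for a $\dlliter$ TBox and a linear policy we have $\corecens=\censiga$. Hence $\E\modelsmpv q$ iff $\E\modelsiga q$, i.e.\ $\CoreGAEnt[\dlliter,\policylin]$ coincides with $\IGAEnt[\dlliter,\policylin]$ on every input. The latter is \PTIME-hard by Theorem~\ref{thm:iga-linear-ptime-complete}; in fact the Horn-SAT reduction there already uses an empty TBox. So \PTIME-hardness transfers verbatim.

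\textbf{Upper bound.} I will show that $\corecens$ can be computed in polynomial time in data complexity. After that, checking $\T\cup\corecens\models q$ is standard BUCQ entailment over $\dlliter$, which is in \PTIME. The computation proceeds as follows.
\begin{enumerate}
\item Compute $\clt{\A}$, which has polynomial size and takes polynomial time for $\dlliter$.
\item Iterate $\mpv(\E,i)$ for at most $|\clt{\A}|$ steps. Proposition~\ref{pro:mpv-convergence} guarantees that this number of steps reaches the fixpoint.
\item At each step, compute $\mpvz(\E,\A')$ for a given $\A'\subseteq\clt{\A}$ in polynomial time.
\end{enumerate}

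For the last item, the first task is to reduce to finitely many substitutions. A $\subseteq$-minimal $\A''$ with $\T\cup\A''\models\body(\sigma(\tau))$ is exactly an image $I\in\im(\body(\sigma(\tau)),\clt{\A},\T)$. If $\sigma$ maps some universally quantified variable to a constant outside $\Const(\A)\cup\Const(\T)$ (plus the constants of $\P$), then either the body is not entailed at all, or that variable does not occur in the body. In the latter case it occurs only in the head, which then mentions a fresh constant and cannot be entailed by $\T\cup\A'$. By genericity, such substitutions produce the same images as one substitution using a single fixed fresh constant. It therefore suffices to range over $\gs(\tau,\Const(\A)\cup\{c_0\})$, which is polynomially many substitutions because $\P$ is fixed.

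For each such $\sigma$, I check $\T\cup\A'\not\models\head(\sigma(\tau))$, which takes polynomial time. If the check succeeds, I add to $\mpvz$ the union of all images of $\body(\sigma(\tau))$ in $\clt{\A}$.

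Enumerating these images is the step I expect to be the main obstacle. My plan is to exploit that $\clt{\A}$ is already saturated. Take the body CQ with $k$ atoms and compute its perfect reformulation w.r.t.\ $\T$. Each image is then a minimal set of at most $k$ facts of $\clt{\A}$ that is the image of some CQ in the rewriting under a homomorphism.

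Both $k$ and the rewriting are fixed, so there are only $|\clt{\A}|^{O(k)}$ candidate sets of size at most $k$. For each candidate $I$, I test $\T\cup I\models\body(\sigma(\tau))$ and test minimality by removing each element in turn, all in polynomial time. Alternatively, I could argue directly that every image has size at most $|\body(\tau)|$, since each body atom needs at most one supporting fact in a $\dlliter$ derivation, and enumerate all subsets of $\clt{\A}$ of that size.

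Combining the polynomially many substitutions, polynomially many candidate images, and polynomially many iterations gives the \PTIME\ upper bound, and the claim follows.
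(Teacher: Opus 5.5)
Your proposal is correct and follows the paper's proof essentially step by step. Hardness is transferred from Theorem~\ref{thm:iga-linear-ptime-complete} via Theorem~\ref{thm:mpv-iga-linear-dlliter}$(ii)$, and membership follows by computing $\clt{\A}$ and then each $\textit{MPV}(\E,i)$ using polynomially many ground substitutions and polynomially many images of size at most $k$. Your extra care about substitutions outside $\Const(\A)$ is harmless but unnecessary, since by Definition~\ref{def:epistemic-dependency} every universally quantified variable occurs in the body, so $\gs(\tau,\Const(\A))$ already suffices. Note also that if the $\subseteq$-minimality of Definition~\ref{def:mpv} is read across all pairs $(\tau,\sigma)$, as the minimization step of Algorithm~\ref{alg:mpv-ent-dlliter} does, then the union of all images of violated bodies can be strictly larger than $\textit{MPV}_0$. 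In that case you need one extra polynomial pass that discards any such image properly containing another one before taking the union.
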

\begin{proof}
    The \PTIME-hardness follows immediately from Theorem~\ref{thm:iga-linear-ptime-complete} and property $(ii)$ of Theorem~\ref{thm:mpv-iga-linear-dlliter}. \iflong
    
    \fi To prove membership in \PTIME, we show that, for every integer $i$ such that $1\leq i\leq |\clt{\A}|$, the set $\mpv(\E,k)$ can be computed in polynomial time. This is a consequence of the following properties:
    \iflong
    \begin{itemize}
        \item standard BUCQ answering for $\dlliter$ is in \PTIME,
        hence $\clt{\A}$ can be computed in polynomial time w.r.t.\ the size of $\A$;
        \item given any ED $\tau$, there are polynomially many substitutions in $\gs(\tau,\Const(\A))$;
        \item given any ED $\tau$ and substitution $\sigma$, there are polynomially many images of $\body(\sigma(\tau))$ in $\clt{\A}$ w.r.t.\ $\T$, because in every $\dlliter$ ontology, every image of a BCQ $q'$ has size at most $k$, where $k$ is the number of atoms in $q'$~\cite{CDLLR07}.
    \end{itemize}
    \else
        $(i)$ standard BUCQ answering for $\dlliter$ is in \PTIME,
        hence $\clt{\A}$ can be computed in polynomial time w.r.t.\ the size of $\A$;
        $(ii)$ given any ED $\tau$, there are polynomially many substitutions in $\gs(\tau,\Const(\A))$;
        $(iii)$ given any ED $\tau$ and substitution $\sigma$, there are polynomially many images of $\body(\sigma(\tau))$ in $\clt{\A}$ w.r.t.\ $\T$, because in every $\dlliter$ ontology, every image of a BCQ $q'$ has size at most $k$, where $k$ is the number of atoms in $q'$~\cite{CDLLR07}.
    \fi
    Therefore, every set $\mpv(\E,i)$ can be computed in polynomial time w.r.t.\ the size of $\A$, which implies the thesis.
    \qed
\end{proof}

We finally state the following fundamental property, which derives from the fact that the MPV censor can always be used as the ABox $\A'$ of Definition~\ref{def:indistinguishability}.

\begin{proposition}
\label{pro:mpv-indistinguishable}
    For every DL $\L$, MPV-entailment satisfies the indistinguishability property for $\L$ and $\policyarb$.
\end{proposition}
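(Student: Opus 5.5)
Given an $\L$ CQE instance $\E=\tup{\T,\P,\A}$, I take $\A'=\corecens$ and $\E'=\tup{\T,\P,\A'}$. I then show two things: $\T\cup\A'\modelseql\P$, and $\E\modelsmpv q$ iff $\E'\modelsmpv q$ for every BUCQ $q$. The first is exactly property $(ii)$ of Proposition~\ref{pro:mpv-censor}. Also, $\T\cup\A'$ is consistent because $\A'\subseteq\clt{\A}$, so $\T\cup\A'\subseteq$ the deductive closure of the consistent $\T\cup\A$. Hence $\E'$ is a legitimate CQE instance.

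For the second part, the plan is to prove that $\C_{\mathsf{MPV}}^{\E'}=\clt{\A'}$ and that $\T\cup\clt{\A'}$ is equivalent to $\T\cup\A'$ w.r.t.\ BUCQ entailment. The second fact holds because every atom of $\clt{\A'}$ is entailed by $\T\cup\A'$, so $\T\cup\clt{\A'}\models q$ iff $\T\cup\A'\models q$. Together these give $\E'\modelsmpv q$ iff $\T\cup\corecens\models q$ iff $\E\modelsmpv q$.

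The key step is showing $\mpv(\E')=\emptyset$. I first check that $\clt{\A'}$ satisfies $\P$ together with $\T$. Entailments from $\T\cup\clt{\A'}$ coincide with those from $\T\cup\A'$, which satisfies $\P$.

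Now consider $\mpv(\E',1)=\mpvz(\E',\clt{\A'})$. Suppose some MPV $\A''\subseteq\clt{\A'}$ of $\E'$ w.r.t.\ $\clt{\A'}$ existed, with $\tau,\sigma$ such that $\T\cup\A''\models\body(\sigma(\tau))$ and $\T\cup\clt{\A'}\not\models\head(\sigma(\tau))$. Monotonicity gives $\T\cup\clt{\A'}\models\body(\sigma(\tau))$, which contradicts $\T\cup\clt{\A'}\modelseql\P$. So $\mpv(\E',1)=\emptyset=\mpv(\E',0)$.

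By the fixpoint behaviour established in the proof of Proposition~\ref{pro:mpv-convergence}, it follows that $\mpv(\E')=\emptyset$, and therefore $\C_{\mathsf{MPV}}^{\E'}=\clt{\A'}$.

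The only delicate point I expect is bookkeeping rather than substance. One issue is making sure $\clt{\A'}$ is used consistently, since $\A'$ is already closed when $\T$ is applied to it: one can argue $\clt{\A'}=\A'$ because $\A'\subseteq\clt{\A}$, but this is not needed. The other is that Definition~\ref{def:indistinguishability} requires $\T\cup\A'\modelseql\P$ for the ABox itself, which we have from Proposition~\ref{pro:mpv-censor}. No property specific to $\L$ is used, so the argument holds for every DL.
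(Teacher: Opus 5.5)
Your proof is correct and follows the paper's route. The paper only asserts that $\corecens$ can play the role of the ABox in Definition~\ref{def:indistinguishability}; you supply the missing verification that, since $\T\cup\corecens\modelseql\P$ (Proposition~\ref{pro:mpv-censor}), the new instance has no MPVs, so its MPV censor is $\clt{\corecens}$, which together with $\T$ is equivalent to $\T\cup\corecens$. The only blemish is your aside that $\clt{\A'}=\A'$ ``because $\A'\subseteq\clt{\A}$'': a subset of a $\T$-closed set need not be $\T$-closed, so that justification is invalid, but you never use the claim and the argument is unaffected.
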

    \section{Implementation and Experiments}
\label{sec:experiments}

\newcommand{\benchFive}{\textsf{o2b}\textsubscript{5}\xspace}
\newcommand{\benchTen}{\textsf{o2b}\textsubscript{10}\xspace}
\newcommand{\hi}{\textit{HI}}
\newcommand{\ucqrewr}{\textit{QRew}}

We have conducted an experimental evaluation of MPV-entailment over $\dlliter$ CQE instances.
Our experiments are based on an implementation of an algorithm for MPV-entailment that 
works with $\dlliter$ CQE instances (Algorithm~\ref{alg:mpv-ent-dlliter}). Before describing the algorithm, we introduce some auxiliary notions.

Given a BCQ $q$, a homomorphism for $q$ is a mapping $h$ of the terms occurring in $q$ to individuals, such that $h(a)=a$ for every individual $a$. 
Given a homomorphism $h:\vseq{x}\rightarrow\vseq{t}$ for $q$, we denote by $h(q)$ the set of atoms 
$
\{ p(\vseq{t}) \mid p(\vseq{x}) \textrm{ occurs in } q \textrm{ and } h(\vseq{x})=\vseq{t} \}.
$

Given an ABox $\A$, we call a subset $\A'$ of $\A$ a \emph{homomorphic image} of $q$ in $\A$ if there exists a homomorphism $h$ for $q$ such that $\A'= h(q)$.
We denote by $\hi(q)$ the set of all homomorphic images of $q$.

\begin{algorithm}[t]
    \caption{$\mpventdlliteralg$}
    \label{alg:mpv-ent-dlliter}
    \Input{A $\dlliter$ CQE instance $\tup{\T,\P,\A}$, a BUCQ $q$;}
    \Output{A Boolean value;}
    $\A'\gets\emptyset$;\\    
    \Repeat{$\A'=\A_p'$}{
        $\A_p'\gets\A'$;\\        
        $\I\gets\emptyset$;\\        
        \ForEach{$\tau\in\P$}{
            \ForEach{$\sigma\in\gs(\tau,\Const(\A))$}
            {
                \If{%
                    \label{step:ed-violation}
                    $\T\cup\A\models\body(\sigma(\tau))$ \textrm{and} 
                    $\T\cup(\clt{\A}\setminus\A_p')\not\models\head(\sigma(\tau))$
                }{%
                    $\I\gets\I\cup\{ \A'' \mid
                    \A''\subseteq\clt{\A}$, 
                    $\A''\in\hi(\sigma(q'))$ \textrm{and} 
                    $q'\in\ucqrewr(\body(\tau),\T) \}$;
                }
            }
        }
        $\I\gets\{S\in\I\mid \nexists S'\in\I \mbox{ s.t.\ }S'\subset S \}$;\\
        \label{step:minimality-check}
        $\A'\gets\A'\cup\bigcup_{\A''\in\I}\A''$;
    }
    \If{%
        $\T\cup\clt{\A}\setminus\A'\models q$
    }{%
        \Return{$\true$;}
    }
    \Return{$\false$;}
\end{algorithm}

We are now ready to present the algorithm $\mpventdlliteralg$ (Algorithm~\ref{alg:mpv-ent-dlliter}) that decides MPV-entailment of BUCQs over $\dlliter$ CQE instances.
In the algorithm, $\ucqrewr$ denotes a procedure for UCQ rewriting in $\dlliter$: for every $\dlliter$ TBox $\T$ and UCQ $q$, $\ucqrewr(q,\T)$ returns a UCQ $q'$ such that, for every ABox $\A$, $\T\cup\A\models q(\vseq{c})$ iff $\A\models q'(\vseq{c})$.

\begin{proposition}
\label{pro:alg-mpv-ent-dlliter}
    Let $\E$ be a $\dlliter$ CQE instance and let $q$ be a BUCQ. Then, $\mpventdlliteralg(\E,q)$ returns \true\ iff $\E\modelsmpv q$. 
\end{proposition}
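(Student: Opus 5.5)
The plan is to show that the repeat loop of Algorithm~\ref{alg:mpv-ent-dlliter} computes exactly the sequence $\mpv(\E,i)$ of Definition~\ref{def:mpv}. Let $\A'_i$ denote the value of $\A'$ after $i$ iterations, so $\A'_0=\emptyset=\mpv(\E,0)$. I will prove by induction on $i$ that $\A'_i=\mpv(\E,i)$. Once this holds, the loop stops at the first $i$ with $\mpv(\E,i)=\mpv(\E,i+1)$. Such an $i$ exists, with $i\le|\clt{\A}|$, by the monotonicity and stabilization argument in the proof of Proposition~\ref{pro:mpv-convergence}. From that point on the sequence is constant, so the final value of $\A'$ is $\mpv(\E)$. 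The last test is then literally $\T\cup\censmpv\models q$, i.e.\ $\E\modelsmpv q$, which gives the claim.

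For the inductive step, assume $\A'_p=\mpv(\E,i)$ and set $\C_i=\clt{\A}\setminus\mpv(\E,i)$. I first handle the pairs $(\tau,\sigma)$ the loop ranges over.
\begin{itemize}
\item The algorithm only considers $\sigma\in\gs(\tau,\Const(\A))$. This loses nothing: every universally quantified variable of $\tau$ occurs in $\body(\tau)$. Hence if $\T\cup S\models\body(\sigma(\tau))$ for some $S\subseteq\clt{\A}$, then $\sigma$ maps into $\Const(\A)$, since $\dlliter$ entailment of a CQ instantiated with a constant not in $\Const(\A)$ fails.
\item The test $\T\cup\A\models\body(\sigma(\tau))$ is equivalent to the existence of some $S\subseteq\clt{\A}$ with $\T\cup S\models\body(\sigma(\tau))$, because $\T\cup\clt{\A}$ and $\T\cup\A$ have the same models.
\item The second test is exactly the head condition of Definition~\ref{def:mpv} w.r.t.\ $\C_i$.
\end{itemize}
So Line~\ref{step:ed-violation} selects exactly the ``violating'' pairs $(\tau,\sigma)$ w.r.t.\ $\C_i$.

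The core step, and the one I expect to be the main obstacle, is the following claim. For a fixed violating pair, let $\mathcal{H}$ be the set of subsets of $\clt{\A}$ that are homomorphic images of $\sigma(q')$ for some $q'\in\ucqrewr(\body(\tau),\T)$. Then every image in $\im(\body(\sigma(\tau)),\clt{\A},\T)$ belongs to $\mathcal{H}$, and every element of $\mathcal{H}$ contains such an image.
\begin{itemize}
\item For the second inclusion: if $S=h(\sigma(q'))$, then $S\models\sigma(q')$, so by the rewriting property $\T\cup S\models\body(\sigma(\tau))$, and $S$ therefore contains a $\subseteq$-minimal such set.
\item For the first inclusion: let $S$ be an image. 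Then $S\models\sigma(q')$ for some disjunct $q'$, witnessed by a homomorphism $h$ with $h(\sigma(q'))\subseteq S$. By the previous point, $\T\cup h(\sigma(q'))\models\body(\sigma(\tau))$, and minimality of $S$ forces $h(\sigma(q'))=S$.
\end{itemize}
Here I should check that applying $\sigma$ to the rewriting commutes with rewriting the instantiated query. The standard $\dlliter$ rewriting treats answer variables as constants, so this should be routine.

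Given the claim, the set $\I$ before the minimization step is sandwiched between the images of all violating pairs and their supersets within $\clt{\A}$. Consequently, its $\subseteq$-minimal elements (Line~\ref{step:minimality-check}) are exactly the $\subseteq$-minimal sets $S\subseteq\clt{\A}$ for which some violating pair satisfies $\T\cup S\models\body(\sigma(\tau))$. Note that minimality is taken globally, across all pairs, as in Definition~\ref{def:mpv}. These are precisely the MPVs of $\E$ w.r.t.\ $\C_i$, so their union is $\mpvz(\E,\C_i)=\mpv(\E,i+1)$. Finally, $\A'_{i+1}=\mpv(\E,i)\cup\mpv(\E,i+1)=\mpv(\E,i+1)$, using the monotonicity $\mpv(\E,i)\subseteq\mpv(\E,i+1)$ from property $(i)$ in the proof of Proposition~\ref{pro:mpv-convergence}. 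This closes the induction.
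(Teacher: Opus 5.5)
Your route is the paper's. You show that after the minimization step $\I$ is exactly the set of MPVs of $\E$ w.r.t.\ $\clt{\A}\setminus\A'_p$ (the paper's property (*)), and then identify the loop with the sequence $\textit{MPV}(\E,i)$. Your handling of the restriction to $\gs(\tau,\Const(\A))$, of the body test, and of the sandwich between images and homomorphic images of the rewriting (with minimality taken globally) is correct and more careful than the paper's.

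\textbf{The failing step.} The gap is your last step, $\A'_{i+1}=\textit{MPV}(\E,i)\cup\textit{MPV}(\E,i+1)=\textit{MPV}(\E,i+1)$. Under global minimality, $\textit{MPV}(\E,i)\subseteq\textit{MPV}(\E,i+1)$ is false, because $\textit{MPV}_0(\E,\cdot)$ is not antitone: shrinking the reference set can activate a new ED instance whose smaller image makes an earlier MPV non-minimal. Take $\T=\emptyset$, $\A=\{A(a),B(a),F(a)\}$ and the EDs $\forall x\,(\K(A(x)\land B(x))\ra\K D(x))$, $\forall x\,(\K A(x)\ra\K F(x))$, $\forall x\,(\K F(x)\ra\K G(x))$.
\begin{itemize}
\item W.r.t.\ $\A$ the second ED is not violated. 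So the MPVs are $\{A(a),B(a)\}$ and $\{F(a)\}$, and $\textit{MPV}(\E,1)=\A$.
\item W.r.t.\ $\emptyset$ and w.r.t.\ $\{B(a)\}$ all three EDs are violated, and $\{A(a)\}$ subsumes $\{A(a),B(a)\}$. So $\textit{MPV}(\E,2)=\textit{MPV}(\E,3)=\{A(a),F(a)\}$, hence $\corecens=\{B(a)\}$ (which here even equals $\censiga$).
\item The algorithm instead accumulates. After the first iteration $\A'=\A$, the second iteration adds nothing, and the final test is made over the empty set. It therefore returns \false\ on $q=B(a)$ although $\E\modelsmpv q$.
\end{itemize}
Your termination argument fails for the same reason. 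With $\A=\{A(a),B(a)\}$ and the EDs $\forall x\,(\K(A(x)\land B(x))\ra\K D(x))$ and $\forall x\,(\K A(x)\ra\K B(x))$, the sequence $\textit{MPV}(\E,i)$ alternates between $\{A(a),B(a)\}$ and $\{A(a)\}$ forever. So neither property $(i)$ in the proof of Proposition~\ref{pro:mpv-convergence} nor that proposition itself holds.

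\textbf{Relation to the paper and a repair.} The paper's proof has the same hole. It says that (*) ``immediately implies'' that $\A'$ at iteration $i$ is $\textit{MPV}(\E,i)$, overlooking that the update is $\A'\gets\A'\cup\ldots$. The first example above shows the statement is actually false for Definition~\ref{def:mpv} as written, so no argument along these lines can close it.

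What your argument does prove is that the loop computes $Y_0=\emptyset$, $Y_{i+1}=Y_i\cup\textit{MPV}_0(\E,\clt{\A}\setminus Y_i)$. If the MPV sequence is redefined in this cumulative form, the following all hold:
\begin{itemize}
\item your induction goes through verbatim, with no appeal to monotonicity;
\item the sequence is increasing, so it stabilizes within $|\clt{\A}|$ steps;
\item by Proposition~\ref{pro:mpv-censor}$(i)$, the complement of its limit is still a GA censor.
\end{itemize}
So the proposition holds for that variant.
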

\iflong\begin{proof}
    First, recall that, for every $\tau\in\P$, $\ucqrewr(\body(\tau),\T)$ returns a UCQ $q'$ such that for every ABox $\A$ and for every $\sigma\in\gs(\tau,\Const(\A))$, $\T\cup\A\models\body(\sigma(\tau))$ iff $\A\models\sigma(q')$.
    
    Then, we prove the following property (*): at every iteration $i$ of the repeat--until loop of the algorithm, the set of sets $\I$ computed after the minimization step of line~\ref{step:minimality-check} is the set of all the minimal policy violations of $\E$ with respect to $\A_p'$.
    %
    Suppose $\A''$ is a minimal policy violation of $\E$ with respect to $\A_p'$. Then, there exist $\tau\in\P$ and $\sigma\in\gs(\tau,\Const(\A))$ such that $\T\cup\A''\models\body(\sigma(\tau))$ and $\T\cup(\C\setminus\A_p')\not\models\head(\sigma(\tau))$. Consequently, $\A''\models\sigma(q')$, where $q'$ is the UCQ returned by $\ucqrewr(\body(\tau),\T)$. Consequently, $\A''$ is a homomorphic image of $\ucqrewr(\body(\tau),\T)$ in $\clt{\A}$, hence $\A''\in\I$ before the minimization of $\I$.
    Furthermore, it is immediate to verify that every set $\A''$ that belongs to the set of sets $\I$ before its minimization contains a minimal policy violation of $\E$ with respect to $\A_p'$. These two properties imply that, before its minimization, the set of sets $\I$ computed by the algorithm contains all the minimal policy violations of $\E$ w.r.t.\ $\A_p'$ and supersets of them. Consequently, property (*) follows.
    
    Finally, property (*) immediately implies that the set $\A'$ computed by the algorithm at the $i$-th iteration corresponds to $\mpv(\E,i)$. Consequently, the set $\C$ computed by the algorithm corresponds to $\corecens$, which implies the thesis.
    \qed
\end{proof}\fi

We implemented the algorithm $\mpventdlliteralg$ as a procedure operating over a relational database storing ABox facts.
We ran the algorithm both in its full version and in a modified version (called $\ampventdlliteralg$), which does not perform the elimination of non-minimal homomorphic images of the rewriting of the bodies of the violated EDs (i.e.\ it skips Step~\ref{step:minimality-check} of Algorithm~\ref{alg:mpv-ent-dlliter}). In this way, $\ampventdlliteralg$ computes a set of atoms $\A'$ that is in general larger than $\mpv(\E,i)$, since also non-minimal homomorphic images (i.e.\ non-minimal policy violations) are deleted from $\clt{\A}$. Such a set $\A'$ is still a GA censor (which is an immediate consequence of property $(i)$ of Proposition~\ref{pro:mpv-censor}), and we call it the AMPV censor of $\E$. Consequently, $\ampventdlliteralg$ computes a sound approximation of MPV-entailment. Nevertheless, our experiments showed that such a minimization step is often computationally expensive, so we decided to test its real impact in terms of query answers.

Experiments were run on a laptop with an Intel Core i7-7700HQ processor running at 2.80~GHz, and 8~GB of RAM; the prototype is written in Java~11 and relies on MySQL~8.0 for ABox storage and manipulation.
The employed OWL ontology belongs to the OWL2Bench~\cite{SBM20} benchmark, and it consists of a fixed TBox about the university domain, 10 OWL~2~QL queries~\cite{owl2bench-queries}, and a tool that, given as input a positive integer $N$ (representing the number of universities), generates an ABox of proportional size. We considered $N\in\{1,5,10,20,25,50\}$, using the default seed for ABox generation.
Furthermore, we defined a policy consisting of 11 EDs of the general type.
\iflong The source code of our implementation is publicly available at \url{https://github.com/iswc-2026-anon/MPVCensor}.\fi

All the data manipulation steps required by Algorithm~\ref{alg:mpv-ent-dlliter} have been implemented, in practice, using suitable SQL \texttt{UPDATE} and \texttt{DELETE} operations.
In our implementation, the function $\ucqrewr$ is provided by the tree-witness query rewriter for OWL2~QL ontologies\footnote{\url{https://titan.dcs.bbk.ac.uk/~roman/tw-rewriting/}}~\cite{KKZ12,MKZ13}.
More details about the implementation and optimizations are provided in the Appendix.


\begin{table}[t]
    \centering
    \begin{tabular}{c|c|c|c|c|c|c|}
        \cline{2-7}
        \multicolumn{1}{c|}{\multirow{2}{*}{}} &
        \multicolumn{6}{c|}{Dataset size ($N$)} \\
        \cline{2-7}
         & 1 & 5 & 10 & 20 & 25 & 50 \\
        \hline
        \multicolumn{1}{|c|}{$|\A|$} & 50.2k & 325.5k & 711.1k & 1.4M & 1.7M & 3.5M \\
        \multicolumn{1}{|c|}{$|\clt{\A}|$} & 89.6k & 595.1k & 1.3M & 2.5M & 3.1M & 6.4M \\
        \multicolumn{1}{|c|}{$\#_{\textrm{ampv}}$} & 14.6k & 91.1k & 191.7k & 383.9k & 462.3k & 960.1k \\
        \multicolumn{1}{|c|}{$\#_{\textrm{mpv\phantom{a}}}$} & 13.3k & 82.5k & 173.3k & -- & -- & -- \\
        \multicolumn{1}{|c|}{$t_{\textrm{ampv}}$}
        & 29.1 & 84.3 & 136.7 & 295.1 & 305.7 & 689.1 \\
        \multicolumn{1}{|c|}{$t_{\textrm{mpv\phantom{a}}}$}
        & 80.7 & 2568.3 & 10621.6 & t.o. & t.o. & t.o. \\
        \hline
    \end{tabular}
    
    \caption{This table reports the cardinality of the original ABox $\A$ and of its closure w.r.t.\ $\T$, other than the number ($\#$) of facts removed from $|\clt{\A}|$ and the time ($t$, expressed in seconds) to compute the (A)MPV censor.}
\label{tab:core-ga-censor-summary}
\end{table}


\begin{table}[t]
    \centering
    \newcommand{\evalTime}{$t_a$\xspace}
\newcommand{\evalTimeMPV}{$t_m$\xspace}
\newcommand{\numAnsAMPV}{\#$_a$\xspace}
\newcommand{\numAnsMPV}{\#$_m$\xspace}
\newcommand{\numAnsEmptyPol}{\#$_\emptyset$\xspace}

\begin{tabular}{|c|l|c|c|c|c|c|c|c|c|c|}
\hline
\multicolumn{1}{|c}{\multirow{2}{*}{N}} &
\multicolumn{1}{c|}{\multirow{2}{*}{}} &
\multicolumn{9}{c|}{Benchmark query} \\
\cline{3-11}
\multicolumn{2}{|c|}{} & $q_1$ & $q_2$ & $q_3$ & $q_5$ & $q_6$ & $q_7$ & $q_8$ & $q_9$ & $q_{10}$\\
\hline

\multirow{5}{*}{$1$}
& \evalTime &2.5  &11.1  &0.8  &1.1 &0.8 &13.2  &4.5  &12.5 &3.2 \\
& \evalTimeMPV &2.7  &10.2  &0.6  &0.7 &0.8  &53.3  &6.9  &12.7 &152.9 \\
& \numAnsAMPV &686  &1684  &2  &30 &0 &579  &693  &145 &71 \\
& \numAnsMPV &943  &1684  &2  &30 &0  &579  &1701  &145  &71\\
& \numAnsEmptyPol &1427  &1684  &6  &666  &2422 &858  &2825 &145 &106  \\
\hline

\multirow{5}{*}{$5$}
& \evalTime &6.9  &89.9  &0.7  &1.1 &0.5 &76.2  &18.5  &58.5 &25.1  \\
& \evalTimeMPV &9.4  &97.5  &0.4  &1.3 &0.5  &50.2  &38.2  &110.3 &68.6 \\
& \numAnsAMPV &4626  &18872  &5  &202 &0 &3951  &4418  &1698 &453  \\
& \numAnsMPV &6330  &18872  &5  &202 &0 &3951  &11286  &1698  &453\\
& \numAnsEmptyPol &9228  &18872  &34  &3574 &16236 &5489  &17904 &1698 &642  \\
\hline

\multirow{5}{*}{$10$}
& \evalTime &15.5  &347.9  &0.4  &1.8 &0.4 &297.8  &36.6  &108.4 &40.4 \\
& \evalTimeMPV &32.1  &1172.9  &0.4  &1.6 &0.4 &201.9  &93.6  &101.1 &138.5 \\
& \numAnsAMPV &9947  &44190  &14  &424 &0 &9132  &9838  &3434 &1109  \\
& \numAnsMPV &13542  &44190  &14  &424 &0 &9132  &24650  &3434 &1109  \\
& \numAnsEmptyPol &19782  &44190  &75  &6564 &35889 &11969  &39278 &3434 &1413  \\
\hline

\multirow{3}{*}{$20$}
& \evalTime &29.5  &1342.6  &0.6  &3.2 &0.5  &430.6  &63.3  &259.2 &66.7  \\
& \numAnsAMPV &19109  &88268  &22  &920 &0 &16803  &19151  &6842 &2193  \\
& \numAnsEmptyPol &38251  &88268  &143  &14962 &69397 &23124  &76555 &6842 &2750  \\
\hline

\multirow{3}{*}{$25$}
& \evalTime &37.4  &1848.1  &0.4  &5.1 &0.4 &541.8  &90.4  &297.9 &106.5 \\
& \numAnsAMPV &23586  &109957  &30  &926 &0 &21755  &23264  &8400 &2684 \\
& \numAnsEmptyPol &47336  &109957  &180  &15788 &86376 &28637  &94381  &8400 &3444 \\
\hline

\multirow{3}{*}{$50$}
& \evalTime &263.1  &4058.1  &0.5  &7.1 &0.4 &1051.1  &245.1  &690.9  &345.6 \\
& \numAnsAMPV &48552  &227120  &64  &2316 &0 &43098  &47538  &18288  &5052 \\
& \numAnsEmptyPol &96486  &227120  &365  &36588 &174947 &58409  &192944 &18288 &7137 \\
\hline
\end{tabular}

    \caption{For each query and dataset size, the table reports the execution time (\evalTime, in m
    s) and the number of answers (\numAnsAMPV) for queries posed to the AMPV censor, while \evalTimeMPV and \numAnsMPV refer to query evaluation over the MPV censor. We also report the query answers obtained in case $\P=\emptyset$ (\numAnsEmptyPol) for a comparison.
    }
\label{tab:qa-summary-main}
\label{tab:qa-summary}
\end{table}


Tables~\ref{tab:core-ga-censor-summary} and \ref{tab:qa-summary-main} \iflong
report the 
performance evaluation of our experiments
\else summarize our experimental evaluation
\fi, in particular regarding the computation of the (A)MPV censor and the subsequent query evaluation. 

Considering Table~\ref{tab:core-ga-censor-summary}, we remark that, differently from the MPV censor, the time required for computing the AMPV censor increases linearly w.r.t.\ the size of the original dataset.
On the other hand, the minimality check of Algorithm~\ref{alg:mpv-ent-dlliter} crucially affects the overall time required for computing the MPV censor.

As for query evaluation, we considered all 10 CQs of the OWL2Bench benchmark; however, one of these queries ($q_4$) produced no answers even in the uncensored case, so we do not report results about it. Then, Table~\ref{tab:qa-summary-main} shows that:
$(i)$ all queries, except for $q_2$ and $q_9$, are affected significantly by the data protection policy;
$(ii)$ one query ($q_6$) does not have any answer in the MPV censor;
$(iii)$ 
for two queries ($q_1$ and $q_8$), 
\iflong the number of answers in the AMPV censor is significantly smaller than in the MPV censor.
\else 
the AMPV censor returns significantly fewer answers than the MPV censor.
\fi
All the other queries yield the same number of answers.




    \section{Conclusions}
\label{sec:conclusions}

We investigated CQE under policies expressed via epistemic dependencies (EDs), focusing on the use of ground atom (GA) censors for safe information disclosure. We first investigated GA- and IGA-entailment
and analyzed their data complexity in the presence of ED-based policies when the TBox is expressed in $\dlliter$, both in the case of linear ED
and in the general case.
Since both semantics resulted in being computationally hard in terms of data complexity, we defined a new semantics, called MPV, and showed that it is computationally easier and provides a sound approximation of query answering with respect to the previous semantics. 
Finally, we evaluated our approach with a prototype implementation, using the OWL2Bench benchmark.

The present contribution can be extended in different directions. Potential future work includes:
    finding new sufficient conditions for 
    the MPV censor to coincide with the intersection $\censiga$ of all optimal GA censors;
    studying the practical impact of the MPV semantics on the amount of information missed w.r.t.\ the IGA semantics;
    extending the computational study of MPV-entailment to ontologies expressed in other lightweight DLs like $\EL$ or $\RL$, the logics underlying the OWL 2 EL and OWL 2 RL profiles;
    finding alternative, well-founded semantics that soundly approximate $\GAEnt$ and preserve tractability of query answering 
    even when using arbitrary EDs.

    \iflong\else
\paragraph*{Supplemental Material Statement:}
Complete proofs of theorems and lemmas and further details about implementation and experiments \ifcameraready
are included in the full version of this paper available on arXiv~\cite{MRR26_arXiv}
\else
are attached to the submission in EasyChair; if the paper gets accepted, they will be included in an extended version to be published on arXiv.
\fi
\ifcameraready
Our system's source code can be downloaded from \url{https://github.com/iswc-2026-anon/MPVCensor}.
The benchmark we used is available at \url{https://github.com/kracr/owl2bench}.
\else
The attachment 
also contains a documented, ready-to-use software (whose source code is available at \url{https://github.com/iswc-2026-anon/MPVCensor}) for replicating our experiments. The benchmark we used is available at \url{https://github.com/kracr/owl2bench}.

\fi

        \paragraph{Declaration of use of Generative AI:} During the preparation of this work, the authors used ChatGPT-5.2 for grammar and spelling checks. After using this tool, the authors reviewed and edited the content as needed, and they take full responsibility for the publication’s content.

        \ifcameraready
            \begin{credits}
    \subsubsection{\ackname} 
    This work was partially supported by the EU under the HORIZON.2.1.5 project dAIbetes (grant id.~101136305).
\end{credits}
        \fi
    \fi
    
    \bibliographystyle{splncs04}
    \bibliography{bibliography/strings,bibliography/bibliography,bibliography/w3c}

    \newpage
    \appendix
\section*{Appendix}
\longtrue

\iflong\else
\section{Full proofs}

\subsection*{Proof of Lemma~\ref{lem:linear+denial-iga-ga-lb}}
\newcommand{\tval}{\mathsf{t}}
\newcommand{\fval}{\mathsf{f}}
\newcommand{\myconstant}{\mathsf{a}}
\newcommand{\pos}{\mathit{Pos}}
\newcommand{\pol}{\mathit{Pol}}
\newcommand{\pconst}{\mathsf{p}}
\newcommand{\Ix}{I_{\vseq{x}}}
\newcommand{\Iy}{I_{\vseq{y}}}
We prove the thesis by showing a reduction from 2-QBF, inspired by the proof of Theorem~4.7 of~\cite{CM05}. We assume that the given formula is in CNF.

Let $\T=\emptyset$, and let $\P$ consist of the following EDs:
\[ 
\begin{array}{r@{\,}l}
    \forall v,t & (\K T(v,t) \ra \K\exists p\, \pos(v,p)) \\
    \forall v,p & (\K \pos(v,p) \ra \K\exists i\, C(p,i)) \\
    \forall p,t & (\K \pol(p,t) \ra \K\exists i\, C(p,i)) \\
    \forall v,i & (\K C(v,i) \ra \K\exists j\, S(i,j)) \\
    \forall i,j & (\K S(i,j) \ra \K\exists v\, C(v,j)) \\
    \forall i,j & (\K S(i,j) \ra \K\exists p,v,t\, (C(p,i) \land \pos(v,p) \land \\&\hspace{9.5em} \pol(p,t) \land T(v,t))) \\
    \forall v & (\K(T(v,\tval) \wedge T(v,\fval)) \ra \K\bot) .
\end{array}
\]
Intuitively, the first four EDs impose what follows: every variable having a truth assignment must occur in some position; every position in which a variable $v$ occurs (or having polarity $t$) must occur in some clause; every clause must have a successor.
The combination of the fourth and fifth EDs forces a loop of $S$-facts to collapse if one of them is missing.
The sixth ED imposes that, if a clause $\psi_i$ has a successor $\psi_j$, then a variable $v$ must occur in $\psi_i$ such that the truth value of $v$ matches its polarity in $\psi_i$ (i.e., $\psi_i$ is satisfied). 
Finally, the seventh ED guarantees that distinct truth values are not assigned to the same variable.

Now let $\phi=\forall \vseq{x}\,\exists \vseq{y}\,(\psi_1\wedge\ldots\wedge\psi_n)$ be a 2-QBF, where every $\psi_i$ is a clause over the propositional variables 
$\vseq{x}\cup\vseq{y}$.
Moreover, $\A=\A_1\cup\A_2\cup\{S(1,\myconstant)\}$, where:
\[
\begin{array}{r@{\;}l@{}l@{}l}
    \A_1=&\{& S(0,0)\} \,\cup \\
        &\{& C(\pconst_x^0,0), \pos(\pconst_x^0,0), \pol(\pconst_x^0,\tval), \pol(\pconst_x^0,\fval), 
        T(x,\tval), T(x,\fval) \mid x\in\vseq{x} \}\\
    \A_2=&\{& S(i,(i \bmod n)+1) \mid 1\le i\le n
    \} \,\cup \\
       &\{& C(\pconst_v^i,i), \pos(v,\pconst_v^i), \pol(\pconst_v^i,\tval), T(v,\tval) \mid 
       v \textrm{ occurs positively in } \psi_i \} \,\cup \\
       &\{& C(\pconst_v^i,i), \pos(v,\pconst_v^i), \pol(\pconst_v^i,\fval), T(v,\fval) \mid 
       v \textrm{ occurs negatively in } \psi_i \} .
\end{array}
\]
We prove that $\E \modelsga S(1,\myconstant)$ iff $\phi$ is valid, where $\E=\tup{\T,\P,\A}$.

$(\Leftarrow)$
First, for every interpretation $\Ix$ of $\vseq{x}$, let 
\[
    \A(\Ix) = \A_1 \setminus \{T(x,\fval) \mid x\in\Ix\} \setminus \{T(x,\tval) \mid x\in\vseq{x}\setminus\Ix\}.
\]

Now, let us assume that $\phi$ is not valid, and let $\Ix$ be an interpretation of $\vseq{x}$ such that there exists no interpretation $\Iy$ of $\vseq{y}$ such that $\Ix\cup\Iy$ satisfies $\psi_1\wedge\ldots\wedge\psi_n$.
We prove that $\A(\Ix)$ is an optimal GA censor of $\E$. Since $\T\cup\A(\Ix)\modelseql\P$, we only have to show that no further fact from $\clt{\A}$ (i.e.\ from $\A$) can be added to it without violating the policy $\P$.

First, notice that adding any fact from $\A_1$ to $\A(\Ix)$ would violate the last dependency of $\P$.
As for the $T$-facts contained in $\A_2$, adding one of them would either violate the last dependency as well, or cause the addition of at least one $C$-fact from $\A_2$ (because of the first dependency).
However, if any fact from $\A_2\cup\{S(1,\myconstant)\}$ with predicate $C$, $S$, $\pol$, or $\pos$ is added to $\A(\Ix)$, then the second, third, fourth and fifth dependencies would eventually require to add, for every clause $\psi_i$, the corresponding fact $S(i,(i \bmod n)+1)$ from $\A_2$.
Then, because of the sixth dependency, we would need to add, for every $\psi_i$, four facts of the form $C(p,i)$, $\pos(v,p)$, $\pol(p,t)$, $T(v,t)$ from $\A_2$ for at least one variable $v$ occurring in 
$\psi_i$.
However, due again to the last dependency of $\P$ and by construction of $\A_2$, this could be done (while preserving consistency) only if it is possible to find a truth assignment for the variables $\vseq{y}$ such that at least one variable for each clause $\psi_i$ is assigned to a value corresponding to its polarity in $\psi_i$, i.e.\ only if $\phi$ is valid, which is a contradiction.
Consequently, $\A(\Ix)\in\optcens(\E)$, which implies that $\E\not\modelsga S(1,\myconstant)$.

$(\Rightarrow)$
Conversely, assume that $\phi$ is valid. Let $\Ix$ be any interpretation of $\vseq{x}$. Then, there exists an interpretation $\Iy$ of $\vseq{y}$ such that $\Ix\cup\Iy$ satisfies $\psi_1\wedge\ldots\wedge\psi_n$.
Similarly as above, this is possible only if there exists a subset $\A'$ of $\A_2$ containing, for every $1\le i\le n$, at least five facts of the form $S(i,(i \bmod n)+1)$, $C(p,i)$, $\pos(v,p)$, $\pol(p,t)$, $T(v,t)$ 
and such that, for every variable $v$, it does not contain both $T(v,\tval)$ and $T(v,\fval)$.

It is immediate to verify that $\T\cup\A'\modelseql\P$.
Now, suppose that $S(1,\myconstant)\notin\C$ for some $\C\in\optcens(\E)$. 
Since $\C$ is optimal, this is possible only if adding $S(1,\myconstant)$ violates the fourth dependency, i.e.\ if no fact of the form $C(\_,1)$ of $\A$ occurs in $\C$.
Then, because of the fifth dependency, the fact $S(n,1)$ (i.e., the only fact of $\A$ of the form $S(\_,1)$) does not belong to $\C$.
Analogously, $\C$ cannot contain any fact of the form $C(\_,n)$, nor $S(n-1,n)$, and so on: considering the effect of the first three dependencies as well, this iterative process forces \emph{all} facts with predicate $\pos$, $\pol$, $C$ and $S$ occurring in $\A_2$ to be excluded from $\C$, other than all the facts of the form $T(y,\_)$ such that $y\in\vseq{y}$. Note then that $\C\subseteq\A_1$, and recall that $\T\cup\C\modelseql\P$ by definition of GA censor. It is now straightforward to verify that $\T\cup\C\cup\A'\modelseql\P$, thus contradicting the optimality of $\C$. Consequently, $S(1,\myconstant)$ belongs to every optimal GA of $\E$, i.e.\ $\E \modelsga S(1,\myconstant)$.

Finally, we recall that $\E\modelsga\alpha$ iff $\E\modelsiga\alpha$ when $\alpha$ is a ground atom; thus, both theses are proved.
\qed

\subsection*{Proof of Proposition~\ref{pro:iga-min-undisclosable}}
$(\Rightarrow)$
Let $\alpha\notin\censiga$, which implies that $\alpha\notin\C$ for some $\C\in\optcens(\E)$.
Since $\C$ is optimal, then $\C\cup\{\alpha\}$ is $\E$-undisclosable. Moreover, every $\subseteq$-minimal subset $S$ of $\C$ such that $S$ is $\E$-undisclosable (at least one of which does exist) contains $\alpha$.

$(\Leftarrow)$
Let $\alpha$ belong to some $\subseteq$ minimal $\E$-undisclosable subset $S$ of $\clt{\A}$.
By minimality, we have that $S\setminus\{\alpha\}$ is contained in some GA censor $\C$ of $\E$, which obviously does not contain $\alpha$. Consequently, $\alpha\notin\censiga$.
\qed

\subsection*{Proof of Proposition~\ref{pro:ga-iga-non-indistinguishable}}
Recalling Example~1 of~\cite{MRR25}, consider the CQE instance $\E=\tup{\T,\P,\A}$, where $\T=\emptyset$, $\A = \{ C(0),B(1),B(2)\,\}$, and $\P = \{ K(B(1)\land B(2)) \ra K \bot, \forall x\,(K C(x) \ra K\exists y\,B(y)) \}$.
    
Note that $\censiga$ (i.e.\ $\{C(0)\}$) is not a GA censor of $\E$ (as it does not satisfy the second ED of $\P$).
Then, consider the BCQ $q=C(0)$ (which is both GA- and IGA-entailed by $\E$), and let $\A'$ be any ABox such that $\T\cup\A'\modelseql\P$ and $\T\cup\A'\modelsga q$ (resp., $\T\cup\A'\modelsiga q$). It is immediate to see that every such $\A'$ is such that $\T\cup\A'\modelsga B(c)$ (resp., $\T\cup\A'\modelsiga B(c)$), for some constant $c$.
\qed

\subsection*{Proof of Proposition~\ref{pro:lpv-convergence}}
First, we prove property $(i)$.
Base case ($i=0$): Since $\lpv(\E,0)=\emptyset$, the thesis immediately follows.
Inductive case ($i>0$): Suppose $\lpv(\E,i)\supseteq\lpv(\E,i-1)$ and let $\alpha\in\lpv(\E,i)$. Since $\lpv(\E,i)=\lpvz(\E,\clt{\A}\setminus\lpv(\E,i-1))$, there exist $\tau\in\P$ and $\sigma\in\gs(\tau)$ such that $\T\cup\{\alpha\}\models\body(\sigma(\tau))$ and $\T\cup(\clt{\A}\setminus\lpv(\E,i-1))\not\models\head(\sigma(\tau))$. Now, since $\lpv(\E,i)\supseteq\lpv(\E,i-1)$, it follows that $\T\cup(\clt{\A}\setminus\lpv(\E,i))\not\models\head(\sigma(\tau))$, which implies that $\alpha\in\lpv(\E,i+1)$. Consequently, $\lpv(\E,i+1)\supseteq\lpv(\E,i)$.

As for property $(ii)$: From Definition~\ref{def:lpv} it immediately follows that, if $\lpv(\E,i)=\lpv(\E,i+1)$ for some $i$, then $\lpv(\E,i)=\lpv(\E,j)$ for every integer $j$ such that $j>i$.
This property and property $(i)$ imply that either $\lpv(\E,|\clt{\A}|)=\clt{\A}$ or there exists $i<|\clt{\A}|$ such that $\lpv(\E,i)=\lpv(\E,i+1)$. In both cases, it follows that, for every $n$ such that $n\geq|\clt{\A}|$, $\lpv(\E,n)=\lpv(\E,n+1)$.
\qed

\subsection*{Proof of Proposition~\ref{pro:lpv-iga-censor}}
It is easy to see that every $\alpha$ of Definition~\ref{def:lpv} is such that $\{\alpha\}$ is a $\subseteq$-minimal $\E$-undisclosable subset of $\clt{\A}$. Then, the thesis follows by Proposition~\ref{pro:iga-min-undisclosable}.
\qed

\subsection*{Proof of Theorem~\ref{thm:iga-linear-ptime-complete}}
\newcommand{\pv}{Vars}
\newcommand{\bv}{BVars}
We prove the lower bound by showing a reduction from the \textsc{Horn-Sat} problem.

Let $\phi$ be a set of ground Horn rules, and let us assume w.l.o.g.\ that $\phi$ contains at least one headless rule (i.e.\ a clause with only negated variables). Let $\phi'$ be another set obtained from $\phi$ by slightly modifying rules without heads as follows: every headless rule $r$ is replaced by the rule with the same body as $r$ and with the new variable $u$ in the head; moreover, for every propositional variable $p$ occurring in $\phi'$ (including $u$), we add a rule $p \ra p$ to $\phi'$. Notice that, by construction, $\phi'$ always contains at least two rules with $u$ in the head. It is immediate to verify that $\phi$ is unsatisfiable iff $u$ belongs to all the models (and hence to the minimal model) of $\phi'$.
 
Now, we associate an identifier $r_i^x$ to every Horn rule in $\phi'$ having the variable $x$ in its head.
Then, let $h[x]$ be the number of rules of $\phi'$ having the variable $x$ in their head, let $\pv(\phi')$ be the set of propositional variables occurring in $\phi'$ and, for every rule $r_i$, let $\bv(r_i)$ be the set of variables in the body of $r_i$.
We define the ABox $\A$ as the set of facts:
\[
\begin{array}{r@{\,}l}
    \displaystyle
    \bigcup_{r_i\in\phi'} & \{ B(r_{i},x) \mid x \in\bv(r_i) \} \:\cup \\
    \displaystyle
    \bigcup_{x\in\pv(\phi')}\bigcup_{1\le i\le h[x]} & \{  H(r_i^x,x), S(r_i^x,r_j^x) \mid  j=(i \bmod h[x])+1 \}
\end{array}
\]
Moreover, we set $\T=\emptyset$, $q=\exists r\,H(r,u)$ and $\P$ to the following set of linear EDs:
\[
\begin{array}{r@{\,}l}
    \forall r,r'\! & (\K S(r,r')\rightarrow \K\exists r'',v \, S(r',r'') \land H(r,v)) \\
    \forall r,v & (\K H(r,v)\rightarrow \K\exists r'\,S(r,r')) \\
    \forall r,v & (\K H(r,v)\rightarrow \K\exists v'\,B(r,v')) \\
    \forall r,v & (\K B(r,v)\rightarrow \K\exists r' \, H(r',v)) \\
\end{array}
\]
Observe that:
\begin{itemize}
    \item The first two EDs imply that the deletion of any fact of the form $H(r,v)$ causes the deletion of all the facts of the form $H(\_,v)$ and all the $S$-facts related to the EDs having $v$ in their head. 
    \item The third ED implies that, if all the $B$-facts for the body variables of a rule are deleted, then also the $H$-fact for its head variable must be deleted.
    \item The fourth ED implies that, if all the $H$-facts for a variable are deleted, then also all its $B$-fact must be deleted.
\end{itemize}

The set $\censiga$ of $\E=\tup{\T,\P,\A}$ can be obtained starting from $\clt{\A}$ (i.e.\ from $\A$, because $\T=\emptyset$) and deterministically removing some facts according to the EDs of $\P$.
We now prove that, for every variable $x\in\pv(\phi')$, no fact of the form $H(\_,x)$ belongs to $\censiga$ (i.e.\ $\E\not\modelsiga q$, which for linear EDs holds iff $\E\not\modelsga q$) iff $x\in I$, where $I$ is the minimal model of $\phi'$.

\smallskip
$(\Leftarrow)$ 
Let $x\in I$. Note that this is possible only if $x$ occurs in the head of a rule $r\in\phi'$ that either does not have a body (\emph{unit clause}) or whose body variables all belong to $I$.
\begin{itemize}
    \item In the first case, due to the first three EDs, $\censiga$ does not contain any atom of the form $H(\_,x)$.
    \item The second case occurs instead when all variables $b_1,\ldots,b_m\in\bv(r)$ either occur in unit clauses themselves or, in turn, occur as head variables in rules whose body variables belong to $I$. By recursively applying the previous and the current point, respectively, one can see that all facts \(
        B(r,b_1),\ldots,B(r,b_m)
    \) are removed from $\censiga$ because of the fourth ED (possibly combined with the first two). 
    As a result, by the first three EDs and since $x$ occurs in the head of $r$, it follows that no fact of the form $H(\_,x)$ belongs to $\censiga$ either.
\end{itemize}

$(\Rightarrow)$ 
Let $x$ be such that no fact of the form $H(\_,x)$ belongs to $\censiga$. Note that, since every variable occurs in the head of some rule (by the assumption that a rule $p\ra p$ occurs in $\phi'$ for every $p\in\pv(\phi')$), then there exists a fact $H(\_,x)\in\A$. The absence of such $H$-facts from $\censiga$ can only be due to a violation of the third ED (possibly combined with the first two), i.e.\ all the $B(r,\_)$ facts (for some rule $r=b_1,\ldots,b_m \ra x\in\phi'$) are missing from $\censiga$.

Then, we either have that $r$ is a unit clause of $\phi'$ (which would imply that $x\in I$) or that every fact $B(r,b_i)$ (for $1\le i\le m$) occurring in $\A$ has been removed for building $\censiga$. These last removals can only be due to a violation of the fourth ED (possibly combined with the first two), i.e.\ all facts $H(\_,b_i)$ (for every $1\le i\le m$) are missing from $\censiga$. 
By recursively applying this point, we conclude that all $b_i$ variables belong to $I$ (implying that also $x\in I$).

By instantiating the above property with $x=u$, we have that $u\in I$ (i.e.\ $\phi$ is unsatisfiable) iff $\E\modelsiga q$.
\qed


\subsection*{Proof of Proposition~\ref{pro:mpv-convergence}}
From Definition~\ref{def:mpv}, and in a way analogous to the proof of Proposition~\ref{pro:lpv-convergence}, we get the following properties:
\begin{enumerate}[label=$(\roman*)$]
    \item for every $i$ such that $1\leq i\leq n$, $\mpv(\E,i+1)\supseteq\mpv(\E,i)$;
    \item if $\mpv(\E,i+1)=\mpv(\E,i)$, then $\mpv(\E,j)=\mpv(\E,i)$ for every $j>i$. 
\end{enumerate}

Now, two cases are possible:
\begin{enumerate}
    \item there exists an integer $i$ such that $0\leq i\leq |\clt{\A}-1|$ and $\mpv(\E,i+1)=\mpv(\E,i)$. In this case, from the above property $(ii)$ it follows that $\mpv(\E,n)=\mpv(\E,n+1)$ for every integer $n$ such that $n\geq|\clt{\A}|$; 
    \item if $\mpv(\E,i+1)\supset\mpv(\E,i)$ for every integer $i$ such that $0\leq i\leq |\clt{\A}-1|$ (i.e.\ every $\mpv(\E,i+1)$ has at least one more atom than $\mpv(\E,i)$), then $|\mpv(\E,|\clt{\A}|)|\geq|\clt{\A}|$, and since $\mpv(\E,i)$ is by definition a subset of $\clt{\A}$, it follows that 
    $\mpv(\E,|\clt{\A}|)=\clt{\A}$.
    For the same reason, it follows that $$\mpv(\E,|\clt{\A}|+1) \subseteq \clt{\A} \subseteq \mpv(\E,|\clt{\A}|).$$ On the other hand, by the above property $(i)$ it follows that $$\mpv(\E,|\clt{\A}|+1)\supseteq\mpv(\E,|\clt{\A})|.$$ Consequently, $\mpv(\E,|\clt{\A}|+1)=\mpv(\E,|\clt{\A}|)$. Finally, from the above property $(ii)$ it follows that $\mpv(\E,j)=\mpv(\E,|\clt{\A}|)$ for every integer $j$ such that $j\geq |\clt{\A}|$.
\end{enumerate}
\qed

\subsection*{Proof of Proposition~\ref{pro:mpv-censor}}

Let $\C$ denote the set $\clt{\A}\setminus\A'$. Obviously, $\C\cap\A'=\emptyset$.
By contradiction, let $\T\cup\C\not\modelseql\P$. 
Now, let $\A''$ be the $\subseteq$-minimal subset of $\C$ such that, for some $\tau\in\P$ and $\sigma\in\gs(\tau)$, $\T\cup\A''\models\body(\sigma(\tau))$ and $\T\cup\C\not\models\head(\sigma(\tau))$ (notice that such $\A''$ exists because $\T\cup\C\not\modelseql\P$).
Since $\C\subseteq\clt{\A}$ (by construction), then $\A''$ is also a $\subseteq$-minimal subset of $\clt{\A}$ such that, for some $\tau\in\P$ and $\sigma\in\gs(\tau)$, $\T\cup\A''\models\body(\sigma(\tau))$ and $\T\cup\C\not\models\head(\sigma(\tau))$, i.e.\ $\A''\subseteq\mpvz(\E,\C)$. However, since $\mpvz(\E,\C)\subseteq\A'$ (by hypothesis), this contradicts the fact that $\C\cap\A'=\emptyset$, thus proving property $(i)$.

\smallskip
Then, by definition of $\mpv(\E)$, we have $\mpv(\E)=\mpvz(\E,\clt{\A}\setminus\mpv(\E))$, hence by property $(i)$ we get property $(ii)$.

\smallskip
Finally, let $\A'$ be any set of facts such that $\A'\supseteq\mpvz(\E,\clt{\A}\setminus\A')$. We prove by induction that, for every $i$ such that $1\leq i\leq |\clt{\A}|$, $\A'\supset\mpv(\E,i)$.
\begin{itemize}
    \item Base case ($i=0$): trivially, $\A'\supseteq\emptyset=\mpv(\E,0)$.
    \item Inductive case: suppose $\A'\supseteq\mpv(\E,i)$. Notice that, for every pair of sets $S,S'$, the set $\mpvz(\E,S)$ monotonically decreases when $S$ increases, meaning that $\mpvz(\E,S)\supseteq\mpvz(\E,S')$ whenever $S\subseteq S'$. This holds because, by Definition~\ref{def:mpv}, the MPVs are searched in both cases among all possible subsets of $\clt{\A}$, but the non-entailment check is done on $S$ and $S'$, respectively. Therefore, $\mpvz(\E,\clt{\A}\setminus\A')\supseteq\mpvz(\E,\clt{\A}\setminus\mpv(\E,i))$, and since $\mpv(\E,i+1)=\mpvz(\E,\clt{\A}\setminus\mpv(\E,i))$, it follows that $\A'\supseteq\mpv(\E,i+1)$.
\end{itemize}
Since $\mpv(\E)=\mpv(\E,|\clt{\A}|)$, it follows that $\A'\supseteq\mpv(\E)$, thus proving property $(iii)$.
\qed

\subsection*{Proof of Theorem~\ref{thm:mpv-iga-denials}}

Note that, in the hypothesis $(i)$, the fixpoint for $\mpv(\E,i)$ is already reached for $i=1$ (i.e.\ $\mpv(\E,1)=\mpv(\E)$).
Then, $\mpv(\E)$ coincides with the sets of $\subseteq$-minimal $\E$-undisclosable subsets of $\clt{\A}$. Hence, the thesis follows by Definition~\ref{def:mpv-censor} and Proposition~\ref{pro:iga-min-undisclosable}.

Under the hypothesis $(ii)$, Definition~\ref{def:lpv} and Definition~\ref{def:mpv} coincide; thus, the thesis follows immediately.
\qed

\subsection*{Proof of Proposition~\ref{pro:alg-mpv-ent-dlliter}}
First, recall that, for every $\tau\in\P$, $\ucqrewr(\body(\tau),\T)$ returns a UCQ $q'$ such that for every ABox $\A$ and for every $\sigma\in\gs(\tau,\Const(\A))$, $\T\cup\A\models\body(\sigma(\tau))$ iff $\A\models\sigma(q')$.

Then, we prove the following property (*): at every iteration $i$ of the repeat--until loop of the algorithm, the set of sets $\I$ computed after the minimization step of line~\ref{step:minimality-check} is the set of all the minimal policy violations of $\E$ with respect to $\A_p'$.
%
Suppose $\A''$ is a minimal policy violation of $\E$ with respect to $\A_p'$. Then, there exist $\tau\in\P$ and $\sigma\in\gs(\tau,\Const(\A))$ such that $\T\cup\A''\models\body(\sigma(\tau))$ and $\T\cup(\C\setminus\A_p')\not\models\head(\sigma(\tau))$. Consequently, $\A''\models\sigma(q')$, where $q'$ is the UCQ returned by $\ucqrewr(\body(\tau),\T)$. Consequently, $\A''$ is a homomorphic image of $\ucqrewr(\body(\tau),\T)$ in $\clt{\A}$, hence $\A''\in\I$ before the minimization of $\I$.
Furthermore, it is immediate to verify that every set $\A''$ that belongs to the set of sets $\I$ before its minimization contains a minimal policy violation of $\E$ with respect to $\A_p'$. These two properties imply that, before its minimization, the set of sets $\I$ computed by the algorithm contains all the minimal policy violations of $\E$ w.r.t.\ $\A_p'$ and supersets of them. Consequently, property (*) follows.

Finally, property (*) immediately implies that the set $\A'$ computed by the algorithm at the $i$-th iteration corresponds to $\mpv(\E,i)$. Consequently, the set $\C$ computed by the algorithm corresponds to $\corecens$, which implies the thesis.
\qed

\clearpage
\fi
\renewcommand{\subsubsection}[1]{\medskip\noindent\textbf{#1}\\[1mm]\noindent}
\renewcommand{\paragraph}[1]{\medskip\noindent\textbf{#1}\;}
\renewcommand{\subparagraph}[1]{\medskip\noindent\textit{#1}\;}

\section{Practical implementation and experiments}
\noindent
This section provides further details on our implementation and experimental evaluation.



\subsection{System architecture and implementation setting}


The ABox is stored in a MySQL database, where each predicate is represented by a table whose schema reflects the arity of the predicate. Unary and binary predicates are mapped to single-column (\texttt{attr1}) and two-column (\texttt{attr1}, \texttt{attr2}) tables, respectively. 



The interaction between the application layer and the database is mediated by a SQL generation component, which translates logical formulas into executable SQL queries, which are used both for manipulating the data (i.e.\ for computing the (A)MPV censor) and for query answering.




\paragraph{Input, output, and execution setup.}
The implemented software takes as input a TBox expressed as an OWL~2~QL ontology,
a confidentiality policy specified through a JSON file, and an ABox stored in a relational database.
In the implementation, the ABox is accessed through a database connection.
The (A)MPV censor is stored in a new database instance. 

Before executing the algorithm for computing the (A)MPV censor, the software loads and preprocesses both the ontology and the policy.
The TBox is loaded and prepared in order to support ABox and policy expansion.
In particular, during the TBox preprocessing phase, data properties are removed from the input ontology, as they are not part of $\dlliter$, and OWL axioms that are not supported by the tree-witness library are rewritten as follows:
\begin{itemize}
    \item OWLEquivalentClassesAxiom axioms are expressed using two instances of OWLSubClassOfAxiom (i.e., the equivalence between concepts $A$ and $B$ is expressed as $A \sqsubseteq B$ and $B \sqsubseteq A$);
    \item OWLEquivalentObjectPropertiesAxiom axioms are expressed using multiple instances of OWLSubObjectPropertyOfAxiom (i.e., the equivalence between roles $R$ and $S$ is expressed as $R \sqsubseteq S$ and $S \sqsubseteq R$);
    \item OWLSymmetricObjectPropertyAxiom axioms are expressed using an OWLInverseObjectPropertiesAxiom and an OWLSubObjectPropertyOfAxiom (i.e., role symmetry is expressed as $R^- \sqsubseteq R$).
\end{itemize}

The policy is parsed from the JSON file and translated into a set of \texttt{EpistemicDependency} Java objects.

\begin{example}
The linear ED
\[
\forall x\,(\K\mathsf{Employee}(x) \rightarrow \K \exists y\,\mathsf{worksFor}(x,y))
\]
can be specified in the JSON policy file as follows:
\begin{verbatim}
{
  "body": "Q(x) :- :Employee(x).",
  "head": "Q(x) :- :worksFor(x,y)."
}
\end{verbatim}

\qedex
\end{example}



\paragraph{Structure of the algorithm.}
The (A)MPV censor is constructed by iteratively modifying a working copy of the database, removing ABox assertions that violate the confidentiality policy.
In particular, the algorithm is organized into three main phases:
\begin{enumerate}[label=$(\roman*)$]
    \item initialization phase, which prepares the input data and policy;
    \item iterative phase, in which policy violations are detected and the corresponding tuples are removed;
    \item final phase, in which a fixpoint is reached and no further deletions are performed.
\end{enumerate}

\paragraph{Initialization phase.}
The initialization phase consists of a sequence of steps, that prepare the data, the ontology, and the policy before the execution of the iterative censorship phase. In the following, we analyze each of these steps.

\subparagraph{ABox cloning.}
As a first step of the initialization phase, a working copy of the database storing the ABox is created.
For each table of the original database, the corresponding table structure is recreated in the cloned database, and all tuples are copied.
This approach ensures that the censorship process operates on an isolated copy of the data.
As a result, the original ABox remains unchanged, while the algorithm can perform arbitrary data manipulation.


\subparagraph{ABox expansion.}
The next step includes the expansion of the ABox with respect to the TBox, i.e.\ the computation of $\clt{\A}$.
The expansion is performed predicate-wise.
For each concept and object property occurring in the TBox, an atomic rewriting with respect to the ontology is computed by applying $\atomrewr$~\cite{CLRS24} (a simplified implementation of $\ucqrewr$ for atomic queries) to the corresponding predicate atom.
Such a rewriting captures all the possible ways in which instances of the corresponding predicate can be inferred from existing ABox assertions and TBox axioms.

\begin{example}\label{example: expansion-abox}
Consider the following TBox:
\[
\begin{aligned}
\T =
\{\, 
& \exists\,\mathsf{attends} \sqsubseteq \mathsf{Student},\\
& \exists\,\mathsf{attends}^{-} \sqsubseteq \mathsf{Course}
\,\}
\end{aligned}
\]

By applying the atomic rewriting procedure $\atomrewr$ to the predicate
$\mathsf{Student}(x)$ with respect to the TBox, we obtain the following rewriting:
\[
\mathsf{Student}(x) \;\lor\; \exists y\,\mathsf{attends}(x,y).
\]
Intuitively, this rewriting captures the fact that any individual occurring as the
first argument of the role $\mathsf{attends}$ must be an instance of
$\mathsf{Student}$.

Similarly, applying $\atomrewr$ to the predicate $\mathsf{Course}(x)$
yields the rewriting:
\[
\mathsf{Course}(x) \;\lor\; \exists y\,\mathsf{attends}(y,x),
\]
reflecting the fact that individuals occurring as the second argument of
$\mathsf{attends}$ must be instances of $\mathsf{Course}$.
\qedex
\end{example}

Each rewritten atom is then translated into an SQL query and executed on the working database.
The resulting tuples are inserted into the corresponding table, provided that they are not already present.
In this way, the ABox is incrementally saturated with respect to the TBox, while avoiding duplicate facts.
After this step, the database contains a materialized representation of the ABox closure under the TBox.

\begin{example}
We refer to Example~\ref{example: expansion-abox} and we consider the following ABox:
\[
\begin{array}{r@{}l}
\A = \{&\mathsf{Course}(\mathsf{courseA}),\;
\mathsf{attends}(\mathsf{Bob}, \mathsf{courseA}), \\&\mathsf{Student}(\mathsf{Alice})\}.
\end{array}
\]

The rewriting of $\mathsf{Student}(x)$ is translated into the following SQL query, which retrieves all individuals that explicitly occur as instances of \texttt{Student} or that can be inferred via the role \texttt{attends}:
\begin{verbatim}
SELECT DISTINCT x FROM (
  SELECT attr1 AS x FROM attends
  UNION ALL
  SELECT attr1 AS x FROM student
) AS q0
\end{verbatim}

When executed on the working database, this query returns the values \texttt{Bob} and \texttt{Alice}, identifying the individuals that must be considered as instances of the predicate \texttt{Student} according to the rewriting.

To materialize these inferred assertions in the ABox, the result of the query is used to construct an \texttt{INSERT} statement that adds the corresponding tuples to the table associated with \texttt{Student}, provided that they are not already
present. This is achieved by adding to the query a \texttt{NOT EXISTS} condition, as shown below:
\begin{verbatim}
INSERT INTO student(attr1)
SELECT DISTINCT x FROM (
  SELECT attr1 AS x FROM attends
  UNION ALL
  SELECT attr1 AS x FROM student
) AS q0
WHERE NOT EXISTS (
  SELECT 1 FROM student t
  WHERE t.attr1 = q0.x
)
\end{verbatim}
As a result, only the fact $\mathsf{Student}(\mathsf{Bob})$ is inserted into the ABox during the expansion phase.
\qedex
\end{example}

\subparagraph{Policy expansion.}
The next step of the initialization phase consists in expanding the confidentiality policy with respect to the TBox.
In the implementation, the body of each ED is rewritten with respect to the TBox by exploiting the tree-witness rewriting algorithm. 
For every rewritten version of the body, a new ED is generated by combining the rewritten body with the original head of the dependency. During this process, the universally quantified variables are aligned with those of the original dependency. The result of this step is a new expanded policy $\P'$ that captures all potential policy violations induced by the TBox.
\begin{example}
    Consider the following TBox $\T$ and policy $\P$:
    \[
    \begin{aligned}
    \T =\ 
    & \{\, 
        B \sqsubseteq C
    \,\},
    \\
    \P =\ 
    & \{\, 
        \forall x\,(
            \K (C(x) \wedge A(x)) \rightarrow \K D(x)
        )
    \,\}.
    \end{aligned}
    \]
    The policy $\P'$ obtained after the policy expansion step is the following:
    \[
    \begin{aligned}
    \P' =\ 
    & \{\, 
        \forall x\,(
            \K (C(x) \wedge A(x)) \rightarrow \K D(x)
        ) \\
    &\quad     
        \forall x\,(
            \K (B(x) \wedge A(x)) \rightarrow \K D(x)
        ) 
    \,\}.
    \end{aligned}
    \]
    \qedex
\end{example}

We remark that such a policy expansion step is used to implement the check $\T\cup\A\models\body(\sigma(\tau))$ done at line~\ref{step:ed-violation} of Algorithm~\ref{alg:mpv-ent-dlliter} (see the next paragraph ``Further preprocessing steps").

\subparagraph{Head expansion.}
As a final step of the initialization phase, also the rewriting of the head of every ED is precomputed, for optimization purposes.

\subparagraph{Further preprocessing steps.}
For each ED in the expanded policy, the algorithm constructs a temporary table representing the closure of its body with respect to the initial ABox and the TBox.
This table, called \texttt{bodyBase}, contains all instantiations of the body that are implied by the ontology (i.e.\ the homomorphic images described in Section~\ref{sec:experiments}).
To this end, the body of each ED $\tau$ is 
compiled into an SQL query, which is executed on the working database, and its result is materialized into the corresponding \texttt{bodyBase} table.

\begin{example}
\label{example-bodybase}
    Consider the following ED:
    \[
    \tau:\quad
    \forall x,y\,
    \bigl(
    \K\,(\mathsf{attends}(x,y) \wedge \mathsf{Course}(y))
    \rightarrow
    \K\,\mathsf{Student}(x)
    \bigr).
    \]
    Assume the following TBox:
    \[
    \T =
    \{\, 
    \exists\,\mathsf{attends} \sqsubseteq \mathsf{Student}
    \,\}.
    \]
    Let the initial ABox be:
    \[
    \A =
    \{\, 
    \mathsf{attends}(\mathsf{Bob}, \mathsf{courseA}),
    \mathsf{Course}(\mathsf{courseA})
    \,\}.
    \]
    
    During the preprocessing step, the body of $\tau$ is first translated into the formula $\mathsf{attends(x,y)} \wedge \mathsf{Course(y)}$ that is then compiled into the following SQL query:
    \begin{verbatim}
    SELECT x, y 
    FROM 
      ( SELECT attr1 AS y
        FROM `course` Course1 ) q0
      NATURAL JOIN
      ( SELECT attr1 AS x, attr2 AS y
        FROM `attends` attends1 ) q1
    \end{verbatim}
    The evaluation of this query over the working database retrieves all instantiations of the body that are implied by $\T \cup \A$. 

    The result of this query is materialized into the temporary table
    \texttt{bodyBase}, which in this case contains the tuple
    $(x=\mathsf{Bob},\, y=\mathsf{courseA})$.
    \qedex
\end{example}

The \texttt{bodyBase} table associated with each ED $\tau$ is computed once during preprocessing, and it is never modified during the iterative censorship process.

Observe that, for every $\sigma\in\gs(\tau)$ and for every subset $\A''\in\hi(\body(\sigma(\tau)))$, there exists a corresponding tuple in the \texttt{bodyBase} table associated with $\body(\tau)$ (and vice versa).
Now, the previous policy expansion step guarantees that, for every $\tau\in\P$ and for every CQ $q$ such that $q\in q'$ where $q'$ is the UCQ returned by $\ucqrewr(\body(\tau),\T)$, there exists an ED $\tau'$ in the policy expansion of $\P$ with respect to $\T$ such that $\tau'$ is obtained from $\tau$ by replacing its body with $q$. Therefore, there exists a \texttt{bodyBase} table associated with each such $q$.
Consequently, for every $\tau\in\P$, for every $q\in q'$, where $q'$ is the UCQ returned by $\ucqrewr(\body(\tau),\T)$, for every $\sigma\in\gs(\tau)$ and for every $\A''\in\hi(\sigma(q))$, there exists a tuple representing $\A''$ in the \texttt{bodyBase} table associated with $q$. This guarantees that all the potential MPVs needed to construct the MPV censor are correctly represented in the \texttt{bodyBase} tables.

\paragraph{Iterative phase.}
At a high level, the iterative phase of the (A)MPV censor is built as follows:
$(i)$ for each ED, a working temporary table is constructed containing the current instantiations of its body;
$(ii)$ tuples that satisfy the head are removed from this table;
$(iii)$ the remaining tuples correspond to violations and are deleted from the ABox;\footnote{For the MPV censor, this step is preceded by the minimality check, which here is not described, and it consists in collecting in Java all the results of the temporary table containing the candidate violations, mutually compare (w.r.t.\ set containment) their representation as sets of facts, and use the minimal sets for the deletion process (back with SQL).}
$(iv)$ the process is repeated until a fixpoint is reached.

We now analyze each algorithm step, highlighting the used data structures and their role in enforcing the policy.

\subparagraph{Construction of working body tables.}
At the beginning of each iteration, for every ED $\tau \in \P'$, the algorithm constructs a working temporary table, denoted as \texttt{bodyWork}, starting from the corresponding \texttt{bodyBase} table (which remains fixed throughout the whole execution).
The \texttt{bodyWork} table is created as a fresh copy of \texttt{bodyBase} and represents the current set of candidate body instantiations that may still lead to a policy violation in the current iteration.

\subparagraph{Identification of violating tuples.}
For each ED $\tau \in \P'$, the algorithm retrieves the corresponding \texttt{bodyWork} table and considers the set of CQs obtained from the rewriting $\ucqrewr(\head(\tau),\T)$ of its head with respect to the TBox.
Each of such CQs represents a sufficient condition under which the head of $\tau$ is satisfied.
Operationally, for each $q\in\ucqrewr(\head(\tau),\T)$, the corresponding SQL query is created and evaluated against the current working ABox. Specifically, tuples in \texttt{bodyWork} whose variable assignments are such that one of such CQs holds in the current ABox are removed from the table, as they do not constitute a policy violation.

\begin{example}
\label{example-tupleviolationmpv}
Consider the TBox $\T=\{\exists\mathsf{hasAlumnus}^- \ISA \mathsf{Student}\}$ and the ED
\[
    \tau_0 : \forall x,y\,\bigl(
    \K( \mathsf{Course}(y) \wedge \mathsf{attends}(x,y) )
    \rightarrow
    \K\,\mathsf{Student}(x)
    \bigr).
\]

In this case, the head of $\tau_0$ is rewritten into the disjunctive formula
$ \mathsf{Student}(x)
\lor
\exists b\,\mathsf{hasAlumnus}(b,x)$,
which is actually represented as a set:
\[
headRew =
\bigl\{
\mathsf{Student}(x),\;
\exists b\,\mathsf{hasAlumnus}(b,x)
\bigr\}.
\]
%
Now consider the following ABox $\A$:
\[
\begin{array}{r@{}l}
    \A =
    \{&
    \mathsf{attends}(\mathsf{s1},\mathsf{c1}),
    \mathsf{Course}(\mathsf{c1}),
    \mathsf{Student}(\mathsf{s1}),
    \\&
    \mathsf{attends}(\mathsf{s2},\mathsf{c1})
    \}.
\end{array}
\]

The $\texttt{bodyWork\_0}$ table associated with $\tau_0$ has two columns, corresponding to the body variables $x$ and $y$, and contains the tuples $(x=\mathsf{s1},\, y=\mathsf{c1})$ and $(x=\mathsf{s2},\, y=\mathsf{c1})$, representing the two instantiations of the body of $\tau_0$ in the current ABox.

Since the assertion $\mathsf{Student}(\mathsf{s1})$ is entailed by $\T \cup \A$, the first head rewriting $\mathsf{Student}(x)$ is satisfied for the substitution $x=\mathsf{s1}$.
As a consequence, the body instantiation $(x=\mathsf{s1},\, y=\mathsf{c1})$ does not constitute a policy violation and must be removed from \texttt{bodyWork\_0}.

Operationally, this filtering step is implemented by joining
\texttt{bodyWork\_0} with the SQL query corresponding to the head rewritings on the shared variable $x$, and deleting all matching
tuples.
For instance, the following deletion query removes the satisfied body instantiation:
\begin{verbatim}
DELETE bodyWork_0 
FROM bodyWork_0
JOIN (
  SELECT attr1 AS x FROM student
) AS myAlias
ON bodyWork_0.x = myAlias.x
\end{verbatim}
Analogously, the same procedure is applied to the second head rewriting:
\begin{verbatim}
DELETE bodyWork_0 
FROM bodyWork_0
JOIN (
  SELECT attr2 AS x FROM hasalumnus
) AS myAlias 
ON bodyWork_0.x = myAlias.x
\end{verbatim}
After executing the above queries, the tuple $(x=\mathsf{s2},\, y=\mathsf{c1})$ remains in
\texttt{bodyWork\_0}, as neither $\mathsf{Student}(\mathsf{s2})$ nor $\mathsf{hasAlumnus}(b,\mathsf{s2})$ can be inferred from the current ABox.
\qedex
\end{example}


\subparagraph{Violating tuples deletion.}
At this point, for each ED, we have an associated \texttt{bodyWork} table, whose tuples correspond to body instantiations that violate the ED.

In this step, we aim to delete the corresponding ground assertions of such violations from our ``temporary'' ABox. To do that, for each ED $\tau \in \P'$, the algorithm retrieves the associated \texttt{bodyWork} table and, for each atom occurring in its body, it constructs a join-based \texttt{DELETE} statement that removes from the corresponding ABox table all tuples whose attribute values match the violating instantiations stored in \texttt{bodyWork}.
The join condition is built by aligning the variables of the body atom with the columns of the ABox table, while constants are handled through equality conditions.

\begin{example}\label{example-corega-deletion}
We continue Example~\ref{example-tupleviolationmpv}. 
After the identification of violating tuple step, the table \texttt{bodyWork\_0} contains exactly the body instantiations that violate the ED $\tau_0$.

In the violating tuples deletion step, the algorithm removes from the ABox all ground assertions that participate in this violation.
Since the body of $\tau_0$ consists of the atoms
$\mathsf{Course}(y)$ and $\mathsf{attends}(x,y)$, the algorithm processes each body atom separately.

For the atom $\mathsf{attends}(x,y)$, the following \texttt{DELETE} statement is constructed and executed:
\begin{verbatim}
DELETE attends 
FROM attends JOIN bodyWork_0 tmp 
   ON tmp.x = attends.attr1 
  AND tmp.y = attends.attr2
\end{verbatim}
This query removes the violating tuple $\mathsf{attends} (\mathsf{s2},\mathsf{c1})$ from the ABox.

Analogously, for the atom $\mathsf{Course}(y)$, the following \texttt{DELETE} statement is constructed and executed:
\begin{verbatim}
DELETE course 
FROM course JOIN bodyTemp_1 tmp 
ON tmp.y = course.attr1
\end{verbatim}
This query removes the violating tuple $\mathsf{Course} (\mathsf{c1})$ from the ABox.

After executing these deletions, all ground assertions participating in the policy violation have been removed, and the censored ABox is $\A = \{\mathsf{attends(s1,c1), Student(s1)}\}$.

\qedex
\end{example}

\paragraph{Final phase.}
The final phase corresponds to the termination of the iterative censorship process.
The algorithm stops when an iteration completes without performing any further deletions on the working database,\footnote{Java APIs return the number of deleted tuples after a \texttt{DELETE} query.} meaning that a fixpoint has been reached. 
At this point, the remaining ABox assertions no longer violate the confidentiality policy under the considered semantics, and the working database corresponds to the (A)MPV censor.

\subsection{Optimizations}
The algorithm for computing the AMPV censor requires several iterations over the ABox and repeated evaluations of EDs.
In order to make the approach scalable to large datasets, we implemented a number of optimizations aimed at reducing redundant computations, simplifying SQL queries, and improving database performance.
One of the main optimizations is constituted by the technique for tuple identification and deletion described above. 
Further optimization techniques adopted in our implementation are listed below.

\paragraph{Evaluation of active EDs.}
A first optimization concerns the identification of the EDs that need to be evaluated in the current iteration of the censoring algorithm.
A naive implementation would re-evaluate all EDs at every iteration, even when no relevant changes have occurred in the underlying ABox, leading to a significant amount of redundant computation.
To avoid this, we introduce the notion of \emph{active EDs}.
At the beginning of the algorithm, all dependencies $\tau \in \P'$ are considered active.
During each iteration, in the violating tuples deletion step, we track the body predicates whose corresponding ABox tables are affected by tuple deletions. 
Intuitively, only those EDs whose head mentions affected predicates may become newly violated in the next iteration.
At the end of each iteration, the set of affected predicates is used to select only those EDs whose (expanded) heads contain at least one affected predicate.
All other dependencies are safely ignored in the next iteration.

\paragraph{Database indexes.}
A further important optimization concerns the interaction with the underlying relational database.
The censoring algorithm repeatedly executes join-based $\texttt{DELETE}$ statements and selection queries over ABox tables. Without appropriate indexing, these operations require full table scans, which become too expensive as the size of the data grows.
To mitigate this issue, we automatically create indexes on all columns of the cloned database tables before executing the censoring procedure. Indexes allow the database engine to quickly locate tuples matching join conditions or selection predicates, thereby reducing the cost of query execution.
Although index creation introduces a small overhead in the initialization phase, this cost is amortized over the many SQL operations performed. 

\subsection{Experiments}
In this section, we present more details about the experimental evaluation of our implementation of the proposed framework.
In particular, we analyze the scalability of the AMPV censor with respect to the size of the ABox, analyze its iterative behavior, assess the impact of the design choices and optimizations discussed in the previous sections, and evaluate the cost of query answering over the censored ABox.

All experiments are conducted by varying the size of the ABox while keeping the TBox and the policy fixed. 

\subsubsection{Experimental setup}
All experiments were conducted on a laptop with an Intel Core i7-7700HQ processor running at 2.80~GHz and 8~GB of RAM. The implementation is written in Java 11 and relies on MySQL 8.0 for storing and manipulating the ABox.

\paragraph{Knowledge base.}
We rely on the OWL2Bench benchmark for OWL ontologies~\cite{owl2bench-queries} ($\sim$140 classes and $\sim$890 object properties, respectively concept and roles), which models a university domain and provides a data generator for producing ABoxes of customizable size.
In particular, we consider ABoxes generated with $N = 1, 5, 10, 20, 25,$ and $50$ universities.
As $N$ increases, the size of the ABox grows accordingly, allowing us to evaluate the scalability of the censoring algorithm with respect to the number of assertions.
Table~\ref{tab:core-ga-censor-summary} reports the size of the ABoxes used in the experiments, both before and after expansion with respect to the TBox.

\paragraph{Policy.}
The policy used in the experimental evaluation consists of a fixed set of 11 EDs. The complete list of EDs used in the experiments is reported in Appendix~\ref{app:policy}.
The dependencies are arbitrary and capture heterogeneous structural patterns (e.g., linear and non-linear EDs), interdependent violations requiring multiple iterations of the censoring algorithm, and non-trivial interactions with the data.
They are semantically meaningful in the OWL2Bench university domain and are not trivially entailed by the TBox.
After expansion with respect to the TBox, the policy results in a total of 153 EDs.

\paragraph{Queries.}
For the query answering evaluation, we rely on the OWL2Bench QL benchmark query set, consisting of ten SPARQL queries (reported in Appendix~\ref{app:queries}), fixed across all experimental configurations. 
For each dataset size $N \in \{1,5,10,20,25,50\}$, queries are evaluated over the censored ABox produced by the (A)MPV censor.
Each query is first rewritten with respect to the TBox using the tree-witness rewriting technique, and the resulting SQL query is then executed over the relational database storing the censored ABox.

\begin{table*}[t]
\centering
\setlength{\tabcolsep}{2pt}

\begin{tabular}{l|ccc|cccccc|}
\cline{2-10}
\multicolumn{1}{c|}{\multirow{3}{*}{}}
& \multicolumn{9}{c|}{\textbf{Dataset size (N)}} \\
\cline{2-10}
& \multicolumn{3}{c|}{\textbf{MPV}} 
& \multicolumn{6}{c|}{\textbf{AMPV}} \\
\cline{2-10}
& \textbf{1} & \textbf{5} & \textbf{10}
& \textbf{1} & \textbf{5} & \textbf{10} & \textbf{20} & \textbf{25} & \textbf{50} \\
\hline
\multicolumn{10}{|l|}{\textbf{Initialization phase}} \\
\hline
\multicolumn{1}{|l|}{ABox cloning}              
& 12.4  & 133.2  & 26.3 
& 9.6 & 29.8 & 33.9 & 43.7 & 33.0 & 59.0 \\
\multicolumn{1}{|l|}{Schema extraction}        
& 0.6  & 0.9  & 0.6 
& 0.6  & 0.8  & 0.6  & 0.6  & 0.7  & 0.6  \\
\multicolumn{1}{|l|}{Index creation}           
& 8.0 & 17.2 & 14.9
& 8.5  & 12.3 & 14.8 & 49.7 & 39.5 & 97.3 \\
\multicolumn{1}{|l|}{ABox expansion}           
& 3.9 & 13.8 & 24.5
& 4.0  & 12.5 & 26.9 & 64.3 & 72.8 & 167.2 \\
\multicolumn{1}{|l|}{Policy expansion}         
& 0.2 & 0.1 & 0.2
& 0.2  & 0.3  & 0.2  & 0.2  & 0.1  & 0.2  \\
\multicolumn{1}{|l|}{Auxiliary data structures}
& 2.0 & 5.6 & 11.4
& 2.0  & 8.7 & 10.0 & 21.1 & 26.8 & 61.9 \\
\hline
\multicolumn{10}{|l|}{\textbf{Iterative phase}} \\
\hline
\multicolumn{1}{|l|}{Violations identification} 
& 1.4 & 7.2 & 14.3 
& 1.5  & 7.7 & 14.9 & 32.4 & 40.9 & 91.5 \\
\multicolumn{1}{|l|}{Minimality check}           
& 46.1  & 2359.1  & 10428.9 
&  --  &  -- &  -- &  --  &  -- &  -- \\
\multicolumn{1}{|l|}{Violations deletion}           
& 4.5 & 25.7 & 84.4 
& 1.4  & 7.0 & 20.8 & 56.4  & 66.6 & 155.1 \\
\hline
\multicolumn{1}{|l|}{\textbf{Final phase}}
& 1.6  & 5.5  & 16.1  
& 1.3 & 8.2 & 14.6 & 26.7 & 25.3 & 56.3 \\
\hline
\multicolumn{1}{|l|}{\textbf{Total execution time}}
& 80.7 & 2568.3  & 10621.6 
& 29.1 & 84.3 & 136.7 & 295.1 & 305.7 & 689.1 \\
\hline
\end{tabular}

\caption{Execution time breakdown of the (A)MPV censor for increasing dataset
sizes, reporting both the initialization and iterative phases.
Times are reported in seconds.}
\label{tab:censor-summary}
\end{table*}

\newcommand{\evalTime}{$t_e$\xspace}
\newcommand{\numDel}{\#$_a$\xspace}
\newcommand{\numActiveEDs}{$\#_{eds}$}

\begin{table*}[t]
\centering
\setlength{\tabcolsep}{2pt}

\begin{tabular}{|c|c|c|c|c|c|c|c|c|c|c|}
\hline
 & & \multicolumn{9}{c|}{\textbf{Dataset size (N)}} \\
\cline{3-11}
&
& \multicolumn{3}{c|}{\textbf{MPV}} 
& \multicolumn{6}{c|}{\textbf{AMPV}} \\
\cline{3-11}
\textbf{Iteration} & \textbf{}
& \textbf{1} & \textbf{5} & \textbf{10} & \textbf{1} & \textbf{5} & \textbf{10} & \textbf{20} & \textbf{25} & \textbf{50} \\
\hline

\multirow{3}{*}{1}
& \evalTime &13.0  &497.9  &2438.8 &1.5  &7.4  &20.0  &52.6  &62.2  &143.7  \\
& \numDel &9487  &57106  &120163  &10752  &65678  &138570  &278145  &334897  &700465 \\
& \numActiveEDs &138  &138  &138  &138  &138  &138  &138  &138  &138  \\
\hline

\multirow{3}{*}{2}
& \evalTime &30.7  &1481.3  &6340.1 &1.1  &5.3  &10.9  &28.1  &34.8  &76.4  \\
& \numDel &3880  &25422  &53228 &3880  &25422  &53228  &105799  &127450  &259660 \\
& \numActiveEDs &67  &67  &67  &67  &67  &67  &67  &67  &67  \\
\hline

\multirow{3}{*}{3}
& \evalTime &8.4  &412.8  &1748.8 &0.3  &1.9  &4.7  &7.9  &10.4  &26.4  \\
& \numDel &0  &0  &0 &0  &0  &0  &0  &0  &0 \\
& \numActiveEDs &0  &0  &0  &0  &0  &0  &0  &0  &0  \\
\hline

\end{tabular}

\caption{Execution time (\evalTime, in seconds), number of deleted ABox assertions
(\numDel) and number of active EDs (\numActiveEDs) activated at the end of each iteration of the (A)MPV censor algorithm for different dataset sizes.}
\label{tab:iter-measures}
\end{table*}

\subsubsection{Results}
We first provide a comparison between the AMPV and MPV algorithms, and then we provide a more detailed analysis of the AMPV censor.
The analysis focuses on both the scalability of the approach and the iterative behavior of the censoring algorithm, with particular attention to factors that influence execution time and convergence.

\paragraph{AMPV-MPV comparison.}
For both AMPV and MPV censor, Table~\ref{tab:censor-summary} provides a breakdown of the total execution time across the main phases of the algorithm used for computing them.
A more fine-grained view of the iterative process is given in Table~\ref{tab:iter-measures}, which reports, for each iteration and dataset size, the execution time and the number of deleted ABox assertions.
This allows us to analyze how the workload and deletion activity evolve across iterations and how they contribute to the overall convergence of the algorithm.

\paragraph{Execution time.}
We now analyze the overall execution time and then provide a more detailed analysis by breaking it down into its main phases and internal components.

\begin{figure}[!t]
    \centering
    \includegraphics[width=0.45\textwidth]{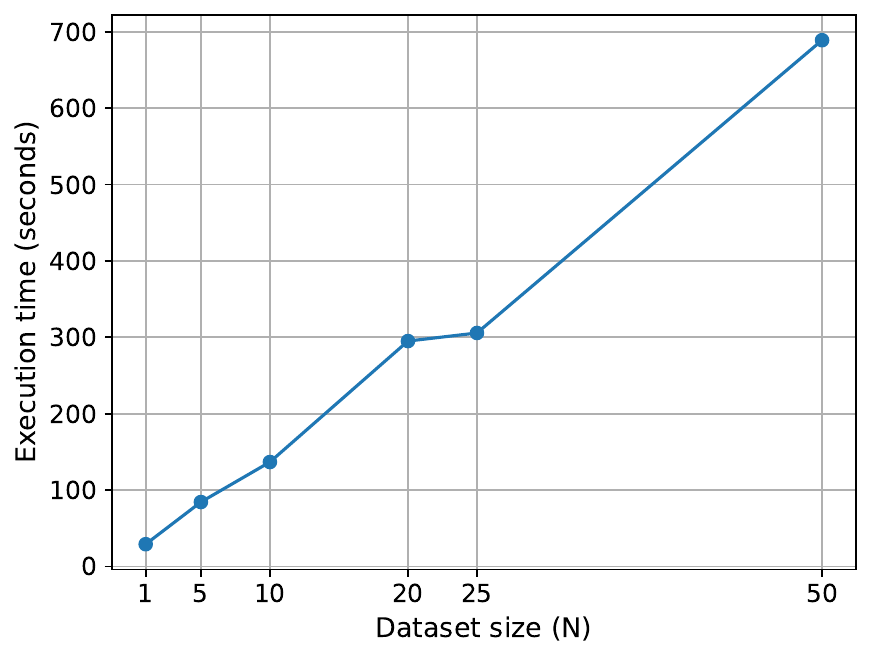}
    \caption{Execution time of the AMPV censor for increasing values of $N$.
    Each value of $N$ corresponds to an ABox of increasing size, as reported in Table~\ref{tab:core-ga-censor-summary}.}
    \label{fig:exec-time}
\end{figure}

Figure~\ref{fig:exec-time} reports the total time required to compute the AMPV censor as a function of the ABox size.
Interestingly, when compared with the growth of the expanded ABox size
(see Table~\ref{tab:core-ga-censor-summary}), the execution time exhibits a smooth and roughly proportional increase, suggesting stable scaling behavior across the considered dataset sizes.

To better understand the factors contributing to the overall execution time, we next analyze how the cost is distributed between the initialization phase and the iterative censorship phase, and we further investigate the internal breakdown of each phase.

\begin{figure}[t]
    \centering
    \includegraphics[width=0.45\textwidth]{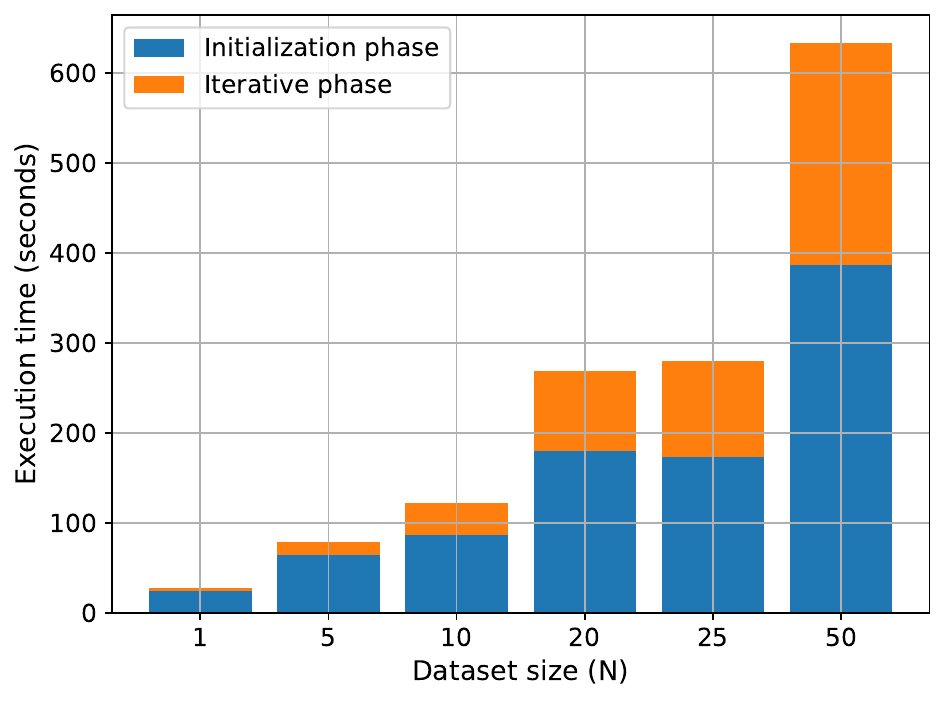}
    \caption{Execution time breakdown of the AMPV censor into initialization and iterative phases for different dataset sizes.}
    \label{fig:init-vs-iter}
\end{figure}

\subparagraph{Initialization vs.\ iterative phase.}
Figure~\ref{fig:init-vs-iter} reports the execution time of the AMPV censor by separating the initialization phase from the iterative censorship phase.
The initialization phase includes all preprocessing steps performed once before the fixpoint computation: database cloning, schema extraction, index creation, ABox and policy expansion, head expansion, and creation of temporary tables.
The iterative phase corresponds to the repeated identification and deletion of tuples violating the confidentiality policy until a fixpoint is reached.
As shown in the figure, for all dataset sizes, the initialization phase represents the dominant component of the overall execution time.
Although the cost of the iterative phase grows with the size of the input ABox, it remains consistently lower than that of the initialization phase.

\begin{figure}[t]
    \centering
    \includegraphics[width=0.45\textwidth]{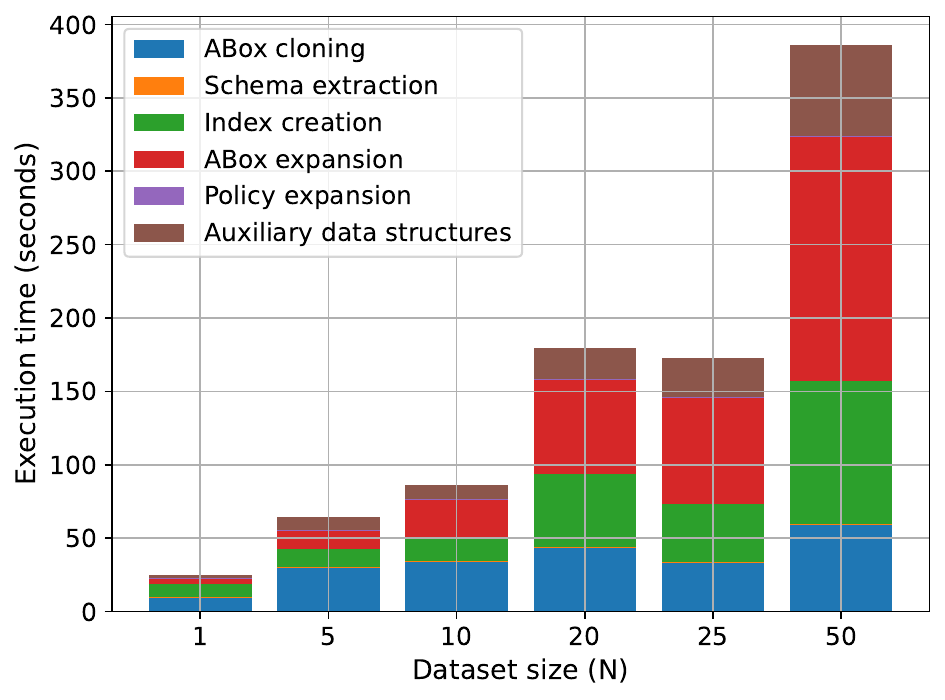}
    \caption{Execution time breakdown of the initialization phase of the AMPV censor for different dataset sizes.}
    \label{fig:init-breakdown}
\end{figure}

\subparagraph{Initialization phase breakdown.}
Figure~\ref{fig:init-breakdown} reports the execution time of the initialization phase of the AMPV censor, broken down into its main components.
The initialization phase of the AMPV censor algorithm is composed of several steps, including ABox cloning, schema extraction, index creation, ABox and policy expansion, and a step devoted to the materialization of auxiliary data structures.
This latter step comprises head expansion and splitting, body atom explosion, and the creation of temporary tables that are later used during the iterative censorship phase.
The results show that the initialization cost is mainly dominated by ABox cloning, ABox expansion, index creation, and auxiliary data structure materialization.
In contrast, schema extraction and policy expansion contribute only marginally to the overall initialization time.

\begin{figure}[t]
    \centering
    \includegraphics[width=0.45\textwidth]{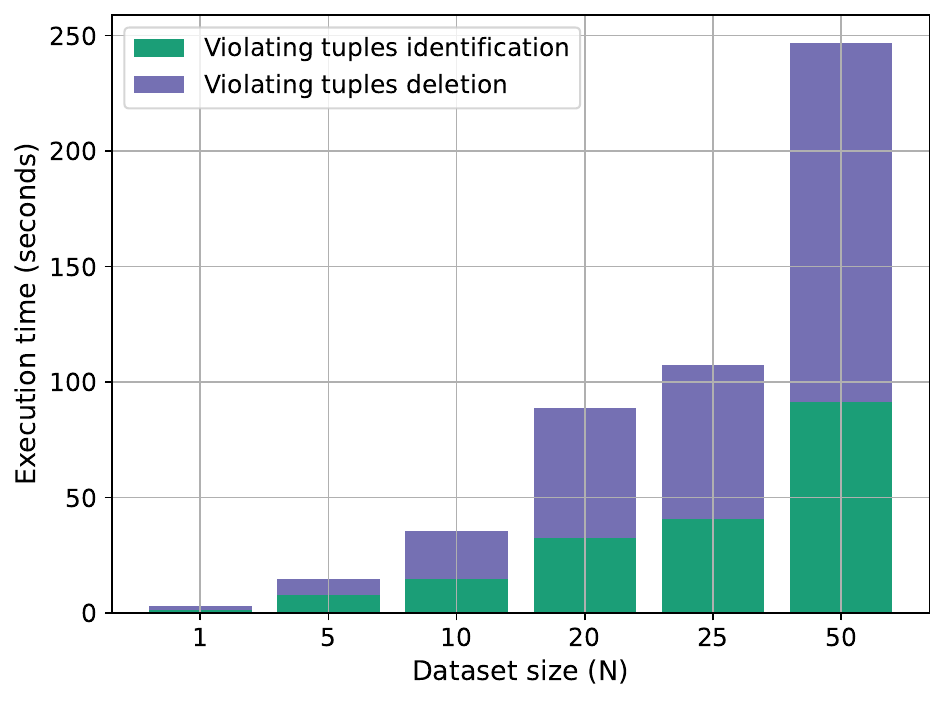}
    \caption{Execution time breakdown of the iterative phase of the AMPV censor into the \emph{violating tuple identification} and \emph{violating tuple deletion} steps for different dataset sizes.}
    \label{fig:iter-breakdown}
\end{figure}

\subparagraph{Iterative phase breakdown.}
Figure~\ref{fig:iter-breakdown} shows the execution time of the iterative phase of the AMPV censor, separated into the \emph{violating tuples identification} and \emph{violating tuples deletion} steps, for increasing dataset sizes.
The results indicate that for small datasets (N=1 and N=5), the two steps exhibit comparable execution times. However, as the dataset size increases, the deletion step progressively dominates the overall cost of the iterative phase, while the identification step contributes to a smaller but non-negligible portion of the total execution time.

\begin{figure}[t]
    \centering
    \includegraphics[width=0.45\textwidth]{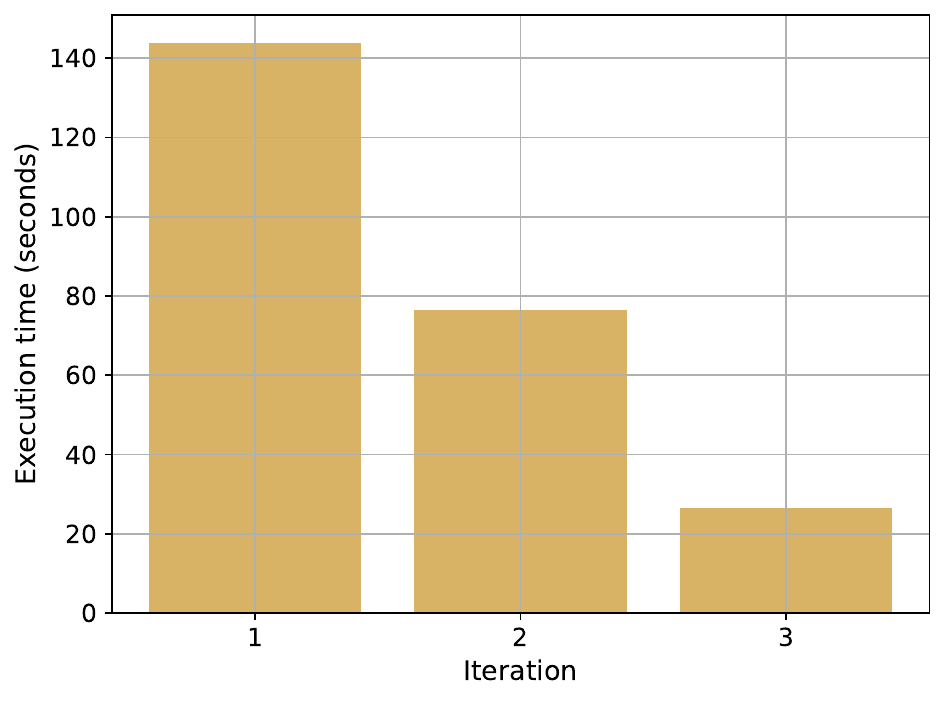}
    \caption{Execution time of each iteration of the iterative phase of the AMPV censor for $N = 50$.
    The last iteration corresponds to the fixpoint check and does not perform any deletions.}
    \label{fig:iter-cost}
\end{figure}

\subparagraph{Cost per iteration.}
Figure~\ref{fig:iter-cost} reports the execution time of each iteration of the iterative phase for the largest dataset ($N = 50$).
While the first iterations account for most of the computational effort, the final iteration performs no deletions and only serves to detect the fixpoint of the process, thus incurring in a negligible cost.
The same behavior is observed for all other dataset sizes, where the first iteration consistently represents the most expensive one.

\paragraph{Iterative behavior.}
In all the experimental configurations, the algorithm converges after a small and constant number of iterations, exactly three, independently of the size of the input ABox.
This behavior suggests that, in our experiments, convergence is primarily driven by the structure of the policy and by the dependencies among violations, rather than by the amount of data.
As a consequence, increasing the size of the ABox mainly affects the cost of each iteration, but not the total number of iterations required to reach the fixpoint.

\begin{figure}[H]
    \centering
    \includegraphics[width=0.45\textwidth]{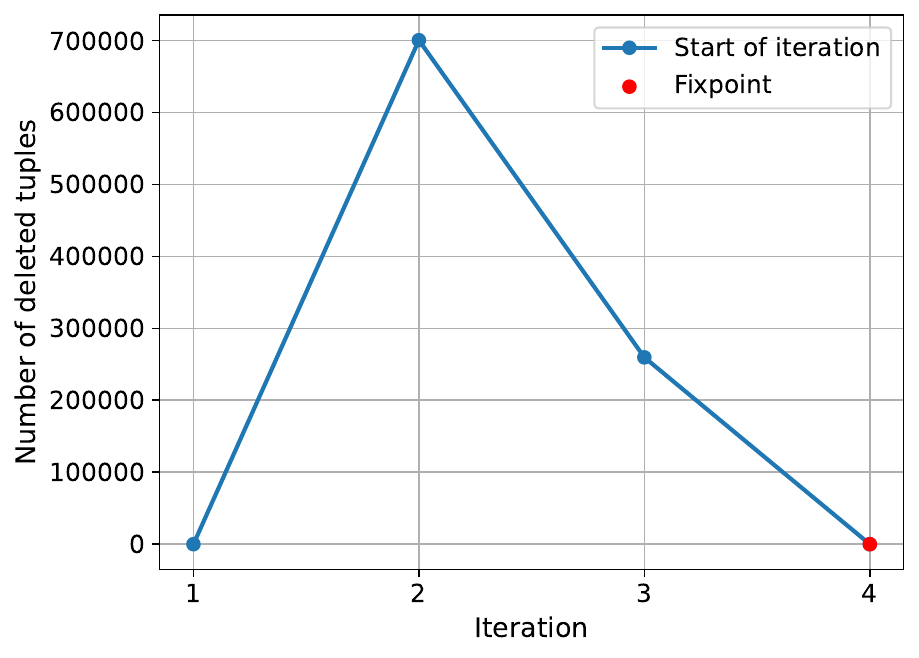}
    \caption{Number of ABox assertions deleted at each iteration of the AMPV censor
    for the largest dataset considered ($N = 50$).}
    \label{fig:tuple-del}
\end{figure}

\paragraph{Tuple deletions across iterations.}
Figure~\ref{fig:tuple-del} reports the number of ABox assertions deleted at each iteration for the largest dataset considered ($N = 50$).
Across all experimental configurations, a clear decreasing trend can be observed: the largest number of deletions occurs during the first iteration, and the number progressively decreases in subsequent iterations until the algorithm reaches a fixpoint.
This behavior reflects the dynamics of the censoring process.
Indeed, in our experiments, during the first iteration, the number of detected violations---and consequently the number of deletions---is maximal.
After this substantial clean-up step, the ABox is already significantly reduced, and fewer violations remain to be addressed.
As a result, subsequent iterations involve progressively fewer deletions.
The decreasing number of removed assertions indicates that the algorithm is converging.
Eventually, no further violations are produced, and a fixpoint is reached.

\begin{figure}[t]
    \centering
    \includegraphics[width=0.45\textwidth]{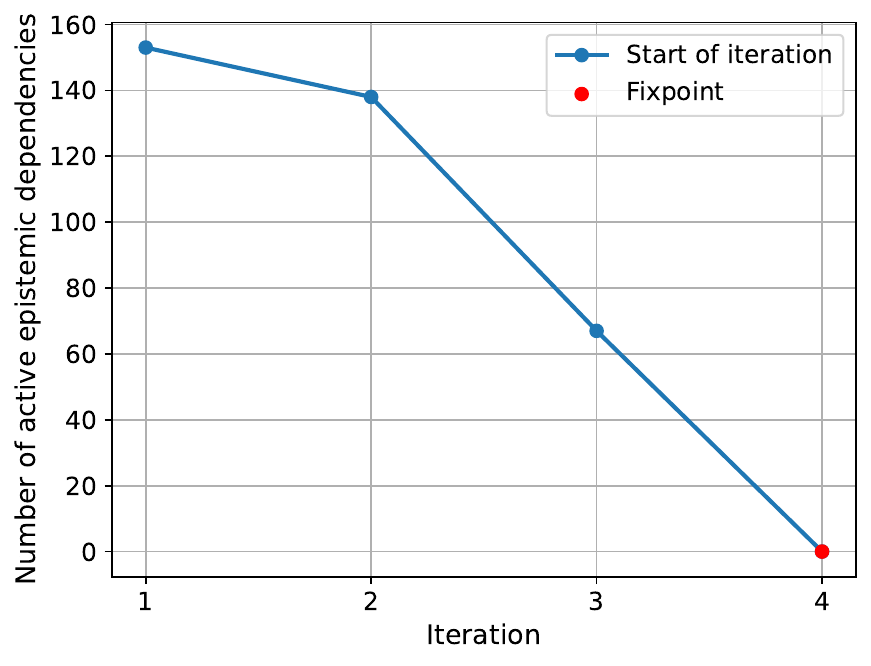}
    \caption{Number of active EDs at the start of each iteration of the AMPV censor for the largest dataset considered ($N = 50$).
    The red point represents the fixpoint reached at the end of the fifth iteration.}
    \label{fig:active-eds}
\end{figure}

\paragraph{Reduction of active EDs.}
Figure~\ref{fig:active-eds} reports the number of EDs that remain active at each iteration of the AMPV censor for the largest dataset considered.
The number of active dependencies decreases sharply across iterations, dropping from 153 in the first iteration to zero at convergence.
This behavior confirms the effectiveness of the optimization based on active EDs, which avoids re-evaluating dependencies that cannot be affected by recent
tuple deletions.
As a result, later iterations are significantly lighter than the initial ones, contributing to the overall scalability of the approach.

\paragraph{Scalability considerations.}
\newcommand{\Afinal}{\A_{\text{final}}}
When normalizing the execution time by the size of the final censored ABox, the average cost per retained tuple remains approximately stable across dataset sizes.
Formally, for each dataset size $N$, we compute the average cost per tuple as follows:
\[
\frac{T_{\text{total}}}{|\Afinal|},
\]
where $T_{\text{total}}$ denotes the total execution time of the censoring algorithm, measured in seconds, and $|\Afinal|$ is the number of ABox assertions remaining after the censorship process.
For presentation purposes, the resulting values are reported in milliseconds per tuple and summarized in Table~\ref{tab:ms-per-tuple}.

\begin{table}[ht]
\centering
\begin{tabular}{c c c}
\hline
$N$ & $|\Afinal|$ & ms/tuple \\
\hline
1  & 75\,054   & 0.38 \\
5  & 504\,036  & 0.16 \\
10 & 1\,113\,488  & 0.12 \\
20 & 2\,153\,670 & 0.13 \\
25 & 2\,677\,038 & 0.11 \\
50 & 5\,450\,941 & 0.12 \\
\hline
\end{tabular}
\caption{Average execution time per retained ABox tuple and size of the final censored ABox for increasing dataset sizes.}
\label{tab:ms-per-tuple}
\end{table}

In particular, for $N \geq 5$, the execution time ranges between $0.11$ and $0.16$ milliseconds per retained tuple.
This observation indicates that the increase in total runtime observed in the previous experiments is primarily explained by the growth of the processed data, rather than by a degradation in the efficiency of the censoring algorithm.
The higher normalized value observed for $N = 1$ can be explained by the relative impact of fixed overhead costs (database cloning, schema extraction, index creation, policy preprocessing, and the initialization of auxiliary data structures).
Although the initialization phase dominates the execution time across all dataset sizes, its cost does not grow proportionally with the size of the final censored ABox.
For small datasets, these fixed costs are amortized over a relatively small number of retained tuples, leading to a higher average cost per tuple.
As the dataset size increases, the number of retained assertions grows significantly, while the initialization overhead increases only moderately.
Consequently, when normalizing the execution time by $|\Afinal|$,
the fixed costs are distributed over a much larger number of tuples,
and the average cost per tuple stabilizes.

\subsubsection{Query answering results}
In this section, we report the results of the experimental evaluation of query answering over the censored ABoxes produced by the AMPV censor.
The analysis focuses on the scalability of query answering and on the impact of confidentiality enforcement on query execution performance, by investigating how query execution time and the number of returned answers evolve as the size of the censored ABox increases.
Table~\ref{tab:qa-summary} summarizes the average execution times and the number of returned answers for each query and each dataset size.
These tables provide a quantitative overview that complements the graphical analysis presented in the following paragraphs.




\begin{figure}[t]
    \centering
    \includegraphics[width=0.45\textwidth]{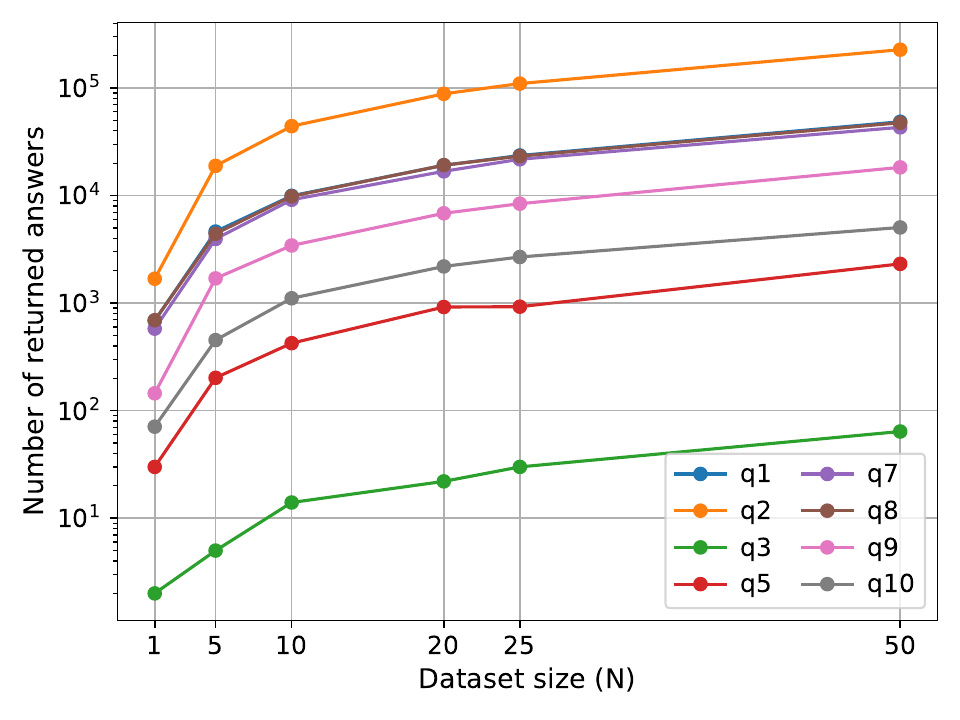}
    \caption{Number of returned answers for the benchmark queries with non-empty
    result sets as a function of the dataset size~$N$.
    A logarithmic scale on the y-axis is adopted to clearly visualize the different
    growth rates, since the result sizes vary significantly across queries.}
    \label{fig:qa-answers}
\end{figure}

\paragraph{Execution time.}
All query execution times are measured after a short warm-up phase.
For each query and each dataset size, one initial execution is performed and discarded, in order to mitigate the effects of caching, query compilation, and JVM warm-up.
Subsequently, each query is executed three times over the same censored ABox, and the reported execution time corresponds to the average of these runs.
This methodology allows us to obtain more stable and representative measurements of query answering performance.

Overall, the results show that query answering remains efficient and scalable across all dataset sizes considered.
For most queries, the execution time increases smoothly with the size of the censored ABox, reflecting the growth in the number of relevant tuples to be processed during query evaluation.
Even for the largest dataset ($N = 50$), query execution times remain within a few seconds, confirming that confidentiality enforcement through the AMPV censor does not introduce prohibitive overhead at query time.
A closer inspection reveals that the execution cost strongly depends on the structure of the query and on the cardinality of its result set.
Queries returning a large number of answers (i.e.\ queries~1,~2,~7,~8,~9 and~10) exhibit higher execution times.
In contrast, queries with limited or empty result sets (i.e.\ queries~4 and ~6) are evaluated very efficiently, often requiring less than one millisecond even for the largest ABoxes.
These observations indicate that the dominant factor influencing query execution time is the size of the intermediate and final results, rather than the presence of censorship.

\begin{figure}[t]
    \centering
    \includegraphics[width=0.45\textwidth]{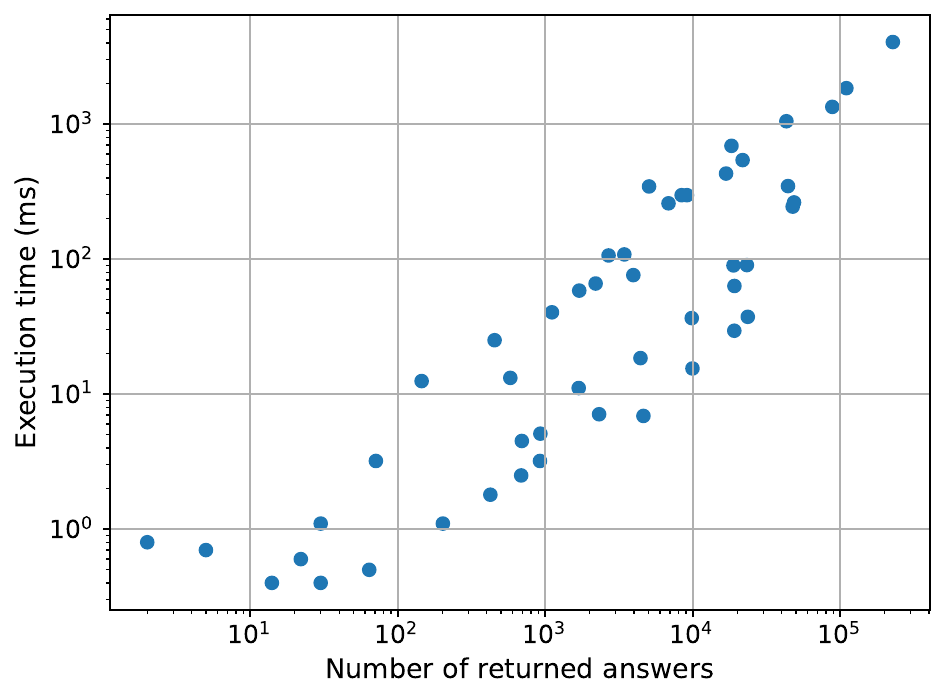}
    \caption{Relationship between the number of returned answers and the average execution time of the benchmark queries. Both axes are shown on a logarithmic scale.}
    \label{fig:qa-answers-time}
\end{figure}

\paragraph{Returned answers.}
We analyze the number of answers returned by each query as the size of the censored ABox increases.
Figure~\ref{fig:qa-answers} reports the results for the benchmark queries with non-empty result set, as a function of the dataset size.
queries~1,~2,~7,~8,~9, and~10 return large result sets whose cardinality grows steadily with $N$.
In particular, query~2 yields the largest number of answers, exceeding $220{,}000$ for $N = 50$.
This behavior reflects the presence of highly populated and strongly interconnected relations in the benchmark data.
Overall, query answering over censored ABoxes exhibits the same scaling trend as standard ontology-based query answering.
queries~3 and~5 produce moderate result sets that increase monotonically with $N$, but remain comparatively small due to their more restrictive patterns.
In contrast, queries~4 and~6 return no answers for any dataset size, as a consequence of the interaction between ontology semantics, data distribution, and the enforced confidentiality policy.

\paragraph{Relation between execution time and returned answers.}
To further investigate the factors underlying the observed execution times, we analyze the relationship between query execution time and the number of returned answers.
Figure~\ref{fig:qa-answers-time} reports the average execution time of the benchmark queries as a function of the size of their result sets.

The scatter plot highlights a clear positive relationship between the two quantities: queries returning a larger number of answers tend to require longer execution times, whereas queries with empty or very small result sets
are evaluated efficiently.
In particular, no queries with a small number of returned answers exhibit high execution times, nor do queries with large result sets achieve very low execution times.


\clearpage
\section{Benchmark queries}
\label{app:queries}
We report in Figure~\ref{fig:benchmark-queries} the complete set of SPARQL queries for OWL~2~QL used in the query answering evaluation, as provided by the OWL2Bench benchmark for OWL ontologies~\cite{owl2bench-queries}.

\begin{figure}[H]
    \centering

    \begin{tabular}{r@{\;\;}l}
        \text{q$_1$:} &
        \texttt{SELECT DISTINCT ?x ?y} \\
        & \texttt{WHERE \{?x :knows ?y\}} \\
        
        \text{q$_2$:} &
        \texttt{SELECT DISTINCT ?x ?y} \\
        & \texttt{WHERE \{?x :hasAlumnus ?y\}} \\
        
        \text{q$_3$:} &
        \texttt{SELECT DISTINCT ?x ?y} \\
        & \texttt{WHERE \{?x :isAffiliatedOrganizationOf ?y\}} \\
        
        \text{q$_4$:} &
        \texttt{SELECT DISTINCT ?x} \\
        & \texttt{WHERE \{?x :hasCollegeDiscipline :NonScience\}} \\
        
        \text{q$_5$:} &
        \texttt{SELECT DISTINCT ?x ?y} \\
        & \texttt{WHERE \{?x :hasCollaborationWith ?y\}} \\
        
        \text{q$_6$:} &
        \texttt{SELECT DISTINCT ?x ?y} \\
        & \texttt{WHERE \{?x :isAdvisedBy ?y\}} \\
        
        \text{q$_7$:} &
        \texttt{SELECT DISTINCT ?x} \\
        & \texttt{WHERE \{?x rdf:type :Faculty\}} \\
        
        \text{q$_8$:} &
        \texttt{SELECT DISTINCT ?x ?y} \\
        & \texttt{WHERE \{?x :hasSameHomeTownWith ?y\}} \\
        
        \text{q$_9$:} &
        \texttt{SELECT DISTINCT ?x ?y} \\
        & \texttt{WHERE \{} \\
        & \quad \texttt{?x rdf:type :Student.} \\
        & \quad \texttt{?x :isStudentOf ?y.} \\
        & \quad \texttt{?y :isPartOf ?z.} \\
        & \quad \texttt{?z :hasCollegeDiscipline :Engineering}\} \\
        
        \text{q$_{10}$:} &
        \texttt{SELECT DISTINCT ?s ?c} \\
        & \texttt{WHERE \{} \\
        & \quad \texttt{?s rdf:type :Student.} \\
        & \quad \texttt{?x rdf:type :Organization.} \\
        & \quad \texttt{?x :hasDean ?z.} \\
        & \quad \texttt{?z :teachesCourse ?c.} \\
        & \quad \texttt{?s :takesCourse ?c}\} \\
    \end{tabular}
    
    \caption{The 10 queries for OWL~2~QL from the OWL2Bench benchmark.}
    \label{fig:benchmark-queries}
\end{figure}

\clearpage
\section{Experimental policy}\label{app:policy}
\noindent
Figure~\ref{fig:epistemic-dependencies-experiments} presents the full set of EDs used in the experimental evaluation.
For the sake of readability, in these EDs we omit the universal quantification of the variables that are not existentially quantified.

\begin{figure}[H]
    \centering
    \[
    \begin{array}{r@{\;}l}
        \tau_1:&
        \K \,(\mathsf{knows}(x,y) \wedge \mathsf{Woman}(x))
        \rightarrow
        \K\,\exists z\,(\mathsf{hasSameHomeTownWith}(x,y)
        \wedge 
        \,\mathsf{hasDoctoralDegreeFrom}(x,z)) \\[1mm]
        
        \tau_2:&
        \K\,(\mathsf{hasSameHomeTownWith}(x,y) \wedge \mathsf{Woman}(x))
        \rightarrow
        \K \,(\mathsf{hasCollaborationWith}(x,y) \wedge \mathsf{knows}(x,y)) \\[1mm]
        
        \tau_3:&
        \K \,\mathsf{Woman}(x) \rightarrow
        \K\,\exists y,z\,(\mathsf{knows}(x,y)
        \wedge \,\mathsf{hasMasterDegreeFrom}(x,z)) \\[1mm]
        
        \tau_4:&
        \K(\mathsf{ScienceStudent}(x) \wedge \mathsf{hasAdvisor}(x,y))
        \rightarrow
        \K\,\exists z\,(\mathsf{Professor}(y))
        \wedge \, \mathsf{hasMasterDegreeFrom}(y,z))\\[1mm]
        
        \tau_5:&
        \K\,\exists y\,\mathsf{isAdvisedBy}(x,y)
        \rightarrow
        \K\,\mathsf{Woman}(x) \\[1mm]
        
        \tau_6:&
        \K\,\exists y\,(\mathsf{hasCollaborationWith}(x,y) \wedge \mathsf{Professor}(x))
        \rightarrow
        \K\,\exists z\,\mathsf{hasAdvisor}(z,x) \\[1mm]
        
        \tau_7:&
        \K\,\exists d\,(\mathsf{isPartOf}(x,z) \wedge \mathsf{hasCollegeDiscipline}(z,d))
        \rightarrow
        \K\,\exists p\,\mathsf{hasProgram}(x,p) \\[1mm]
        
        \tau_8:&
        \K\,(\mathsf{FullProfessor}(y) \wedge \mathsf{isAdvisedBy}(x,y))
        \rightarrow
        \K\,\exists z\,\mathsf{teachesCourse}(y,z) \\[1mm]
        
        \tau_9:&
        \K\,\mathsf{isAffiliatedOrganizationOf}(x,y)
        \rightarrow
        \K\,\mathsf{hasCollegeDiscipline}(x, Science) \\[1mm]
        
        \tau_{10}:&
        \K\,\exists y,u\,(\mathsf{teachesCourse}(x,y)
        \wedge \,
        \mathsf{hasMasterDegreeFrom}(x,u))
        \rightarrow
        \K\,\exists v\,\mathsf{hasDoctoralDegreeFrom}(x,v) \\[1mm]
        
        \tau_{11}:&
        \K\,\exists y\,(\mathsf{Faculty}(x) \wedge \mathsf{hasCollaborationWith}(x,y))
        \rightarrow
        \K\,\mathsf{Woman}(x) \\[1mm]
    \end{array}
    \]
    
    \caption{The 11 EDs used for our experiments.}
    \label{fig:epistemic-dependencies-experiments}
\end{figure}

\end{document}